\newcommand{\cmark}{\ding{51}}%
\newcommand{\xmark}{\ding{55}}%
\newlength\figureheight
\newlength\figurewidth
\newcommand*\bigcdot{\mathpalette\bigcdot@{.5}}
\newcommand*\bigcdot@[2]{\mathbin{\vcenter{\hbox{\scalebox{#2}{$\m@th#1\bullet$}}}}}
\newcommand{\equationFullStop}{.}
\newcommand{\equationComma}{,}
\newcommand{\kron}{\delta}
\newcommand{\activationSymbol}{f}
\newcommand{\activitySymbol}{g}
\newcommand{\weightSymbol}{w}
\newcommand{\biasSymbol}{b}
\newcommand{\widthFunction}{h}
\newcommand{\depthSymbol}{\mu}
\newcommand{\smallestDepth}{1}
\newcommand{\secondSmallestDepth}{2}
\newcommand{\widthSymbol}{i}
\newcommand{\widthSymbolB}{j}
\newcommand{\widthSymbolC}{k}
\newcommand{\widthSymbolPair}{\widthSymbol,\widthSymbolB}
\newcommand{\inputSymbol}{x}
\newcommand{\inputSymbolB}{x'}
\newcommand{\indexedObject}[4]{{#1}^{(#2)}_{#3}{#4}}
\newcommand{\indexedActivation}[3]{\indexedObject{\activationSymbol}{#1}{#2}{(#3)}}
\newcommand{\indexedActivity}[3]{\indexedObject{\activitySymbol}{#1}{#2}{(#3)}}
\newcommand{\indexedWeight}[2]{\indexedObject{\weightSymbol}{#1}{#2}{}} 
\newcommand{\indexedStandardNormal}[2]{\indexedObject{\standardNormals}{#1}{#2}{}} 
\newcommand{\indexedBias}[2]{\indexedObject{\biasSymbol}{#1}{#2}{}} 
\newcommand{\activation}{\indexedActivation{\depthSymbol}{\widthSymbol}{\inputSymbol}}
\newcommand{\activity}{\indexedActivity{\depthSymbol}{\widthSymbol}{\inputSymbol}}
\newcommand{\weight}{\indexedWeight{\depthSymbol}{\widthSymbolPair}}
\newcommand{\bias}{\indexedBias{\depthSymbol}{\widthSymbol}}
\newcommand{\vectorize}[1]{#1}
\newcommand{\vectorActivation}{\indexedObject{\vectorize{\activationSymbol}}{2}{}{(\inputSymbol)}}
\newcommand{\vectorActivationB}{\indexedObject{\vectorize{\activationSymbol}}{2}{}{(\inputSymbolB)}}
\newcommand{\dotIndex}{\bigcdot}
\newcommand{\biasVectorZero}{\indexedObject{\vectorize{\biasSymbol}}{1}{}{}}
\newcommand{\biasVectorOne}{\indexedObject{\vectorize{\biasSymbol}}{2}{}{}}
\newcommand{\bigVector}{F}
\newcommand{\bigVectorStandard}{\bigVector^{(\secondSmallestDepth)}}
\newcommand{\nonlinearity}{\phi}
\newcommand{\depth}{D}
\newcommand{\width}{H}
\newcommand{\inputDimension}{M}
\newcommand{\outputDimension}{L}
\newcommand{\normal}{\mathcal{N}}
\newcommand{\varianceSymbol}{C}
\newcommand{\weightVariance}{\varianceSymbol_{\weightSymbol}}
\newcommand{\weightVarianceScaled}{\hat{\varianceSymbol}_{\weightSymbol}}
\newcommand{\weightVarianceDepth}{\weightVariance^{(\depthSymbol)}}
\newcommand{\weightVarianceDepthScaled}{\weightVarianceScaled^{(\depthSymbol)}}
\newcommand{\biasVariance}{\varianceSymbol_{\biasSymbol}}
\newcommand{\biasVarianceDepth}{\biasVariance^{(\depthSymbol)}}
\newcommand{\expectation}[2]{\mathbb{E}_{#2}{\left\lbrack #1\right\rbrack}}
\newcommand{\integrationVariable}{\epsilon}
\newcommand{\integrationVariableB}{\gamma}
\newcommand{\genericCov}{K}
\newcommand{\sequenceVariable}{n}
\newcommand{\featureMapping}{\Phi}
\newcommand{\heaviside}{\Theta}
\newcommand{\rampOrder}{r}
\newcommand{\realLine}{\mathbb{R}}
\newcommand{\MMDnumDimensions}{4}
\newcommand{\MMDnumDataPoints}{10}
\newcommand{\MMDnumNetworks}{2000}
\newcommand{\MMDnumRepeats}{20}
\newcommand{\MMDlengthScale}{1/2}
\newcommand{\characteristicLengthScale}{l}
\newcommand{\assumedWeightVariance}{0.8}
\newcommand{\assumedBiasVariance}{0.2}
\newcommand{\MMD}{\mathcal{MMD}}
\newcommand{\distributionA}{\mathcal{P}}
\newcommand{\distributionB}{\mathcal{Q}}
\newcommand{\testFunction}{h}
\newcommand{\hilbertSpace}{\mathcal{H}}
\newcommand{\hilbertSpaceNorm}[1]{|| #1 ||_{\hilbertSpace}}
\newcommand{\predictiveNumPoints}{10}
\newcommand{\predictiveDimension}{4}
\newcommand{\naturalNumbers}{\mathbb{N}}
\newcommand{\couplingIndex}{n}
\newcommand{\datapoint}{x}
\newcommand{\datapointi}{\datapoint[i]}
\newcommand{\indexSet}{\mathcal{X}}
\newcommand{\genericReal}{u}
\newcommand{\envelopegradient}{m}
\newcommand{\envelopeconstant}{c}
\newcommand{\sequenceSpace}{\realLine^{\countableIndexSet}}
\newcommand{\sequenceMetric}{\rho}
\newcommand{\sequenceIndex}{j}
\newcommand{\genericRV}{X}
\newcommand{\genericRVB}{Y}
\newcommand{\rowIndex}{n}
\newcommand{\rowVariance}{\sigma^2_{\rowIndex}}
\newcommand{\rowStd}{\sigma_{\rowIndex}}
\newcommand{\limitStd}{\sigma_{*}}
\newcommand{\Var}{\text{Var}}
\newcommand{\Cov}{\text{Cov}}
\newcommand{\Prob}{\text{Pr}}
\newcommand{\limitVariance}{\sigma^2_{*}}
\newcommand{\colIndex}{i}
\newcommand{\generalSum}{S}
\newcommand{\littleO}{\mathrm{o}}
\newcommand{\standardNormals}{\epsilon}
\newcommand{\summand}{\gamma}
\newcommand{\projection}{\mathcal{T}}
\newcommand{\projectionIndeces}{\mathcal{L}}
\newcommand{\projectionCoefficients}{\alpha}
\newcommand{\atWidth}{[\sequenceVariable]}
\newcommand{\atLimit}{[*]}
\newcommand{\monotoneCLTfunc}{h}
\newcommand{\projectionVar}{\sigma^{2}(\depthSymbol,\projectionIndeces,\projectionCoefficients)\atWidth}
\newcommand{\limitVar}{\sigma^{2}(\depthSymbol,\projectionIndeces,\projectionCoefficients)\atLimit}
\newcommand{\projectionIndex}{\alpha}
\newcommand{\countableIndexSet}{Q}
\newcommand{\stochasticProcess}{U}
\newcommand{\limitStochasticProcess}{U_{*}}
\begin{document}

\title{\vskip -0.5in Gaussian Process Behaviour in Wide Deep Neural Networks \vskip 0.2in}
\author{Alexander G. de G. Matthews \email am554@cam.ac.uk \\ \addr Department of Engineering \\ Trumpington Street \\ University of Cambridge, UK 
       \AND Mark Rowland \email mr504@cam.ac.uk \\ \addr Department of Pure Mathematics and Mathematical Statistics\\ Wilberforce Road\\ University of Cambridge, UK \AND Jiri Hron  \email jh2084@cam.ac.uk \\ \addr Department of Engineering\\ Trumpington Street\\ University of Cambridge, UK \AND Richard E. Turner \email ret26@cam.ac.uk \\ \addr Department of Engineering\\ Trumpington Street\\ University of Cambridge, UK  \AND Zoubin Ghahramani \email zoubin@eng.cam.ac.uk \\ \addr Department of Engineering\\ Trumpington Street\\ University of Cambridge, UK  \\ Uber AI Labs}

\editor{}

\maketitle

\vskip -0.2in
\begin{abstract}
Whilst deep neural networks have shown great empirical success, there is still much work to be done to understand their theoretical properties. In this paper, we study the relationship between random, wide, fully connected, feedforward networks with more than one hidden layer and Gaussian processes with a recursive kernel definition. We show that, under broad conditions, as we make the architecture increasingly wide, the implied random function converges in distribution to a Gaussian process, formalising and extending existing results by \citet{Neal1996} to deep networks.  To evaluate convergence rates empirically, we use maximum mean discrepancy. We then compare finite Bayesian deep networks from the literature to Gaussian processes in terms of the key predictive quantities of interest, finding that in some cases the agreement can be very close. We discuss the desirability of Gaussian process behaviour and review non-Gaussian alternative models from the literature.\footnote{Code for the experiments in the paper can be found at \url{https://github.com/widedeepnetworks/widedeepnetworks} }
\end{abstract}

\section{Introduction}

This work substantially extends the work of \citet{DeepWideICLR} published at ICLR 2018. Deep feedforward neural networks have emerged as an essential component of modern machine learning. As such there has been significant research effort in trying to understand the theoretical properties of such models. One important branch of this research is the study of random networks. By assuming a probability distribution on the network parameters, a distribution is induced on the input to output function that the networks encode. This has proved important in the study of initialisation and learning dynamics \citep{Schoenholz2017} and expressivity \citep{Poole2016}. It is, of course, essential in the study of Bayesian priors on networks \citep{Neal1996}. The Bayesian approach makes little sense if prior assumptions are not understood, and distributional knowledge can be essential in finding good posterior approximations. 

Since we typically want our networks to have high modelling capacity, it is natural to consider limit distributions of networks as they become large. Whilst distributions on deep networks are generally challenging to work with exactly, the limiting behaviour can lead to more insight. Further, as we shall see, finite networks used in the literature may be very close to this behaviour.

The seminal work in this area is that of \citet{Neal1996}, which showed that under certain conditions random neural networks with one hidden layer converge to a Gaussian process. The question of the type of convergence is non-trivial and part of our discussion. Historically this result was significant because it provided a connection between flexible Bayesian neural networks and Gaussian processes \citep{Williams1998,Rasmussen2006}
 
\subsection{Our contributions}

We extend the theoretical understanding of random fully connected networks and their relationship to Gaussian processes. In particular, we prove a rigorous result (Theorem \ref{thm:main}) on the convergence of certain sequences of finite fully connected networks with more than one hidden layer to Gaussian processes. The number of hidden layers can be any fixed number. The sizes of the hidden layers must strictly increase for each network in the sequence although the different hidden layers are allowed to grow at different rates. The weights are assumed to be independent normally distributed with their variances sensibly scaled as the network grows following the prescription of \citet{Neal1996}. The nonlinearities are assumed to obey the `linear envelope' Condition \ref{definition:envelope} which all commonly used nonlinearities do in fact obey. Since these are the only assumptions on the sequence of networks it will be seen that the result is a meaningfully general one. 

Further, we empirically study the distance between finite networks and their Gaussian process analogues by using maximum mean discrepancy \citep[MMD,][]{Gretton2012} as a distance measure. We then systematically compare exact Gaussian process inference with `gold standard' MCMC inference for finite Bayesian neural networks. Of the six datasets we consider, five show close agreement between the two models. Owing to the computational burden of the MCMC algorithms, the problems we can study by this method are constrained in terms of their network size, the data dimensionality and the number of data points. Nevertheless our results suggest that some experiments in the literature studied under the banner of Bayesian deep learning would have given very similar results to a Gaussian process with the appropriate kernel. A practical recommendation following from our study is that the Bayesian deep learning community should routinely compare their results to Gaussian processes with the kernels studied in this paper.

Our work is of relevance to the theoretical understanding of neural network initialisation and dynamics. It is also important in the area of Bayesian deep networks because it demonstrates that Gaussian process behaviour can arise in more situations of practical interest than previously thought. If this behaviour is desired then Gaussian process inference (exact and approximate) should also be considered in addition to standard techniques for inference in Bayesian deep learning. In some scenarios, the behaviour may not be desired because it implies a lack of a hierarchical representation and a Gaussian statistical assumption. We therefore highlight promising ideas from the literature to prevent such behaviour.

\subsection{Related work}

The case of random neural networks with one hidden layer was studied by \citet{Neal1996}. \citet{Cho2009} provided analytic expressions for single layer kernels including those corresponding to a rectified linear unit (ReLU). They also studied recursive kernels designed to `mimic computation in large, multilayer neural nets'. As discussed in Section \ref{section:specificKernels} they arrived at the correct kernel recursion through an erroneous argument. Such recursive kernels were later used with empirical success in the Gaussian process literature \citep{Krauth2017}, with a similar justification to that of Cho and Saul. The first case we are aware of using a Gaussian process construction with more than one hidden layer is the work of \citet{Hazan2015}. Their contribution is similar in content to Lemma \ref{lemma:recursion} discussed here, and the work has had increasing interest from the kernel community \citep{Mitrovic2017}. Recent work from \citet{Daniely2016} uses the concept of `computational skeletons' to give concentration bounds on the difference in the second order moments of large finite networks and their kernel analogue, with strong assumptions on the inputs. The Gaussian process view given here, without strong input assumptions, is related but concerns not just the first two moments of a random network but the full distribution. As such the theorems we obtain are distinct. A less obvious connection is to the recent series of papers studying deep networks using a mean field approximation \citep{Poole2016,Schoenholz2017}. In those papers a second order approximation gives equivalent behaviour to the kernel recursion. By contrast, in this paper the claim is that the behaviour emerges as a consequence of increasing width and is therefore something that needs to be proved. Another surprising connection is to the analysis of self-normalizing neural networks \citep{Klambauer2017}. In their analysis the authors assume that the hidden layers are wide in order to invoke the central limit theorem. The premise of the central limit theorem will only hold approximately in layers after the first one and this theoretical barrier is something we discuss here. An area that is less related than might be expected is that of `Deep Gaussian Processes' (DGPs) \citep{Damianou2013}. As will be discussed in Section \ref{section:avoidingGPs}, narrow intermediate representations mean that the marginal behaviour of DGPs is not close to that of a Gaussian process. \citet{Duvenaud2014} offer an analysis that largely applies to DGPs though they also study the Cho and Saul recursion with the motivating argument from the original paper. 

Simultaneously with the submission of the previous version of our paper to ICLR 2018, at the same conference venue, \citet{Lee2018} released a paper that has overlap with our own. There are however some important differences. Empirically, whilst we compare finite Bayesian neural networks, using `gold standard', asymptotically exact, sampling and MMD, to their Gaussian process analogues, Lee et al.\ compare finite neural networks trained with stochastic gradient descent (SGD) to Gaussian processes instead. The latter comparison to SGD is suggestive that this optimization method mimics Bayesian inference -- an idea that has been receiving increasing attention \citep{Welling2011,Mandt2017,Smith2018}. This is of particular importance because typically SGD is still more scalable than traditional Markov Chain based methods, enabling Lee et al.\ to consider some relatively large datasets. The empirical comparison of Lee et al.\ is therefore particularly intriguing and we hope it will lead to follow up work. Therefore, whilst there is overlap, the two papers also have independent value going forward. Theoretically, there are also important differences. Lee et al.\ give an argument for the Gaussian process limit, although importantly this depends on sequentially taking the number of units in each successive layer to be infinite. The proof we give here concerns the case where the layers grow simultaneously, which is arguably more relevant in practice. Note that we show a precise type of convergence -- namely `weak convergence' also called `convergence in distribution'. Owing to the challenging nature of obtaining a~full rigorous proof, the earlier version of this paper \citep{DeepWideICLR} did not achieve full generality either. We needed to assume specific growth rates for the sizes of the hidden layers, 
and the ReLU nonlinearity. What follows here removes these assumptions and thus resolves the conjecture made in the earlier version of this work in the affirmative. The new proof method, placing a particular emphasis on exchangeability, may well be of use more generally.

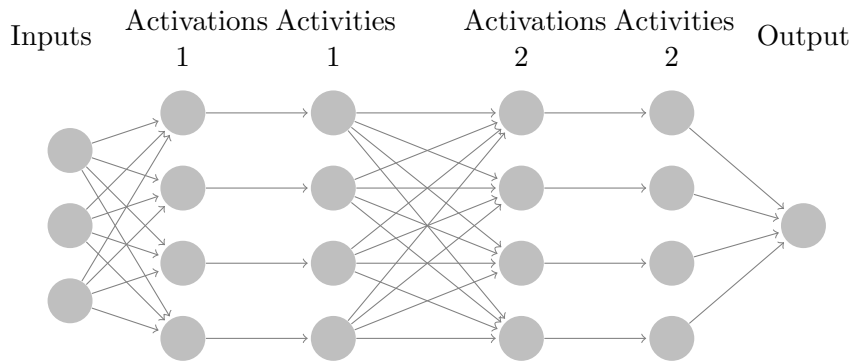
\begin{figure*}[h]
\setlength\figureheight{0.175\textheight}
\setlength\figurewidth{0.8\textwidth}
\centering\begin{tikzpicture}[shorten >=1pt,->,draw=black!50, node distance=1.75cm]
    \tikzstyle{every pin edge}=[<-,shorten <=1pt]
    \tikzstyle{neuron}=[circle,fill=black!25,minimum size=17pt,inner sep=0pt]
    \tikzstyle{input neuron}=[neuron, fill=green!50];
    \tikzstyle{output neuron}=[neuron, fill=red!50];
    \tikzstyle{hidden neuron}=[neuron, fill=blue!50];
    \tikzstyle{annot} = [text width=4em, text centered]

    \foreach \name / \y in {1,...,3}
        \node[neuron] (I-\name) at (0,-\y cm) {};

    \foreach \name / \y in {1,...,4}
        \path[yshift=0.5cm]
            node[neuron] (H1-\name) at (1.5,-\y cm) {};
            
    \foreach \name / \y in {1,...,4}
	    \path[yshift=0.5cm]
		    node[neuron] (A1-\name) at (3.5,-\y cm) {};
            
    \foreach \name / \y in {1,...,4}
	    \path[yshift=0.5cm]
		    node[neuron] (H2-\name) at (6.0,-\y cm) {};
		    
    \foreach \name / \y in {1,...,4}
	    \path[yshift=0.5cm]
		    node[neuron] (A2-\name) at (8.0,-\y cm) {};

    \foreach \name / \y in {1,...,1}
	    \path[yshift=-1cm]
		    node[neuron] (O-\name) at (9.75,-\y cm) {};
    

    \foreach \source in {1,...,3}
        \foreach \dest in {1,...,4}
            \path (I-\source) edge (H1-\dest);
            
    \foreach \source in {1,...,4} 
	    \path (H1-\source) edge (A1-\source);
            
    \foreach \source in {1,...,4}
	    \foreach \dest in {1,...,4}
		    \path (A1-\source) edge (H2-\dest);
		    
    \foreach \source in {1,...,4} 
	    \path (H2-\source) edge (A2-\source);

    \foreach \source in {1,...,4}
	    \foreach \dest in {1,...,1}
	        \path (A2-\source) edge (O-\dest);

    \node[annot,above of=H1-1, node distance=1cm] (hl1) {Activations 1};
    \node[annot,above of=A1-1, node distance=1cm] (al1) {Activities 1};
    \node[annot,above of=H2-1, node distance=1cm] (hl2) {Activations 2};
    \node[annot,above of=A2-1, node distance=1cm] (al2) {Activities 2};
    \node[annot,left of=hl1] {Inputs};
    \node[annot,right of=al2] {Output};
\end{tikzpicture}
\vspace{3mm}
\caption{\label{fig:neural_net_diagram}
In this paper we consider fully connected feedforward networks with more than one hidden layer. We call the pre-nonlinearity an \emph{activation} and post-nonlinearity an \emph{activity}. As the network becomes increasingly wide the distribution of the marginal distributions of the activations at each layer and of the output will become close to a Gaussian process in a sense described in the text.
}
\end{figure*}

\section{The deep wide limit}

\subsection{The result for one hidden layer}

We consider a fully connected network as shown in Figure \ref{fig:neural_net_diagram}. The inputs and outputs will be real-valued vectors of dimension $\inputDimension$ and $\outputDimension$ respectively. The network is fully connected.  The initial step and recursion are standard. The initial step is: 

\begin{align}\label{eq:initial}
\indexedActivation{\smallestDepth}{\widthSymbol}{\inputSymbol} = \sum_{\widthSymbolB=1}^{\inputDimension} \indexedWeight{\smallestDepth}{\widthSymbolPair} \inputSymbol_{\widthSymbolB} + \indexedBias{\smallestDepth}{\widthSymbol} \, .
\end{align}

\noindent We make the functional dependence on $\inputSymbol$ explicit in our notation as it will help clarify what follows. For a network with $D$ hidden layers the recursion is, for each $\mu=1,\ldots,D$,

\begin{align}\label{eq:recursion}
\activity &= \nonlinearity( \indexedActivation{\depthSymbol}{\widthSymbol}{\inputSymbol} ) \, ,  \hspace{5 pt} \\
\indexedActivation{\depthSymbol+1}{\widthSymbol}{\inputSymbol}
 &= \sum_{\widthSymbolB=1}^{\width_\depthSymbol} \indexedWeight{\depthSymbol+1}{\widthSymbol, \widthSymbolB} \indexedActivity{\depthSymbol}{\widthSymbolB}{\inputSymbol} + \indexedBias{\depthSymbol+1}{\widthSymbol}  \, , \label{eq:activationDef}
\end{align}
so that $f^{(D+1)}(x)$ is the output of the network given input $x$. $\nonlinearity$ denotes the nonlinearity. In all cases the equations hold for each value of $\widthSymbol$; $\widthSymbol$ ranges between $1$ and $\width_{\depthSymbol}$ in Equation \eqref{eq:recursion}, and between $1$ and $\width_{\depthSymbol+1}$ in Equation \eqref{eq:activationDef} except in the case of the final activation where the top value is $\outputDimension$. The network could of course be modified to be probability simplex-valued by adding a softmax at the end.

A distribution on the parameters of the network will be assumed. Conditional on the inputs, this induces a distribution on the activations and activities. In particular we will assume independent normal distributions on the weights and biases

\begin{align}\label{eq:dist}
\weight &\sim \normal(0, \weightVarianceDepth)  \hspace{5 pt} \text{i.i.d.} \\
\bias &\sim \normal(0, \biasVarianceDepth ) \hspace{5 pt} \text{i.i.d.} \equationFullStop
\end{align}

We will be interested in the behaviour of this network as the widths $\width_\depthSymbol$ becomes large. The weight variances for $\depthSymbol \geq \secondSmallestDepth  $ will be scaled according to the width of the network to avoid a divergence in the variance of the activities in this limit. As will become apparent, the appropriate scaling is 

\begin{align}
\weightVarianceDepth  = \frac{\weightVarianceDepthScaled}{\width_{\depthSymbol-1}} \, , \hspace{12 pt} \depthSymbol \geq \secondSmallestDepth \, .
\end{align}

The assumption is that $\weightVarianceDepthScaled$ will remain fixed as we take the limit. \citet{Neal1996} analysed this problem for $\depth=1$, showing that as $\width_1 \to \infty$, the values of $\indexedActivation{\secondSmallestDepth}{\widthSymbol}{\inputSymbol}$, the output of the network in this case, converge to a certain multi-output Gaussian process if the activities have bounded variance.

Since our approach relies on the multivariate central limit theorem, we will arrange the relevant terms into (column) vectors to make the linear algebra clearer. Consider any two inputs $\inputSymbol$ and $\inputSymbolB$ and all output functions ranging over the index $\widthSymbol$. We define the vector $\vectorActivation$ of length $\outputDimension$ whose elements are the numbers $\indexedActivation{2}{\widthSymbol}{\inputSymbol}$. We define $\vectorActivationB$ similarly. For the weight matrices defined by $\weight$ for fixed $\depthSymbol$ we use a `placeholder' index $\dotIndex$ to return column and row vectors from the weight matrices. In particular $\indexedWeight{\smallestDepth}{\widthSymbolB,\dotIndex}$ denotes row $\widthSymbolB$ of the weight matrix at depth $\smallestDepth$. Similarly, $\indexedWeight{\secondSmallestDepth}{\dotIndex,\widthSymbolB}$ denotes column $\widthSymbolB$ at depth $\secondSmallestDepth$. The biases are given as column vectors $\biasVectorZero$ and $\biasVectorOne$. Finally we concatenate the two vectors $\vectorActivation$ and $\vectorActivationB$ into a single column vector $\bigVectorStandard$ of size $2 \outputDimension$. The vector in question takes the form

\begin{equation}
\bigVectorStandard = \begin{pmatrix} \vectorActivation \\ \vectorActivationB \end{pmatrix} = \begin{pmatrix} \biasVectorOne \\ \biasVectorOne \end{pmatrix} + \sum_{\widthSymbolB=1}^{\width_1} \begin{pmatrix} \indexedWeight{\secondSmallestDepth}{\dotIndex,\widthSymbolB} \nonlinearity( \indexedWeight{\smallestDepth}{\widthSymbolB,\dotIndex} \inputSymbol + \indexedBias{\smallestDepth}{\widthSymbolB}) \\ \indexedWeight{\secondSmallestDepth}{\dotIndex,\widthSymbolB} \nonlinearity(\indexedWeight{\smallestDepth}{\widthSymbolB,\dotIndex} \inputSymbolB + \indexedBias{\smallestDepth}{\widthSymbolB}) \end{pmatrix} \, .
\end{equation}

The benefit of writing the relation in this form is that the applicability of the multivariate central limit theorem is immediately apparent. Each of the vector terms on this right hand side is independent and identically distributed conditional on the inputs $\inputSymbol$ and $\inputSymbolB$. By assumption, the activities have bounded variance. The scaling we have chosen on the variances is precisely that required to ensure the applicability of the theorem, and is also in line with most commonly used initialisation strategies in practice. Therefore as $\width$ becomes large $\bigVectorStandard$ converges in distribution to a multivariate normal distribution. The limiting normal distribution is fully specified by its first two moments. Defining $\integrationVariableB \sim \normal(0, \biasVariance^{(\smallestDepth)}), \integrationVariable \sim \normal(0, \weightVariance^{(\smallestDepth)} I_\inputDimension)$, the moments in question are:

\begin{align}\label{eq:moments}
\expectation{\indexedActivation{\secondSmallestDepth}{\widthSymbol}{\inputSymbol}}{} &= 0 \\
\expectation{\indexedActivation{\secondSmallestDepth}{\widthSymbol}{\inputSymbol} \indexedActivation{\secondSmallestDepth}{\widthSymbolB}{\inputSymbolB}}{} &= \kron_{\widthSymbolPair} \left[ \weightVarianceScaled^{(\secondSmallestDepth)} \expectation{\nonlinearity( \integrationVariable^{T} \inputSymbol + \integrationVariableB ) \nonlinearity( \integrationVariable^{T} \inputSymbolB + \integrationVariableB )}{\integrationVariable, \integrationVariableB} + \biasVariance^{(\secondSmallestDepth)} \right] \, . \label{eq:secondmoments}
\end{align}

Note that we could have taken a larger set of input points to give a larger vector $\bigVector$ and again we would conclude that this vector converged in distribution to a multivariate normal distribution. More formally, we can consider the set of possible inputs as an \emph{index set}. What we have shown is that for any finite index set the distribution over functions converges to a multivariate normal. If we consider these limiting multivariate normals they obey a~consistency property under marginalization. This means that the limiting distributions can be used to define a Gaussian process by the Kolmogorov extension theorem. 

\subsection{Definition of weak convergence of random functions}\label{section:measureConverge}

There are some important technical issues here that are not discussed in the original work of \citet{Neal1996}. In some sense, the convergence of the finite-dimensional distributions is enough if we want to answer questions about finite events, just as many of the uses of Gaussian processes within machine learning \citep{Rasmussen2006} can be expressed in terms of finite-dimensional multivariate normal distributions. The reader who is content with restricting their attention to such a case may safely omit the rest of this subsection.

Given a consistent set of finite-dimensional marginals, the Kolmogorov extension theorem ensures the existence of an underlying infinite-dimensional object -- a distribution over functions. If we want to make precise mathematical statements about convergence to this object some care is needed. 

Firstly, the Kolmogorov theorem ensures the existence of a distribution which is uniquely defined on a specific $\sigma$-algebra, namely the \emph{product} $\sigma$-algebra. The $\sigma$-algebra defines which events we can assign probabilities to. If we try to consider events outside the $\sigma$-algebra then the rules governing probability distributions (c.f.\ measures) can break down. Secondly, in abstract spaces, the definition of convergence in distribution is necessarily with respect to some \emph{topology}. In everything that follows we will assume that this topology is generated by a metric. We also assume that the index set of the stochastic process is countably infinite. We use the metric $\sequenceMetric$ :

\begin{align}\label{eq:metric}
\sequenceMetric(v, v^\prime) = \sum_{i=1}^\infty 2^{-i} \min(1,  |v_i - v_i^\prime| ) \qquad \forall v, v^\prime \in \mathbb{R}^\mathbb{N} \, ,
\end{align}
	
This metric metrises the product topology of the product of countably many copies of $\mathbb{R}$ with the usual Euclidean topology \citep{Dashti2013}. For such a countable index set, it is sufficient \cite[p.~19]{billingsey68} to prove weak convergence of the finite-dimensional marginals of the process to the corresponding multivariate Gaussian random variables. This is not generally the case if we remove the assumption of a countable index set \cite[p.~19]{billingsey68}.

The restriction to countably infinite index sets means that phenomena that depend on uncountably many indices such as continuity, boundedness and differentiability are not covered by our theory. There is literature extending measures on the product $\sigma$-algebra of an uncountable index set using, for instance, the Kolmogorov continuity theorem. One could then consider proving convergence with respect to the topology in question. We do not do this in this paper but it could certainly be of interest.

\subsection{The recursion lemma and the linear envelope property}

In the case of a multivariate normal distribution a set of variables having a covariance of zero implies that the variables are mutually independent. Looking at Equation \eqref{eq:secondmoments}, we see that the limiting distribution has independence between different components $\widthSymbolPair$ of the output. Combining this with the recursion \eqref{eq:recursion}, we might intuitively suggest that the next layer also converges to a multivariate normal distribution in the limit of large $\width_\depthSymbol$. 

This will indeed be the case assuming that the nonlinearity does not induce heavy tail behaviour.  We give an assumption on the nonlinearity that will be used throughout the~sequel:

\begin{definition}[Linear envelope property for nonlinearities]\label{definition:envelope}
A nonlinearity $\nonlinearity: \realLine \mapsto \realLine$ is said to obey the the linear envelope property if there exist $\envelopeconstant,\envelopegradient \geq 0$ such that the following inequality holds

\begin{align}
|\nonlinearity(\genericReal)| \leq \envelopeconstant + \envelopegradient | \genericReal | \hspace{6 pt} \forall \genericReal \in \realLine \, .
\end{align}

\end{definition}

The majority of commonly used nonlinearities, including the sigmoid, ReLU, ELU, and SeLU nonlinearities have the linear envelope property. Intuitively the linear bounds on the nonlinearity stop it from inducing heavy tail behaviour when a random variable is passed through it. An exponential nonlinearity would not have this property. We could indeed craft a nonlinearity that is designed to violate the linear envelope property and give heavy tail behaviour. Consider, for example, the composition of the Gaussian cumulative density function (CDF) followed by the Cauchy inverse CDF. Passing a standard normal variate through such a function would, by construction, give a Cauchy distributed variable, which has an undefined mean. Whilst it may not be the most general assumption possible for what will follow, the linear envelope assumption rules in most practically used nonlinearities and, as we shall see rules out all nonlinearities for which our theory does not hold. 

Next we state the following lemma, which we attribute to \citet{Hazan2015}:

\begin{lemma}[Normal recursion]\label{lemma:recursion}
If the activations of a previous layer are normally distributed with moments:

\begin{align}
\expectation{\indexedActivation{\depthSymbol-1}{\widthSymbol}{\inputSymbol}}{} &= 0 \\
\expectation{\indexedActivation{\depthSymbol-1}{\widthSymbol}{\inputSymbol} \indexedActivation{\depthSymbol-1}{\widthSymbolB}{\inputSymbolB}}{} &= \kron_{\widthSymbolPair} \genericCov(\inputSymbol,\inputSymbolB) \equationComma
\end{align}

Then under the recursion \eqref{eq:recursion} and as $\width \to \infty $ the activations of the next layer converge in distribution to a normal distribution with moments

\begin{align}
\expectation{\activation}{} &= 0 \\
\expectation{\activation \indexedActivation{\depthSymbol}{\widthSymbolB}{\inputSymbolB}}{} &= \kron_{\widthSymbolPair} \left[\weightVarianceDepthScaled \expectation{\nonlinearity(\integrationVariable_{1})\nonlinearity(\integrationVariable_{2})}{(\integrationVariable_1,\integrationVariable_2) \sim \normal(0,  \genericCov)} + \biasVarianceDepth\right] \, , \label{eq:covExpression}
\end{align}

\noindent where $\genericCov$ is a $2 \times 2$ matrix containing the input covariances. 

\end{lemma}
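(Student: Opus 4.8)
The plan is to reproduce, for a generic layer, the multivariate central limit theorem argument already spelled out above for the single-hidden-layer ($\depth=1$) case, the one genuinely new ingredient being supplied by the Gaussian hypothesis. Fix any finite collection of inputs and output indices; for clarity take two inputs $\inputSymbol,\inputSymbolB$ and two output indices $\widthSymbol,\widthSymbolB$, the general case being identical. Using the recursion, I would write the layer-$\depthSymbol$ activation as a sum over the previous-layer index $\widthSymbolC$,
\begin{equation}
\indexedActivation{\depthSymbol}{\widthSymbol}{\inputSymbol} = \indexedBias{\depthSymbol}{\widthSymbol} + \sum_{\widthSymbolC=1}^{\width_{\depthSymbol-1}} \indexedWeight{\depthSymbol}{\widthSymbol,\widthSymbolC}\, \nonlinearity\!\left(\indexedActivation{\depthSymbol-1}{\widthSymbolC}{\inputSymbol}\right) \, ,
\end{equation}
and stack these values over the chosen inputs and output indices into a single vector $\bigVector^{(\depthSymbol)}$, the layer-$\depthSymbol$ analogue of $\bigVectorStandard$. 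The crucial observation is that the vector-valued summands indexed by $\widthSymbolC$ are independent and identically distributed: by hypothesis the previous layer is jointly normal with covariance $\kron_{\widthSymbolPair}\genericCov(\inputSymbol,\inputSymbolB)$, which is diagonal in the width index, so activations with distinct width indices (here the summation index $\widthSymbolC$) are uncorrelated and therefore, being jointly Gaussian, mutually independent; the weights $\indexedWeight{\depthSymbol}{\widthSymbol,\widthSymbolC}$ are i.i.d.\ and independent of the previous layer. This independence across $\widthSymbolC$ is precisely the place where normality of the previous layer is used, and it is what was automatic in the $\depth=1$ case.

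Next I would absorb the variance scaling $\weightVarianceDepth = \weightVarianceDepthScaled/\width_{\depthSymbol-1}$ by writing each weight as $(\weightVarianceDepthScaled/\width_{\depthSymbol-1})^{1/2}$ times a standard normal variable, which exposes the classical $\width_{\depthSymbol-1}^{-1/2}$ normalisation of a sum of i.i.d.\ vectors whose common law does not depend on the width. Each such summand has mean zero, since the weights have mean zero and are independent of the activities, giving $\expectation{\indexedWeight{\depthSymbol}{\widthSymbol,\widthSymbolC}\nonlinearity(\indexedActivation{\depthSymbol-1}{\widthSymbolC}{\inputSymbol})}{}=0$. Its covariance between the entry for output $\widthSymbol$ at $\inputSymbol$ and the entry for output $\widthSymbolB$ at $\inputSymbolB$ factorises, again using independence of the weights from the activities, into $\kron_{\widthSymbolPair}$ times $\expectation{\nonlinearity(\integrationVariable_1)\nonlinearity(\integrationVariable_2)}{(\integrationVariable_1,\integrationVariable_2)\sim\normal(0,\genericCov)}$, where we use that $(\indexedActivation{\depthSymbol-1}{\widthSymbolC}{\inputSymbol},\indexedActivation{\depthSymbol-1}{\widthSymbolC}{\inputSymbolB})\sim\normal(0,\genericCov)$.

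The main obstacle, and the reason the linear envelope Condition~\ref{definition:envelope} is imposed, is verifying the finite-variance premise of the central limit theorem: nothing so far prevents the nonlinearity from turning the light-tailed Gaussian activations into a heavy-tailed quantity of infinite variance. The linear envelope removes this danger, since from $|\nonlinearity(\genericReal)|\leq \envelopeconstant+\envelopegradient|\genericReal|$ we obtain $|\nonlinearity(\integrationVariable_1)\nonlinearity(\integrationVariable_2)|\leq(\envelopeconstant+\envelopegradient|\integrationVariable_1|)(\envelopeconstant+\envelopegradient|\integrationVariable_2|)$, a quadratic in jointly Gaussian variables with finite expectation, so the summand covariance is finite. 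With mean-zero, finite-covariance, i.i.d.\ summands established, the multivariate central limit theorem yields convergence in distribution of the centred, rescaled sum to a multivariate normal. Finally I would add back the independent Gaussian bias $\indexedBias{\depthSymbol}{\widthSymbol}\sim\normal(0,\biasVarianceDepth)$, which is shared across inputs and independent of the sum; by independence the limit stays normal, with the bias contributing $\biasVarianceDepth$ to each diagonal covariance block, so that reading off the first two moments gives \eqref{eq:covExpression}. Since the argument applies verbatim to an arbitrary finite collection of input--index pairs, the finite-dimensional marginals converge jointly to the stated Gaussian law, which is exactly the form of convergence needed for the countable index set of Section~\ref{section:measureConverge}.
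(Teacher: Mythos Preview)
Your argument is correct and is precisely the natural one: the multivariate CLT applied to the i.i.d.\ summands indexed by the previous-layer width, with the Gaussian hypothesis supplying independence across that index (uncorrelated jointly Gaussian variables being independent), and the linear envelope guaranteeing finite second moments. This is exactly the mechanism the paper spells out for the single-hidden-layer case in the discussion surrounding $\bigVectorStandard$ and Equations~\eqref{eq:moments}--\eqref{eq:secondmoments}, lifted to a generic layer.

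Note, however, that the paper does not itself give a proof of Lemma~\ref{lemma:recursion}: it is stated and attributed to \citet{Hazan2015}, and the surrounding text emphasises that the lemma, while true, is \emph{not} the hard part of the main theorem---the difficulty is that for finite widths the previous layer is not exactly Gaussian, so the hypothesis of the lemma is never literally satisfied along the sequence. Your write-up is a clean justification of the lemma as stated; just be aware that in the paper's architecture it serves as motivation and as the identification of the limiting covariance recursion, while the actual convergence proof (Section~\ref{section:proof}) bypasses the hypothesis entirely via the exchangeable CLT of \citet{Blum1958}.
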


Unfortunately the lemma is not sufficient to show that the joint distribution of the activations of higher layers converge in distribution to a multivariate normal. This is because for finite $\width$ the input activations do not have a multivariate normal distribution - this is only attained (weakly or in distribution) in the limit. It could be the case that the \emph{rate} at which the limit distribution is attained affects the distribution in subsequent layers. 

Therefore the proof of our main result will require considerably more technical machinery then would be suggested by the recursion in Lemma \ref{lemma:recursion}. We discuss the more general result in the next section.

\subsection{Convergence for more than one hidden layer}

In order to state our theorem we will need one more definition, namely that of a width function:

\begin{definition}[Width functions]\label{def:widthFunction}
For a given fixed input $\couplingIndex \in \naturalNumbers$, a width function $\widthFunction_{\depthSymbol} : \naturalNumbers \mapsto \naturalNumbers$ at depth $\depthSymbol$ specifies the number of hidden units $\width_{\depthSymbol}$ at depth $\depthSymbol$. 
\end{definition}

For a given fixed input $\couplingIndex \in \naturalNumbers$, the set of width functions together fully specify a shape for a fully connected network. In this way, the countable sequence of natural numbers specifies a countable sequence of fully connected networks. We will be interested in the case where each of the width functions tends to infinity.  Note that this includes the case of taking the width functions to be the identity, which gives the case where each hidden layer has the same number of hidden units $\width$ and $\width$ tends \emph{jointly} to infinity rather than taking the limit in sequence. We are now ready to state the main theorem. 

\begin{restatable}{theorem}{thmA}\label{thm:main}
	Consider a random deep neural network of the form in Equations \eqref{eq:initial} and \eqref{eq:recursion} with a continuous nonlinearity obeying the linear envelope condition \ref{definition:envelope}. Then for all sets of strictly increasing width functions $\widthFunction_{\depthSymbol}$ and for any countable input set $(\datapointi)_{i=1}^\infty$, the distribution of the output of the network converges in distribution to a Gaussian process as $\couplingIndex \to \infty$. The Gaussian process has mean function zero and covariance function is given by the recursion Lemma \ref{lemma:recursion}.
\end{restatable}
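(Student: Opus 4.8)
The plan is to exploit the result cited from \citet{billingsey68}: since the index set is countable and the topology is the product topology metrised by $\sequenceMetric$, weak convergence of the output process is equivalent to weak convergence of every finite-dimensional marginal. Hence it suffices to fix a finite collection of inputs and unit indices and show that the joint law of the corresponding top-layer activations converges to the multivariate Gaussian prescribed by Lemma~\ref{lemma:recursion}. I would prove this by induction on the depth $\depthSymbol$, carrying two coupled hypotheses at each layer: \textbf{(a)} every finite-dimensional marginal of $\indexedActivation{\depthSymbol}{\widthSymbol}{\inputSymbol}$ converges in distribution to a zero-mean Gaussian whose covariance is block-diagonal across units, $\kron_{\widthSymbolPair}\genericCov^{(\depthSymbol)}(\inputSymbol,\inputSymbolB)$; and \textbf{(b)} the empirical covariance $\widehat{\genericCov}^{(\depthSymbol)}(\inputSymbol,\inputSymbolB) = \frac{1}{\width_\depthSymbol}\sum_{\widthSymbolB=1}^{\width_\depthSymbol}\indexedActivity{\depthSymbol}{\widthSymbolB}{\inputSymbol}\,\indexedActivity{\depthSymbol}{\widthSymbolB}{\inputSymbolB}$ converges in probability to a deterministic limit $\genericCov^{(\depthSymbol)}(\inputSymbol,\inputSymbolB)$ as $\couplingIndex\to\infty$. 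The base case $\depthSymbol=\smallestDepth$ is immediate, since the first-layer activations are exactly Gaussian and genuinely i.i.d.\ across units: \textbf{(a)} is trivial and \textbf{(b)} follows from the strong law of large numbers.

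The engine of the induction is a conditioning observation that sidesteps the ``rate of convergence'' difficulty flagged after Lemma~\ref{lemma:recursion}. Conditional on the entire previous layer, the summands defining $\indexedActivation{\depthSymbol+1}{\widthSymbol}{\inputSymbol}=\sum_{\widthSymbolB}\indexedWeight{\depthSymbol+1}{\widthSymbol,\widthSymbolB}\,\indexedActivity{\depthSymbol}{\widthSymbolB}{\inputSymbol}+\indexedBias{\depthSymbol+1}{\widthSymbol}$ are independent zero-mean Gaussians, so the whole finite collection $\{\indexedActivation{\depthSymbol+1}{\widthSymbol}{\inputSymbol}\}$ is \emph{exactly} jointly Gaussian for every finite width, with random conditional covariance $\kron_{\widthSymbolPair}[\weightVarianceScaled^{(\depthSymbol+1)}\,\widehat{\genericCov}^{(\depthSymbol)}(\inputSymbol,\inputSymbolB)+\biasVariance^{(\depthSymbol+1)}]$. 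Crucially I never need the previous layer to be approximately Gaussian, only that its empirical covariance concentrates. Thus, given hypothesis \textbf{(b)} at depth $\depthSymbol$, this random matrix converges in probability to a deterministic one, and applying dominated convergence to the (bounded) conditional characteristic function yields hypothesis \textbf{(a)} at depth $\depthSymbol+1$, with limiting covariance matching the Gaussian expectation in Lemma~\ref{lemma:recursion}.

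To close the induction I must propagate \textbf{(b)} upward. Writing $Z_\widthSymbol = \indexedActivity{\depthSymbol+1}{\widthSymbol}{\inputSymbol}\,\indexedActivity{\depthSymbol+1}{\widthSymbol}{\inputSymbolB}$, the $Z_\widthSymbol$ are exchangeable across $\widthSymbol$ and, because distinct units use independent weights and biases, conditionally independent across $\widthSymbol$ given the previous layer. Hence $\Cov(Z_1,Z_2)=\Var(\expectation{Z_1 \mid \text{layer } \depthSymbol}{})$, and this variance vanishes as $\couplingIndex\to\infty$ precisely because \textbf{(b)} at depth $\depthSymbol$ forces the conditional expectation -- a fixed function of $\widehat{\genericCov}^{(\depthSymbol)}$ -- to converge to a constant. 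Together with a uniform bound on $\Var(Z_1)$, the standard variance identity for exchangeable averages drives $\Var(\widehat{\genericCov}^{(\depthSymbol+1)})\to 0$, while hypothesis \textbf{(a)} at depth $\depthSymbol+1$ identifies $\expectation{\widehat{\genericCov}^{(\depthSymbol+1)}}{}\to\genericCov^{(\depthSymbol+1)}$; convergence in probability, i.e.\ \textbf{(b)} at depth $\depthSymbol+1$, follows.

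I expect the main obstacle to be the moment control underlying both the uniform integrability (needed to pass limits through $\nonlinearity$ when identifying $\expectation{Z_1}{}$ and its limit) and the bound on $\Var(Z_1)$. Here the linear envelope Condition~\ref{definition:envelope} is exactly what is required: from $|\nonlinearity(\genericReal)|\leq \envelopeconstant+\envelopegradient|\genericReal|$ one shows inductively that every layer has polynomial moments of all orders bounded uniformly in $\couplingIndex$, which supplies the uniform integrability of the products $\nonlinearity(\cdot)\nonlinearity(\cdot)$ and legitimises each interchange of limit and expectation above. Assembling the induction up to depth $\depth+1$ gives convergence of all finite-dimensional marginals of the output, and the reduction in the first paragraph upgrades this to weak convergence of the process to the Gaussian process of Lemma~\ref{lemma:recursion}.
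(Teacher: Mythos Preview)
Your proposal is correct and takes a genuinely different route from the paper's own proof.

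\textbf{What the paper does.} After the same reduction to finite-dimensional marginals, the paper applies the Cram\'er--Wold device and reduces to one-dimensional projections $\projection^{(\depthSymbol)}(\projectionIndeces,\projectionCoefficients)\atWidth$, which are written as sums of exchangeable ``summands''. The heart of the argument is then an exchangeable central limit theorem due to Blum, Chernoff, Rosenblatt and Teicher (1958), adapted to non-unit variances and non-identity width functions. The inductive step checks the three moment conditions of that CLT (uncorrelatedness, convergence of the mixed fourth moment $\expectation{X_{n,1}^2X_{n,2}^2}{}$, and growth of the third absolute moment), using uniform-integrability lemmas driven by an eighth-moment bound proved layer by layer.

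\textbf{What you do differently.} You bypass any CLT by exploiting the fact that the weights are themselves Gaussian: conditional on the previous layer, each next-layer pre-activation vector is \emph{exactly} Gaussian with random covariance built from the empirical Gram matrix $\widehat{\genericCov}^{(\depthSymbol)}$. Your extra inductive hypothesis \textbf{(b)} (convergence in probability of $\widehat{\genericCov}^{(\depthSymbol)}$) then turns the random conditional characteristic function into a deterministic one via dominated convergence. This is more elementary---no Blum et al., no Cram\'er--Wold---and the exchangeability you use is only the trivial conditional-independence kind, not the full exchangeable-CLT machinery. The price is that your argument is specific to Gaussian weight priors, whereas the paper's CLT route would in principle extend to other weight distributions with suitable moments. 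The moment control you anticipate is exactly what the paper proves as its Lemma on eighth-moment bounds, and it serves the same purpose in both arguments.

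\textbf{One point to tighten.} When you argue that $\Cov(Z_1,Z_2)=\Var\!\big(\expectation{Z_1\mid \text{layer }\depthSymbol}{}\big)\to 0$, convergence in probability of $\expectation{Z_1\mid\text{layer }\depthSymbol}{}$ to a constant is not by itself enough; you need $L^2$ convergence. This follows once you have the uniform moment bounds you mention (a uniform bound on $\expectation{Z_1^2}{}$ gives uniform integrability of the square of the conditional mean), so the gap is closed by the same lemma you already plan to prove, but it is worth stating explicitly.
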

The convergence in distribution in the statement of the theorem is to be understood in relation to the topology induced by the metric $\rho$ described in Expression \eqref{eq:metric}.
Note the generality allowed by the statement in terms of width functions. We could for instance have width functions growing at very different rates, such as $\widthFunction_{\depthSymbol}(\couplingIndex)=\couplingIndex^{\depthSymbol}$. The special case in which all width functions are the identity is most common in other papers on fully connected networks and is used in the majority of our experiments. It is sufficiently important that we state it as a corollary. 

\begin{corollary}
Consider a random deep neural network of the form in Equations \eqref{eq:initial} and \eqref{eq:recursion} with a continuous nonlinearity obeying the linear envelope condition \ref{definition:envelope} and with common number of hidden units $\width_\depthSymbol=\width$ for each hidden layer $\depthSymbol$.  Then for any countable input set $(\datapointi)_{i=1}^\infty$, the distribution of the output of the network converges in distribution to a Gaussian process as $\width \to \infty$. The Gaussian process has mean function zero and covariance function is as in the recursion Lemma \ref{lemma:recursion}.
\end{corollary}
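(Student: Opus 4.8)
The quickest argument is to specialise Theorem~\ref{thm:main}: taking every width function to be the identity, $\widthFunction_{\depthSymbol}(\couplingIndex)=\couplingIndex$, yields a family of strictly increasing width functions each tending to infinity, so the theorem applies, and identifying the common width $\width$ with $\couplingIndex$ makes the limit $\width\to\infty$ coincide with $\couplingIndex\to\infty$. The zero mean function, the covariance recursion of Lemma~\ref{lemma:recursion}, and the mode of convergence (weak convergence in the topology metrised by $\sequenceMetric$ of Expression~\eqref{eq:metric}) all transfer unchanged, so the only point to verify is admissibility of the identity width functions, which is immediate.

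Since the common-width case is in fact the substantive one --- the introduction flags \emph{simultaneous} growth of all layers as the scenario of practical interest --- it is worth recording how I would argue it from scratch, as this is exactly the content inherited from the theorem. First I would invoke the reduction valid for countable index sets: because $\sequenceMetric$ metrises the product topology, it suffices to show that every finite-dimensional marginal of the output converges to the multivariate normal prescribed by the recursion. Consistency of these normals under marginalisation, which follows from the recursive form of the covariance, then produces a genuine Gaussian process by Kolmogorov extension and upgrades marginal convergence to weak convergence of the process.

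I would then induct on depth $\depthSymbol$. The structural fact that makes the joint limit tractable is that, conditional on the activities of layer $\depthSymbol$, the pre-activations of layer $\depthSymbol+1$ are \emph{exactly} jointly Gaussian --- being linear in the i.i.d.\ Gaussian weights and biases --- with a random covariance built from the empirical averages $\width^{-1}\sum_{\widthSymbolB=1}^{\width}\nonlinearity(\indexedActivation{\depthSymbol}{\widthSymbolB}{\inputSymbol})\nonlinearity(\indexedActivation{\depthSymbol}{\widthSymbolB}{\inputSymbolB})$ over the layer-$\depthSymbol$ units (plus the bias term). The units of layer $\depthSymbol$ are \emph{exchangeable}, so writing $\summand_{\widthSymbolB}=\nonlinearity(\indexedActivation{\depthSymbol}{\widthSymbolB}{\inputSymbol})\nonlinearity(\indexedActivation{\depthSymbol}{\widthSymbolB}{\inputSymbolB})$ the variance of the average equals $\width^{-1}\Var(\summand_1)+(1-\width^{-1})\Cov(\summand_1,\summand_2)$. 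The inductive hypothesis --- that the two-unit marginal of layer $\depthSymbol$ converges to a pair of \emph{independent} Gaussians --- sends $\Cov(\summand_1,\summand_2)$ to zero, while the diagonal contribution vanishes at rate $\width^{-1}$; hence the random covariance converges in probability to the deterministic kernel of Lemma~\ref{lemma:recursion}. Substituting this into the bounded conditional Gaussian characteristic function and taking expectations delivers convergence of layer $\depthSymbol+1$, the base case being Neal's single-hidden-layer central limit argument already given for $\depth=1$.

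The main obstacle --- precisely the gap flagged in the text after Lemma~\ref{lemma:recursion} --- is that at finite $\width$ layer $\depthSymbol$ is only \emph{approximately} Gaussian, whereas the step above needs genuine convergence of second moments rather than mere convergence in distribution. I would close this gap with uniform integrability furnished by the linear envelope Condition~\ref{definition:envelope}: the bound $|\nonlinearity(\genericReal)|\le\constantCoeff+\linearCoeff|\genericReal|$ dominates each $\summand_{\widthSymbolB}$ by a quadratic in the layer-$\depthSymbol$ pre-activations, whose moments the conditional-Gaussian structure controls uniformly in $\width$. I would therefore carry ``uniformly bounded moments of every order'' as an auxiliary clause of the induction, so that convergence in distribution of the one- and two-unit marginals can be upgraded to convergence of $\expectation{\summand_1}{}$ and of $\expectation{\summand_1\summand_2}{}$ needed above. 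Threading this uniform-integrability bookkeeping correctly through the simultaneous growth of every layer is where essentially all the difficulty resides.
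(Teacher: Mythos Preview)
Your opening paragraph is exactly how the paper handles the corollary: it is stated as the special case of Theorem~\ref{thm:main} obtained by taking every width function to be the identity, so nothing further is needed. On that score your proposal is correct and matches the paper.

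The remainder of your write-up is really a sketch of an independent proof of Theorem~\ref{thm:main} itself, and here your route diverges from the paper's. You argue via the \emph{conditional} Gaussianity of layer $\depthSymbol{+}1$ given layer $\depthSymbol$: the pre-activations are exactly Gaussian with a random covariance equal to an empirical mean over exchangeable units, and you drive that random covariance to its deterministic limit in probability using the exchangeable variance identity together with the inductive hypothesis that two distinct units become asymptotically independent. The paper instead reduces, via Cram\'er--Wold, to one-dimensional projections written as row sums of an exchangeable array and then applies the Blum--Chernoff--Rosenblatt--Teicher CLT for exchangeable sequences (Theorem~\ref{theorem:blum}, adapted in Lemma~\ref{lemma:gcltb}), verifying its three moment conditions layer by layer. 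Both approaches ultimately hinge on the same ingredient you correctly isolate---uniform integrability from the linear envelope to upgrade distributional convergence to moment convergence---but they organise the work differently: your conditional-characteristic-function argument is conceptually cleaner and closer in spirit to \citet{Lee2018}, while the paper's exchangeable-CLT machinery makes the required moment bounds more explicit (eighth moments, rather than ``every order'' as you propose, already suffice; see Lemma~\ref{lemma:mmnt_bound_prenonlin}).
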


We postpone the proof of the main theorem until Section \ref{section:proof}. We next look at specific instances of the implied covariance function.

\section{Specific kernels under recursion}\label{section:specificKernels}

\citet{Cho2009} suggest a family of kernels based on a recurrence designed to `mimic computation in large, multilayer neural nets'. It is therefore of interest to see how this relates to deep wide Gaussian processes. A kernel may be associated with a feature mapping $\featureMapping(\inputSymbol)$ such that $\genericCov(\inputSymbol, \inputSymbolB) = \featureMapping(\inputSymbol) \bigcdot \featureMapping(\inputSymbolB) $. Cho and Saul define a recursive kernel through a new feature mapping by compositions such as $\featureMapping( \featureMapping( \inputSymbol ) )$. However this cannot be a legitimate way to create a kernel because such a composition represents a type error. There is no reason to think the output dimension of the function $\featureMapping$ matches the input dimension and indeed the output dimension may well be infinite. Nevertheless, the~paper provides an~elegant solution to a~different task: it derives closed form solution to the~recursion from Lemma~\ref{lemma:recursion} \citep{Hazan2015} for the special case

\begin{equation}
\nonlinearity( u ) = \heaviside( u ) u^{\rampOrder} \hspace{5 pt} \text{for} \hspace{5 pt} \rampOrder = 0,1,2,3  \, ,
\end{equation}

\noindent where $\heaviside$ is the Heaviside step function. Specifically, the~recursive approach of \citet{Cho2009} can be adapted by using the~fact that $u^\top z$ for $z \sim \normal(0, L L^\top)$ is equivalent in distribution to $(L^\top u)^\top \varepsilon$ with $\varepsilon \sim \normal(0, I)$, and by optionally augmenting $u$ to incorporate the~bias. Since $\rampOrder=1$ corresponds to rectified linear units, we apply this analytic kernel recursion in all of our experiments.  


\section{Measuring convergence using maximum mean discrepancy}\label{section:mmd}
In this section we use the kernel based two sample tests of \citet{Gretton2012} to empirically measure the similarity of finite random neural networks to their Gaussian process analogues. The maximum mean discrepancy (MMD) between two distributions $\distributionA$ and $\distributionB$ is defined as:

\begin{equation}
\MMD(\distributionA,\distributionB,\hilbertSpace) := \sup_{\hilbertSpaceNorm{\testFunction} \leq 1} \bigg[ \expectation{\testFunction}{\distributionA} - \expectation{\testFunction}{\distributionB} \bigg] \, ,
\end{equation}

\noindent where $\hilbertSpace$ denotes a reproducing kernel Hilbert space and $\hilbertSpaceNorm{\bigcdot}$ denotes the corresponding norm. It gives the biggest possible difference between expectations of a function under the two distributions under the constraint that the function has Hilbert space norm less than or equal to one. We used the unbiased estimator of squared MMD given in Equation (3) of \citet{Gretton2012}.

In this experiment and where required in what follows, we take weight variance parameters $\weightVarianceDepthScaled = \assumedWeightVariance$ and bias variance $\biasVariance = \assumedBiasVariance$. We took $\MMDnumDataPoints$ standard normal input points in $\MMDnumDimensions$ dimensions and pass them through $\MMDnumNetworks$ independent random neural networks drawn from the distribution discussed in this paper. This was then compared to $\MMDnumNetworks$ samples drawn from the corresponding Gaussian process marginal distribution. The experiment was performed with different numbers of hidden layers, different choices of monotonic width functions (which will be described in the sequel), and network sequence index $\couplingIndex \in \naturalNumbers$ as described in Definition \ref{def:widthFunction}. We repeated each experiment $\MMDnumRepeats$ times which allows us to reduce variance in our results and give a simple estimate of measurement error. The experiments use an RBF kernel for the MMD estimate with lengthscale $\MMDlengthScale$. In order to help give an intuitive sense of the distances involved we also include a comparison between two Gaussian processes with isotropic RBF kernels using the same MMD distance measure. The kernel length scales for this pair of `calibration' Gaussian processes are taken to be $\characteristicLengthScale$ and $2\characteristicLengthScale$, where the characteristic length scale $\characteristicLengthScale = \sqrt{8}$ is chosen to be sensible for the standard normal input distribution on the four dimensional space. Note that there are multiple different uses for kernels in this experiment. The first use is to estimate MMD, the second is for the covariance function of the calibration Gaussian processes and the third use is for the covariance function of the limit Gaussian process. The first and second cases both happen to use the RBF kernel with various length scales, but they should not be confused.

We investigated three choices of strictly increasing width functions, all of which meet the assumptions required by Theorem \ref{thm:main} for convergence in distribution to the corresponding Gaussian process. The identity width function $\widthFunction_{\depthSymbol}(\couplingIndex) = \couplingIndex$ corresponds to the case where all hidden layers are the same size and $\couplingIndex$ may be directly identified with the width of the network. To test a broader variety of the predictions made by the theory we introduced two other width function specifications. What we call the \emph{largest last} width function is given by:

\begin{equation}
\widthFunction_{\depthSymbol}(\couplingIndex) = \couplingIndex \depthSymbol \equationFullStop
\end{equation}

For example, in a three hidden layer neural network, with $\couplingIndex=50$, starting from the layer closest to the inputs, we would have hidden layer sizes $50,100,150$. The \emph{largest first} width function is given by:

\begin{equation}
\widthFunction_{\depthSymbol}(\couplingIndex) = \couplingIndex ( \depth -\depthSymbol + 1)
\end{equation}

For example in a three hidden layer neural network, with $\couplingIndex=50$, starting from the layers closest to the inputs, we would have have hidden layer sizes $150,100,50$. For both the largest first and largest last width functions the sequence index $\couplingIndex$ may be identified with the width of the narrowest hidden layer. 

\begin{figure}[h!]
\includegraphics[width = \textwidth]{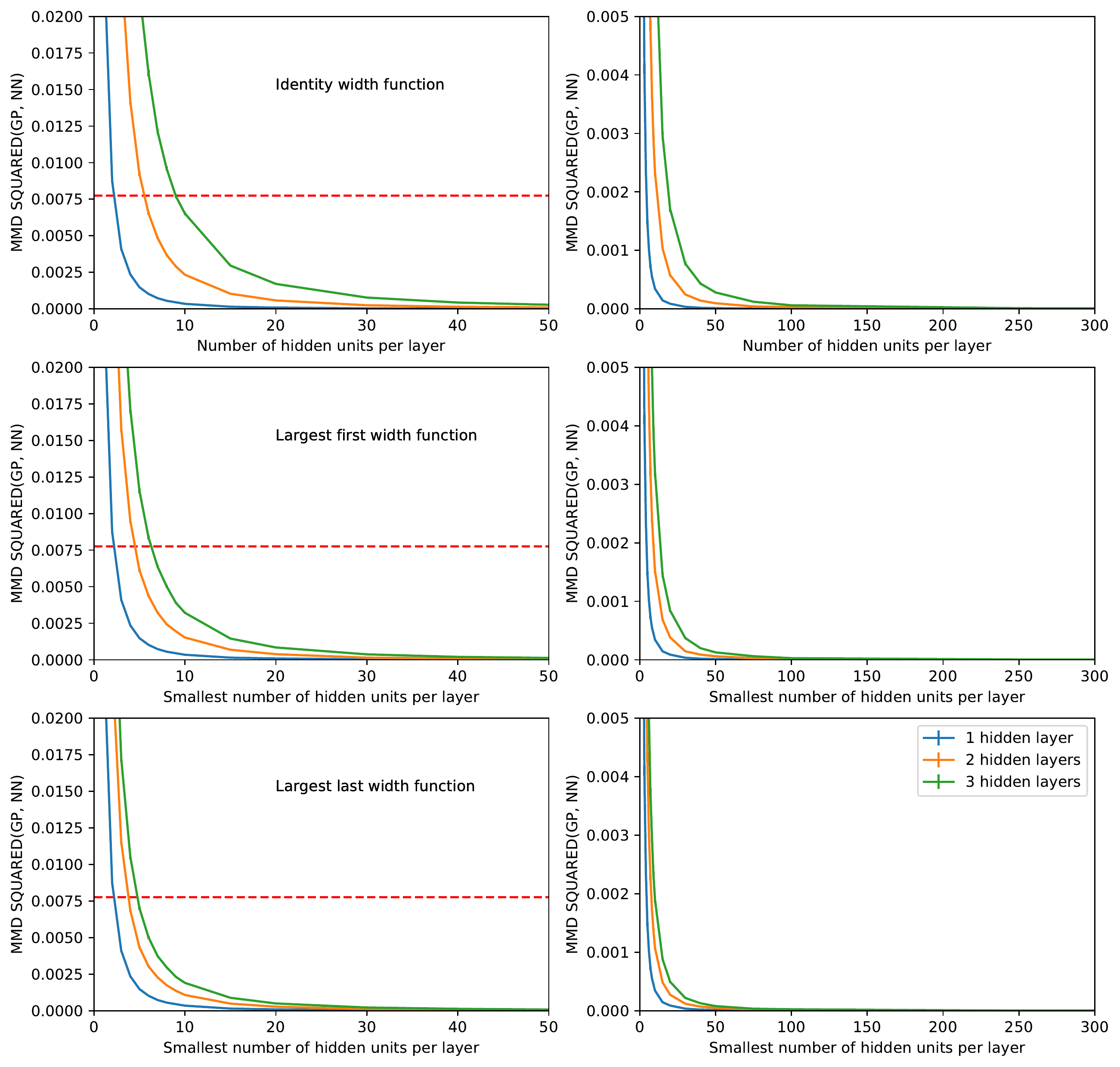}
\caption{\label{fig:mmd} A comparison of finite random neural networks with ReLU nonlinearity to their corresponding Gaussian process analogue using an (RBF) kernel estimator of the squared maximum mean discrepancy (MMD). The results are consistent with the emergence of Gaussian process behaviour as the networks become wide. The red dashed line is for calibration and denotes the squared MMD between two Gaussian processes with isotropic RBF kernels and length scales $\characteristicLengthScale$ and $2\characteristicLengthScale$ where $\characteristicLengthScale=\sqrt{8}$ is the characteristic length scale of the input space. Different columns are different scalings of the same row plots. The rows correspond to different choices of width function. The most standard choice of the same number of hidden units per layer corresponds to the identity width function. The other width functions are described in the text. Assuming all layer sizes are strictly increasing, the independence of the choice of width function is a prediction of the theory, and is consistent with these results.
}
\end{figure}

The results of the experiment are shown in Figure \ref{fig:mmd}. We see that for each fixed depth the network converges towards the corresponding Gaussian process as the width increases. For the same number of hidden units per layer, the MMD distance between the networks and their Gaussian process analogue becomes higher as depth increases. The rate of convergence to the Gaussian process is slower as the number of hidden layers is increased. Unsurprisingly, since the corresponding networks will have strictly more units, both the largest last and largest first width functions converge faster than the identity width function. The largest last width function seems to converge slightly faster than the largest last width function with respect to this metric. The comparison is more interesting in this case since these two width functions have similar numbers of units. All of the results are consistent with the predictions of Theorem \ref{thm:main}.

\section{Empirical Comparison of Bayesian Deep Networks to Gaussian Processes}\label{section:bayesComparison}

In this section we compare the behaviour of finite \emph{Bayesian} deep networks of the form considered in this paper with their Gaussian process analogues. For expectations of bounded continuous functions, if we make the networks wide enough the agreement will be very close. It is also of interest, however, to consider the behaviour of networks actually used in the literature. Fully connected Bayesian deep networks with finite variance priors on the weights have been considered in several recent works \citep{Graves2011,Hernandez2015,Blundell2015,Hernandez2016}, though the specific details vary. From a Bayesian perspective, the previous section could be interpreted as using MMD as a similarity metric between priors. By constrast, in this section we will compare data dependent quantities that are typically used in Bayesian modelling practice.

We use rectified linear units and correct the variances to avoid a loss of prior variance as depth is increased. Our general strategy was to compare exact Gaussian process inference against expensive, `gold standard',  Markov Chain Monte Carlo (MCMC) methods. We choose the latter because used correctly it works well enough to largely remove questions of posterior approximation quality from the calculus of comparison. It does mean however that our empirical study does not extend to datasets which are large in terms of number of data points or dimensionality, where such inference is challenging. We therefore sound a note of caution about extrapolating our empirical finite network conclusions too confidently to this domain. On the other hand, lower dimensional, prior-dominated problems are generally regarded as an area of strength for Bayesian approaches and in this context our results are directly relevant.

We use $3$ hidden layers and $50$ hidden units which is typical of the smaller Bayesian neural networks used by \citet{Hernandez2015}. \citet{Hernandez2015} also use the variance scaling of \citet{Neal1996} on their normally distributed weights and give a hierarchical treatment of the hyperparameters. Note that much larger networks have been used in the literature. For example, \citet{Blundell2015} use as many as $1200$ units per layer, though they use a two component scale mixture of Gaussians for the weight prior. This would require an extension of our theory to non-Gaussian weight distributions for our results to be strictly applicable. Our modest choice of $50$ hidden units per layer is partly also motivated by necessity. For larger networks the MCMC would be prohibitively slow.

The experiments are divided into those with fixed hyperparameters and those where the hyperparameters are learnt. The hyperparameters are specifically the noise variance, the raw weight variance $\weightVarianceScaled$ and the bias variance $\biasVariance$. The latter two hyperparameters are shared across layers. The fixed hyperparameter experiments are the comparison most directly relevant to the theory presented here. However we found that as we moved to larger datasets both the neural network prior and the Gaussian process prior were often misspecified to an extent that made the results practically uninteresting. Since we were already computationally constrained by the neural network MCMC, we adopted the pragmatic solution of using the type II maximum likelihood parameter estimate of the Gaussian process model for both the neural network and Gaussian process priors. Although the number of hyperparameters is small, this technically adds dependency, so the fixed-hyperparameter experiments are complementary. 

\subsection{Experiments with fixed hyperparameters}

We computed the posterior moments by the two different methods on some example datasets. For the MCMC we used Hamiltonian Monte Carlo (HMC) \citep{Neal2010} updates interleaved with elliptical slice sampling \citep{Murray2010}. We considered a simple one-dimensional regression problem and a two dimensional real-valued embedding of the four data point XOR problem. To distinguish this from a later larger embedding we term this the \emph{small XOR} dataset. We see in Figures \ref{fig:one_dimensional_example} and \ref{fig:bias_two_dimensional} (left) that the agreement in the posterior moments between the Gaussian process and the Bayesian deep network is very close. 

A key quantity of interest in Bayesian machine learning is the marginal likelihood. It is the normalising constant of the posterior distribution and gives a measure of the model fit to the data. For a Bayesian neural network, it is generally very difficult to compute, but with care and computational time it can be approximated using Hamiltonian annealed importance sampling \citep{Sohl2012}. The log-importance weights attained in this way constitute a stochastic lower bound on the marginal likelihood \citep{Grosse2015}. Figure \ref{fig:bias_two_dimensional} (right) shows the result of such an experiment compared against the (extremely cheap) Gaussian process marginal likelihood computation on the small XOR problem. The value of the log-marginal likelihood computed in the two different ways agree to within a single nat which is negligible from a model selection perspective \citep{Grosse2015}.

Predictive log-likelihood is a measure of the quality of probabilistic predictions given by a Bayesian regression method on a test point. To compare the two models we sampled $\predictiveNumPoints$  standard normal train and test points in $\predictiveDimension$ dimensions and passed them through a random network of the type under study to get regression targets. We then discarded the true network parameters and compared the predictions of posterior inference between the two methods. We also compared the marginal predictive distributions of a latent function value. Figure \ref{fig:bias_four} shows the results. We see that the correspondence in predictive log-likelihood is close but not exact. Similarly the marginal function values are close to those of a Gaussian process but are slightly more concentrated.

\begin{figure}[h]
\includegraphics[width = \textwidth]{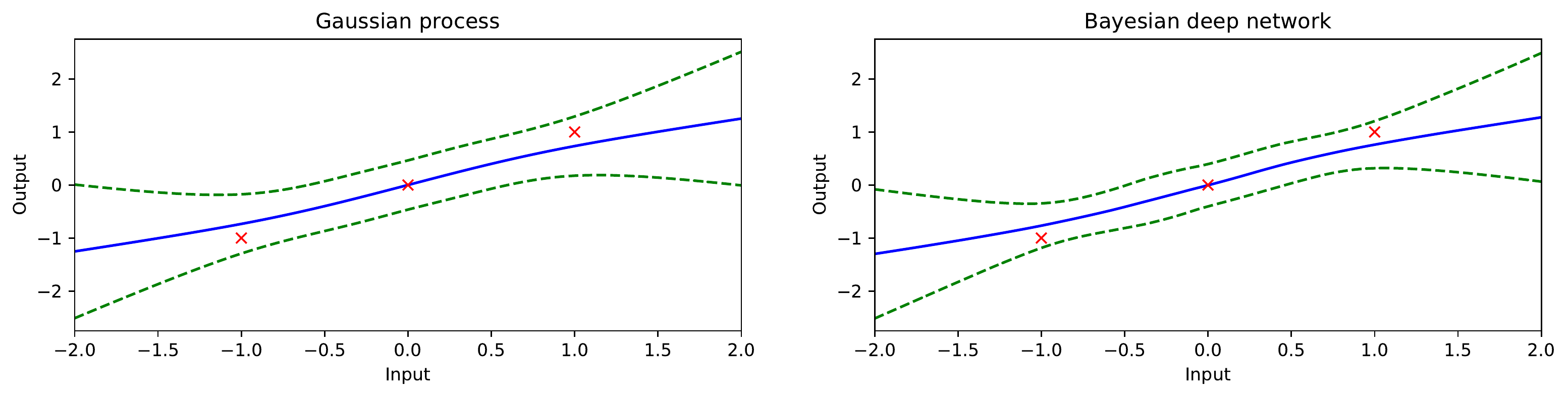}
\caption{\label{fig:one_dimensional_example} A comparison between Bayesian posterior inference in a Bayesian deep neural network and posterior inference in the analogous Gaussian process. The neural network has $3$ hidden layers and $50$ units per layer. The lines show the posterior mean and two $\sigma$ credible intervals.
}
\end{figure}

\begin{figure}[h]
\begin{center}
\begin{subfigure}{0.66\textwidth}
\includegraphics[width = \textwidth]{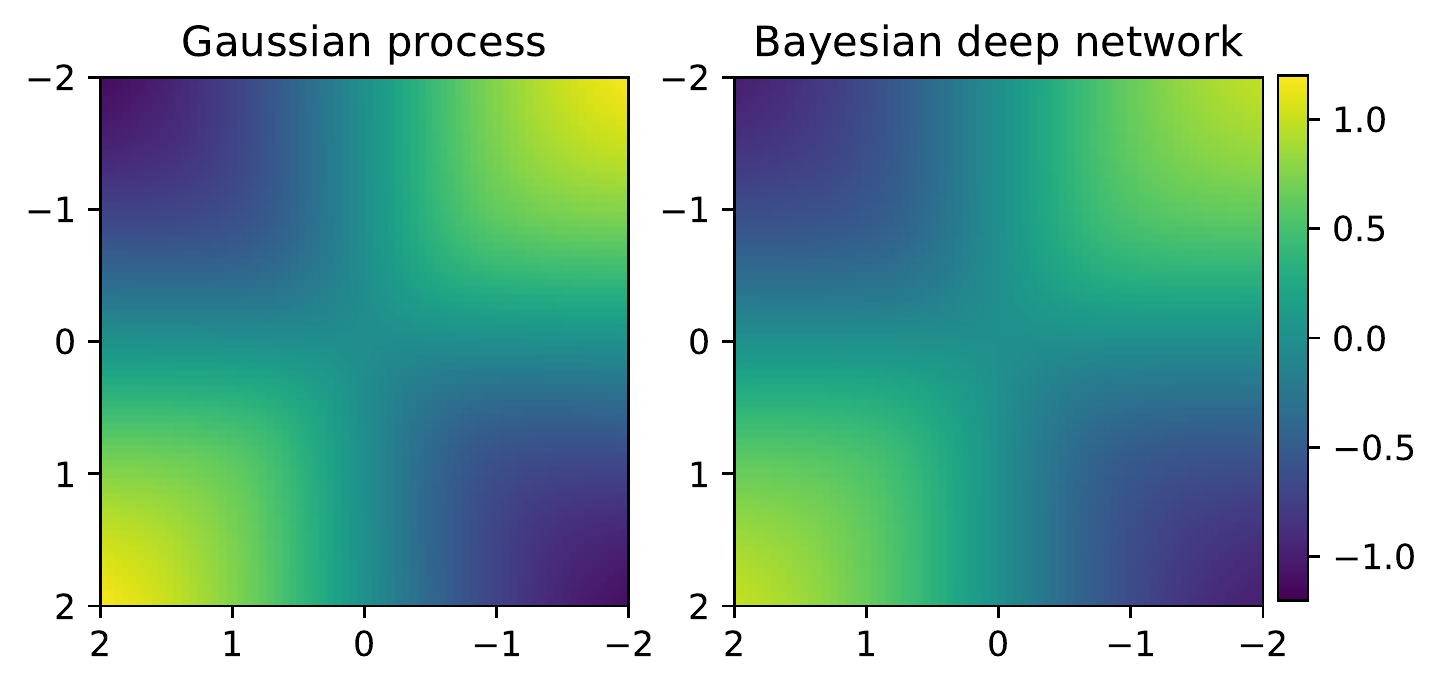}
\end{subfigure}
\begin{subfigure}{0.33\textwidth}
\includegraphics[width = \textwidth]{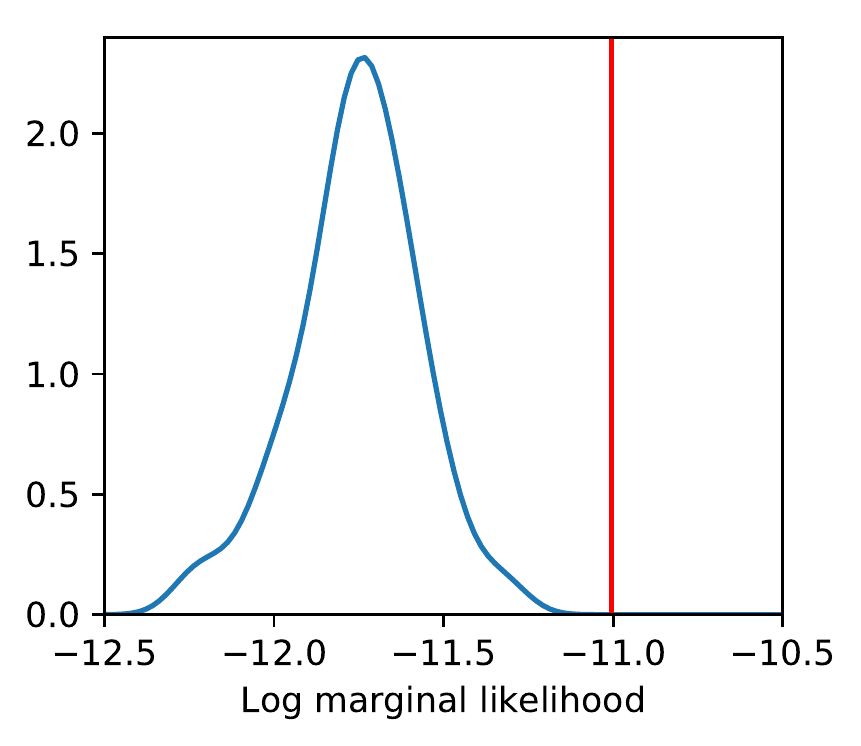}
\end{subfigure}
\end{center}
\caption{\label{fig:bias_two_dimensional} A comparison between posterior inference for a Gaussian process and a Bayesian deep network for the small XOR dataset, a four point real embedding of the XOR function. Left and centre: The two posterior means. The mean absolute different between the two posterior estimate grids is $0.064$. Right: Kernel density estimate of the log weights from annealed importance sampling on a Bayesian deep network compared to the analogous Gaussian process marginal likelihood shown by the vertical line. The neural network has $3$ hidden layers and $50$ units per layer.
}
\end{figure}

\begin{figure}[h]
\begin{center}
\begin{subfigure}{0.49\textwidth}
\includegraphics[width = \textwidth]{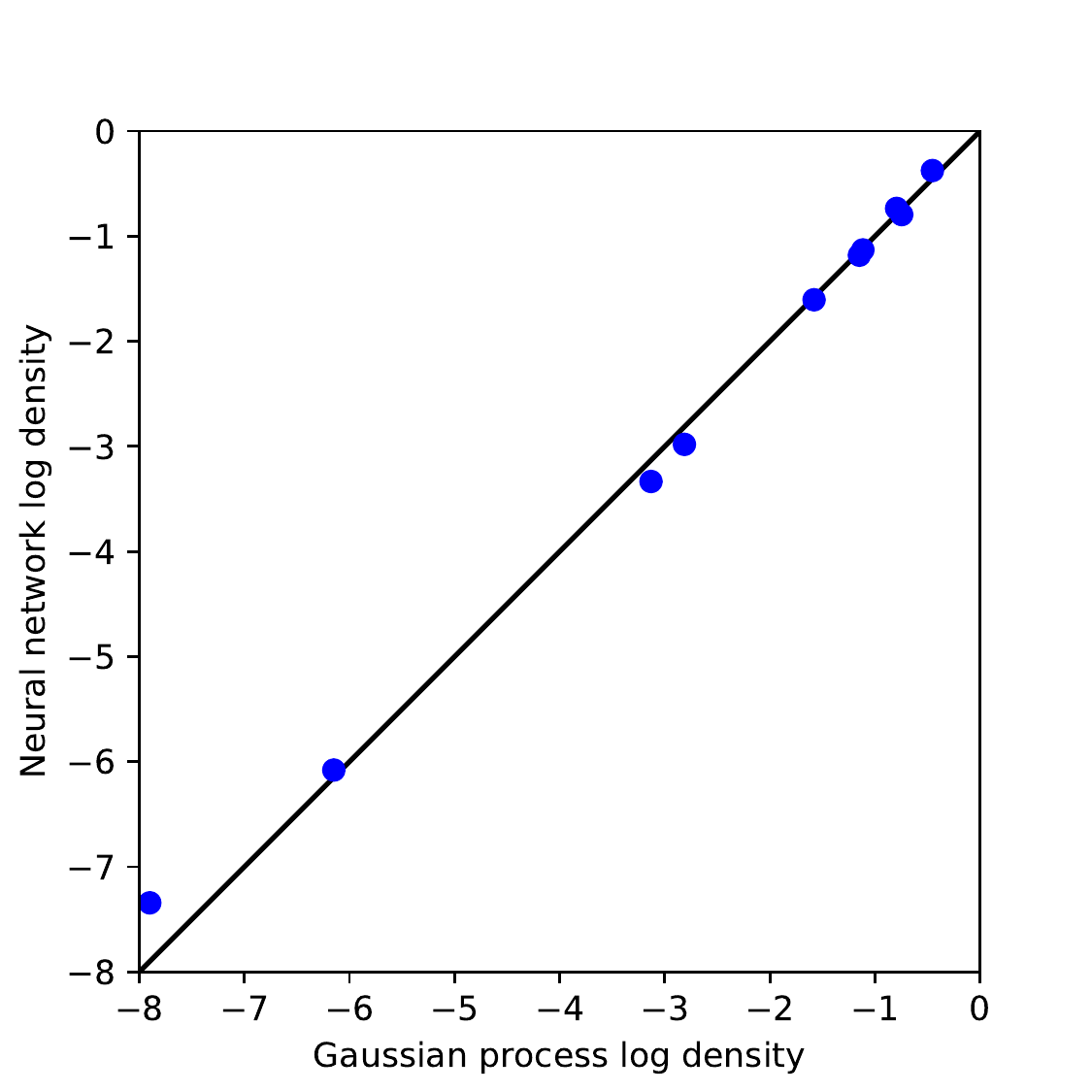}
\end{subfigure}
\begin{subfigure}{0.49\textwidth}
\includegraphics[width = \textwidth]{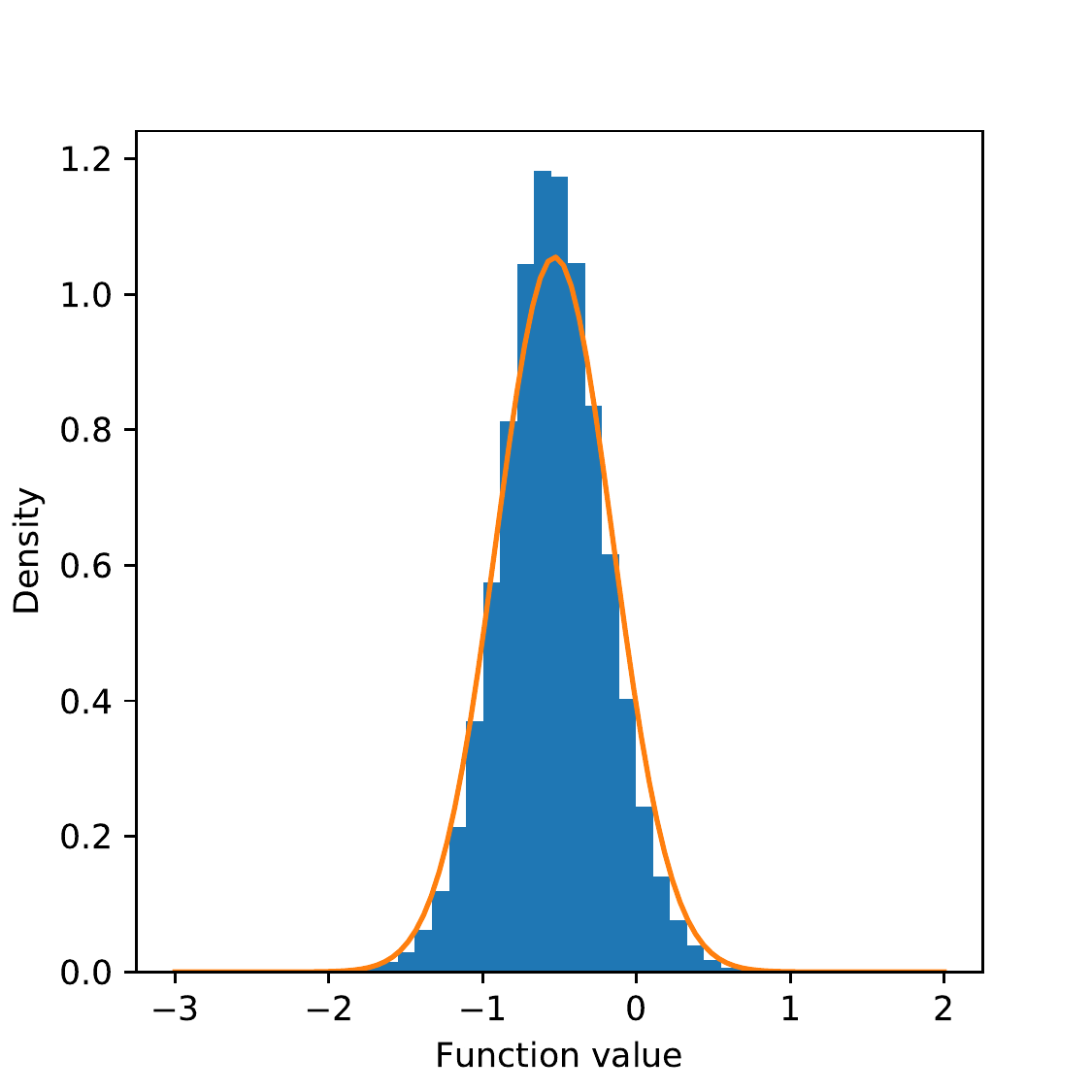}
\end{subfigure}
\end{center}
\caption{\label{fig:bias_four} A comparison of the predictive distributions of a Bayesian deep network and a Gaussian process on a randomly generated test case. Left: the per-point log-densities of the two models. Right: predictive marginal distribution for the latent function on a randomly selected test point.}
\end{figure}

\subsection{Experiments with learnt hyperparameters}

As described above, in this section we compare neural networks and the corresponding Gaussian process on larger datasets using hyperparameters for both models that are taken from the learnt Gaussian process kernel, estimated using type II maximum likelihood.

We made a comparison for the $100$ data point Snelson dataset, a regression benchmark commonly used in the sparse Gaussian process literature \citep{Snelson2005}. Figure \ref{fig:snelson} shows that the agreement is very close.

\begin{figure}[h]
\includegraphics[width = \textwidth]{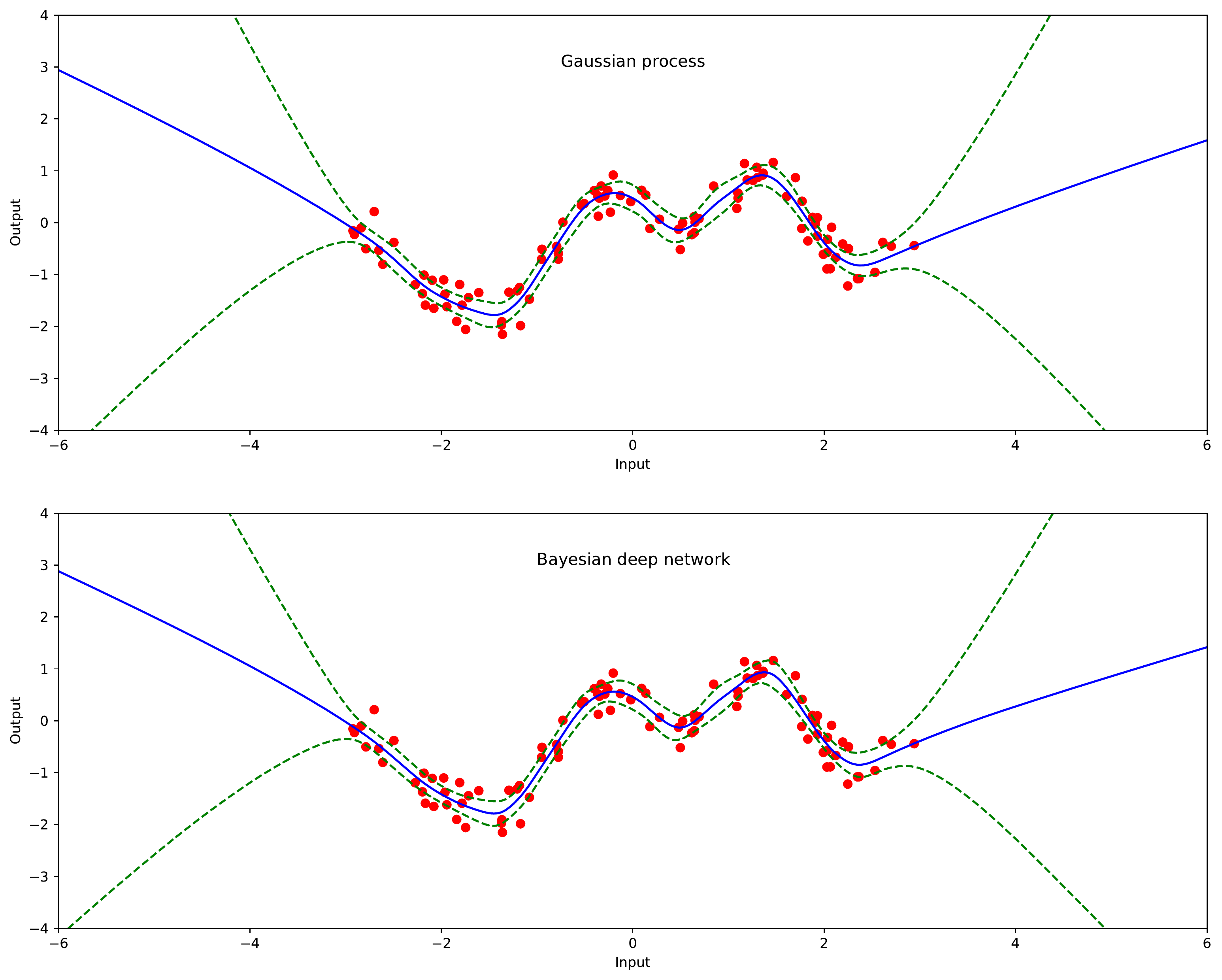}
\caption{\label{fig:snelson} A comparison between Bayesian posterior inference in a Bayesian deep neural network and posterior inference in the analogous Gaussian process for the Snelson dataset. The neural network has $3$ hidden layers and $50$ units per layer. The lines show the posterior mean and two $\sigma$ credible intervals. 
}
\end{figure}

Next we made a comparison for a larger embedding of the real valued XOR function which we term the \emph{smooth XOR dataset}, to distinguish it from the small XOR dataset above. In detail, we have:

\begin{equation}
f(x_1,x_2) = -\gamma x_1 x_2 \exp\left\lbrace - \frac{(x_1^2+x_2^2)}{\beta} \right \rbrace
\end{equation}

\noindent where $\gamma$ and $\beta$ are chosen so that $f(-1,-1)=f(1,1)=-1$ and $f(1,-1) = f(-1,1) = 1$. One hundred input points $(x_1,x_2)$ are sampled from a standard normal distribution and Gaussian noise of variance $0.01$ is added to the outputs. In order to allow better visualisation of the posterior we take test points along two linear cross sections as shown in Figure \ref{fig:smooth_xor}. This allows us to plot the two posteriors along the cross-sections in a manner similar to a one dimensional regression problem. Figure \ref{fig:smooth_xor} shows the results. We can see that there is again close agreement between the Bayesian neural network posterior and that of the Gaussian process.

\begin{figure}[h!!]
\begin{center}
\includegraphics[width = 0.73\textwidth]{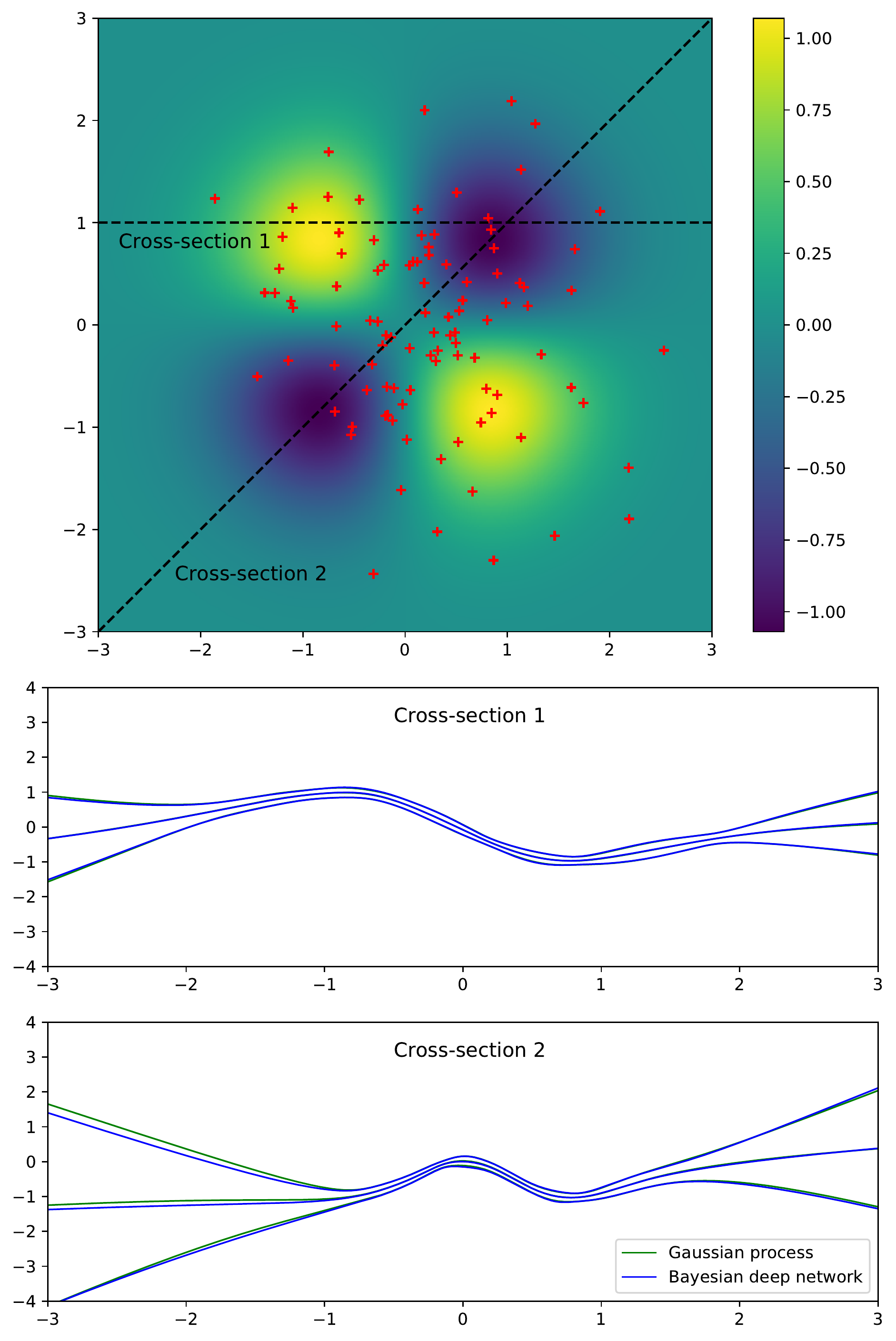}
\end{center}
\caption{\label{fig:smooth_xor} A comparison between the Bayesian posterior in a deep neural network and the analogous Gaussian process for the smooth XOR dataset. Top: A visualization of the smooth XOR dataset. The heat plot shows the smooth XOR function. The red crosses show the position of the training inputs. The black dashed lines show linear cross sections of the space along which we study the two posteriors. Middle and bottom: The two posteriors along the linear cross sections. In each case, the middle line is the posterior mean and the other lines represent the two $\sigma$ credible intervals. 
}
\end{figure}

Finally, we make a comparison on the Delft yacht hydrodynamics dataset. The task is to predict the residuary resistance per unit weight of displacement for a yacht hull based on six relevant attributes. We randomly partition the data into 100 training instances and 208 test instances. The data has very low noise. To make it a more challenging task for probabilistic modelling we add Gaussian noise of variance $0.01$. We evaluate per test data point hold out log likelihood for both the Gaussian process and the neural network and the marginal posterior on a randomly selected test function value. The results are shown in Figure \ref{fig:yacht}. The results indicate that on this dataset the Bayesian deep network and the Gaussian process do not make similar predictions. Of the two, the Bayesian neural network achieves significantly better log likelihoods on average, indicating that a finite network performs better than its infinite analogue in this case. 

\begin{figure}[h!!]
\begin{center}
\begin{subfigure}{0.49\textwidth}
\includegraphics[width = \textwidth]{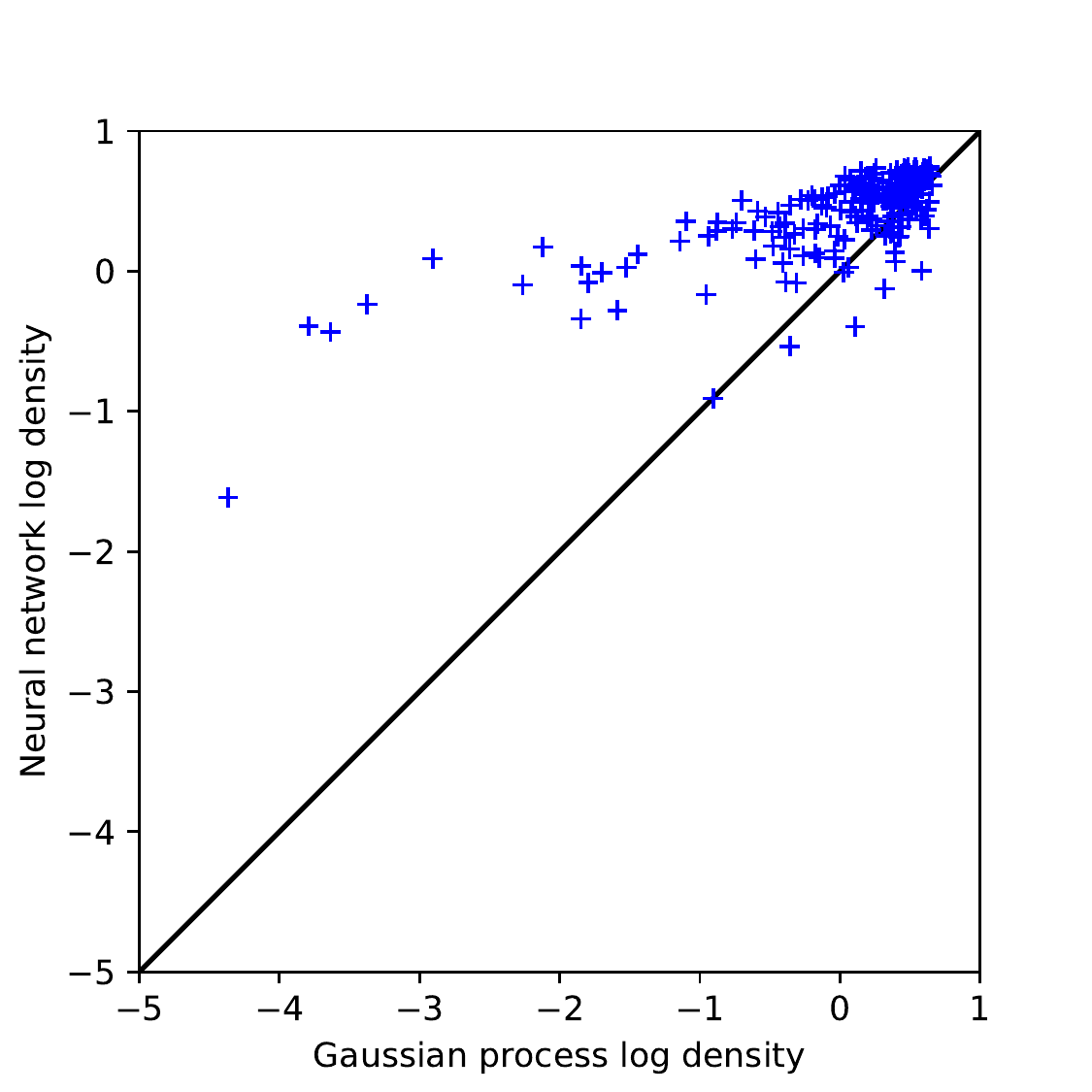}
\end{subfigure}
\begin{subfigure}{0.49\textwidth}
\includegraphics[width = \textwidth]{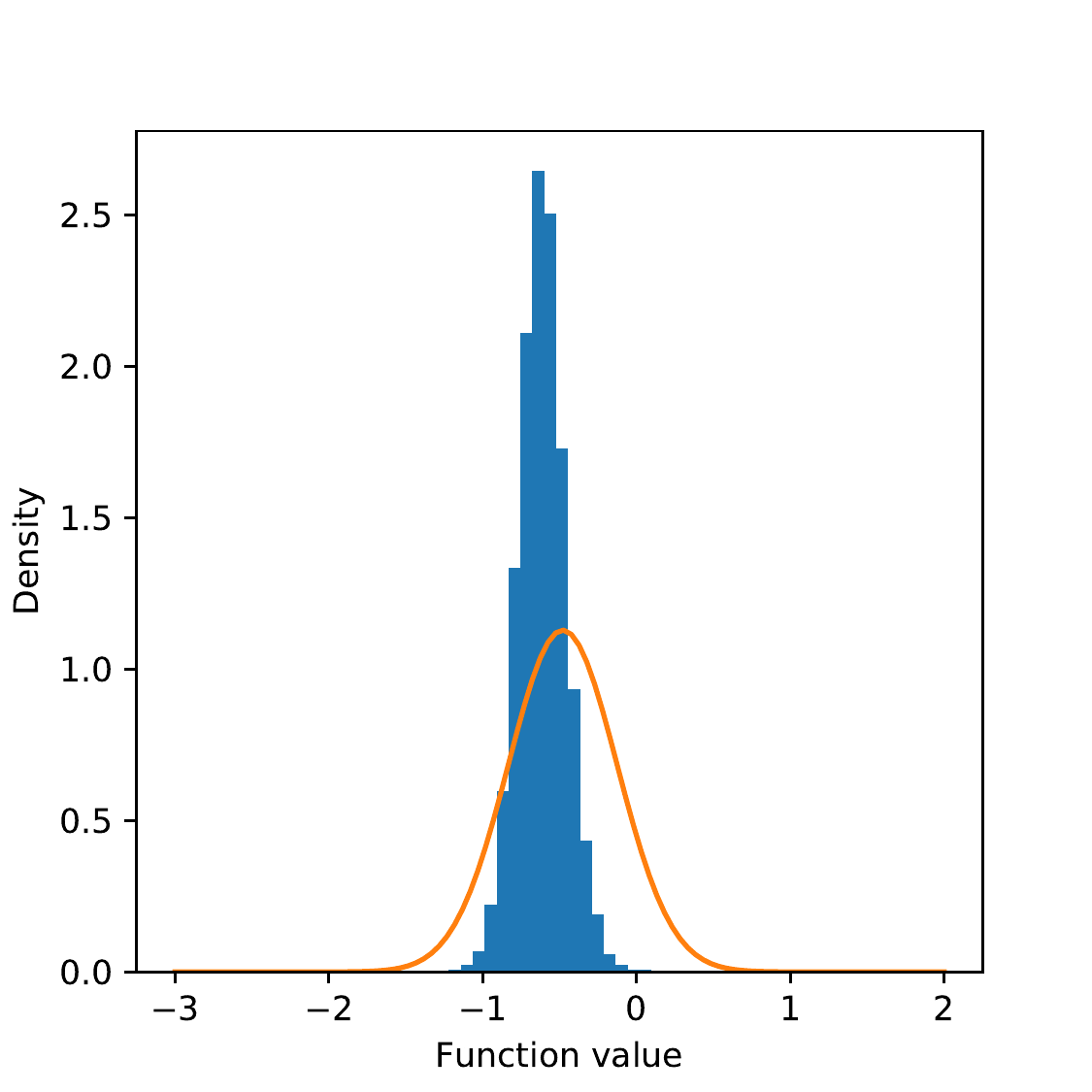}
\end{subfigure}
\end{center}
\caption{\label{fig:yacht} A comparison of the predictive distributions of a Bayesian deep network and a Gaussian process on the yacht hydrodynamics dataset. Left: the per-point log-densities of the two models. Right: predictive marginal distribution for the latent function on a randomly selected test point.}
\end{figure}

\subsection{Summary and discussion of Bayesian posterior comparison}

A summary of the datasets studied for comparing posteriors is given in Table \ref{table:datasets}. Of the datasets studied, the Bayesian neural network showed close agreement with the Gaussian process on five of the six datasets according to the various metrics used, the exception being the yacht dataset. It is notable that the yacht dataset has the highest dimensionality of those considered.

As already noted, our comparison method is computationally expensive as a result of the gold-standard MCMC algorithms used for Bayesian neural network inference. This means we are restricted to relatively small, low dimensional datasets. This caveat is particularly important in light of the yacht data results. On the other hand, we were also limited in the size of finite network we could consider for the same computational reason. As already discussed, the $50$ hidden unit networks we use are on the small end of the range of networks that have been studied in the literature \citep{Hernandez2015}, compared to values as high as $1200$ used in other works \citep{Blundell2015}. We would, of course, expect that where the model matches the assumption of our theory the agreement would become closer as the number of hidden units increases. As a result of the empirical analysis in Section \ref{section:mmd}, we would predict more difference if the number of hidden layers was substantially increased, though this has been relatively rare in the existing Bayesian literature thus far. 

Bringing these considerations together, it seems likely that some experiments in the literature studied under the banner of Bayesian deep learning would have given very similar results to a Gaussian process with the correct kernel. In the case where the two true posteriors are close, but the posterior approximation for the neural network is significantly worse than any approximation required for the Gaussian process, it would be expected that the Gaussian process would perform better. It should again be noted that the Bayesian neural network experiments were significantly slower than those conducted using the Gaussian process. The Snelson example took 44 hours on ten 3.2 GHz I7 CPU cores to obtain the two million samples required for the Bayesian neural network, where the Gaussian process took a matter of seconds. 

Practically, we suggest that the Bayesian deep learning community routinely compare their results to Gaussian processes with the kernels studied here. This will be facilitated by the release of our covariance function code built on GPflow \citep{Matthews2017}. Such a convention would significantly increase our empirical knowledge of the phenomenon studied in this paper.

\begin{center}
\begin{table}[t]
\small
\begin{tabular}{lcccc}
\toprule
{\bf Dataset}  & {\bf Training points} &{\bf Dimensionality} & {\bf Learnt hyperparams} &{\bf Figure} \\
\midrule
Small regression & 3 & 1 & \xmark & \ref{fig:one_dimensional_example} \\
Small XOR & 4 & 2 & \xmark & \ref{fig:bias_two_dimensional} \\
Random & 10 & 4 & \xmark & \ref{fig:bias_four} \\
Snelson & 100 & 1 & \cmark & \ref{fig:snelson} \\
Smooth XOR & 100 & 2 &\cmark & \ref{fig:smooth_xor} \\
Yacht & 100 & 6 & \cmark & \ref{fig:yacht} \\
\bottomrule
\end{tabular}
\caption{\label{table:datasets} Summary of datasets used for Bayesian posterior comparison. }
\end{table}
\end{center}

\section{Proof of the main theorem}\label{section:proof}

Let us first sketch the proof we will follow in this section. We first show that, for a countable set of inputs, the infinite-dimensional convergence problem can be reduced to a set of one-dimensional problems based on finite linear projections. When we examine these one-dimensional projections, we find their structure involves a sum of terms we refer to as \emph{summands}. For fixed width functions the summands are exchangeable, which leads us to consider central limit theorems for exchangeable arrays. A result of \citet{Blum1958} 
plays an essential role and requires certain moment conditions, that we show by induction through the layers of the network, starting nearest the input. There is a slight complication around the correct scaling of the summands to map onto the exchangeable central limit theorem, but this can be resolved with care.

We already pointed out in Section \ref{section:measureConverge} that with a countable index set convergence with respect to the metric $\sequenceMetric$ is equivalent to convergence of each finite-dimensional marginal. The Cram{\' e}r-Wold device \citep{Cramer1939} {\cite[p.~383]{Billingsey86}} states that convergence of a sequence of finite-dimensional vectors to some limit is equivalent to convergence on all possible linear projections to the corresponding real-valued random variable.  Putting these two results together we obtain the following lemma.

\begin{lemma}[Convergence of finite linear projections]\label{lemma:finiteLinear}
Consider a sequence of random functions $\stochasticProcess_{\sequenceIndex}$ taking values in $\sequenceSpace$ each defined on a countable input set $\countableIndexSet$, with the sequence of functions indexed by $\sequenceIndex$. Let $\projectionIndeces \subseteq \countableIndexSet$ be a finite subset of the input set. Further, let $\projectionIndex \in \realLine^{\projectionIndeces}$. Then convergence in distribution of the sequence of random functions $\stochasticProcess_{\sequenceIndex}$ taking values in $\sequenceSpace$ to a limiting random function $\limitStochasticProcess$ with respect to the metric $\sequenceMetric$ is equivalent to weak convergence of $\sum_{u \in \projectionIndeces} \stochasticProcess_{\sequenceIndex}(u) \projectionIndex_{u}$ to the corresponding finite linear projection $\sum_{u \in \projectionIndeces} \stochasticProcess_{*}(u) \projectionIndex_{u}$ for every such $\projectionIndeces$ and $\projectionIndex$.
\end{lemma}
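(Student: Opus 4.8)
The plan is to obtain the stated equivalence by composing the two facts already invoked in the surrounding discussion: the equivalence, special to countable index sets, between weak convergence in the product topology and convergence of every finite-dimensional marginal \citep[p.~19]{billingsey68}, and the Cram\'er--Wold device \citep{Cramer1939} applied one marginal at a time. Since both ingredients are standard results, the substance of the argument lies in organising the quantifiers correctly and checking the measurability of the coordinate projections. Throughout I write $\Rightarrow$ for weak convergence.

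First I would fix notation. For a finite subset $\projectionIndeces \subseteq \countableIndexSet$, let $\projection_{\projectionIndeces} : \sequenceSpace \to \realLine^{\projectionIndeces}$ denote the coordinate projection onto the block $\projectionIndeces$. Because $\sequenceMetric$ metrises the product topology of $\sequenceSpace$, each $\projection_{\projectionIndeces}$ is continuous and hence Borel measurable, so the pushforwards $\projection_{\projectionIndeces}(\stochasticProcess_{\sequenceIndex})$ and $\projection_{\projectionIndeces}(\limitStochasticProcess)$ are well defined random vectors; these are precisely the finite-dimensional marginals of the two processes over $\projectionIndeces$.

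The argument then proceeds through two linked equivalences. In the first step I would show that $\stochasticProcess_{\sequenceIndex} \Rightarrow \limitStochasticProcess$ with respect to $\sequenceMetric$ holds if and only if $\projection_{\projectionIndeces}(\stochasticProcess_{\sequenceIndex}) \Rightarrow \projection_{\projectionIndeces}(\limitStochasticProcess)$ for every finite $\projectionIndeces$. The forward implication is immediate from the continuous mapping theorem, applied to each continuous $\projection_{\projectionIndeces}$; the converse is the genuinely nontrivial direction and is exactly the cited result \citep[p.~19]{billingsey68}, namely that for a countable index set convergence of all finite-dimensional marginals already forces weak convergence in the product topology. In the second step I would fix a finite $\projectionIndeces$ and apply the Cram\'er--Wold device to the $\realLine^{\projectionIndeces}$-valued vectors, obtaining that $\projection_{\projectionIndeces}(\stochasticProcess_{\sequenceIndex}) \Rightarrow \projection_{\projectionIndeces}(\limitStochasticProcess)$ if and only if $\sum_{u \in \projectionIndeces} \stochasticProcess_{\sequenceIndex}(u)\projectionIndex_{u} \Rightarrow \sum_{u \in \projectionIndeces} \limitStochasticProcess(u)\projectionIndex_{u}$ for every $\projectionIndex \in \realLine^{\projectionIndeces}$, since the scalar projection $\projectionIndex^{\top}\projection_{\projectionIndeces}(\stochasticProcess_{\sequenceIndex})$ is exactly $\sum_{u \in \projectionIndeces} \stochasticProcess_{\sequenceIndex}(u)\projectionIndex_{u}$.

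Concatenating the two equivalences and quantifying over all finite $\projectionIndeces \subseteq \countableIndexSet$ and all $\projectionIndex \in \realLine^{\projectionIndeces}$ then yields the claim. I expect no deep obstacle in this proof: the only substantive mathematical input is the converse half of the first step, which is the countable-index-set phenomenon cited above and which would fail for an uncountable index set under a finer topology (as noted in Section \ref{section:measureConverge}). The remaining care is purely bookkeeping, ensuring that the Cram\'er--Wold equivalence is read marginal-by-marginal so that the two ``for every'' clauses combine into the single quantification over pairs $(\projectionIndeces, \projectionIndex)$ appearing in the statement.
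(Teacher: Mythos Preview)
Your proposal is correct and mirrors the paper's own justification exactly: the paper does not give a formal proof but simply states that combining the countable-index-set equivalence between weak convergence in $\sequenceMetric$ and convergence of finite-dimensional marginals \cite[p.~19]{billingsey68} with the Cram\'er--Wold device yields the lemma. You have spelled out the same two-step composition with some additional bookkeeping (measurability of projections, continuous mapping for the forward direction), but the approach is identical.
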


Therefore our task is reduced to that of proving convergence of a sequence of real-valued random variables to another real-valued random variable -- a considerable simplification. In particular, we will leverage a theorem of \citet{Blum1958} on central limit theorems for exchangeable sequences. 

It will be convenient to consider a sequence of `infinite width, finite fan-out, networks'. By this we mean that the indices $\widthSymbol$ in the recursion \eqref{eq:recursion} can be thought of as running over all natural numbers instead of just up to $\width_{\depthSymbol}$ (hence infinite width). The limits of the sums in the recursion will retain the same finite values, which depend on the width functions evaluated at some $\couplingIndex$ (hence finite fan-out).  This makes only a superficial change because it adds extra copies of the same variables at each depth. For fixed $\couplingIndex$, these extra variables will not effect the downstream distribution of the network. The change is however useful in the book-keeping needed to prove convergence. We have defined a countable sequence of such networks because $\couplingIndex$ is a natural number. 

It will also be useful to slightly rewrite the defining initialisation and recursion \eqref{eq:recursion} from the more familiar form to one which is easier to manipulate:

\begin{align}\label{eq:initialRefined}
\indexedActivation{\smallestDepth}{\widthSymbol}{\inputSymbol} = \sum_{\widthSymbolB=1}^{\inputDimension} \indexedStandardNormal{\smallestDepth}{\widthSymbolPair} \inputSymbol_{\widthSymbolB}\sqrt{\weightVarianceScaled^{(\smallestDepth)}} + \indexedBias{\smallestDepth}{\widthSymbol} \, , \hspace{5pt} \widthSymbol \in \naturalNumbers \, ,
\end{align}

\noindent and:

\begin{align}\label{eq:recursionRefined}
\activity &= \nonlinearity( \indexedActivation{\depthSymbol}{\widthSymbol}{\inputSymbol} ) \, ,  \hspace{5 pt} \\
\indexedActivation{\depthSymbol+1}{\widthSymbol}{\inputSymbol}
 &= \frac{1}{\sqrt{\widthFunction_\depthSymbol(\sequenceVariable)}}\sum_{\widthSymbolB=1}^{\widthFunction_\depthSymbol(\sequenceVariable)} \indexedStandardNormal{\depthSymbol+1}{\widthSymbol, \widthSymbolB} \indexedActivity{\depthSymbol}{\widthSymbolB}{\inputSymbol}\sqrt{\weightVarianceScaled^{(\depthSymbol+1)}} + \indexedBias{\depthSymbol+1}{\widthSymbol}  \, , \hspace{5pt} \widthSymbol \in \naturalNumbers \, ,
\end{align}

\noindent where:

\begin{align}
\indexedStandardNormal{\depthSymbol}{\widthSymbol,\widthSymbolB} \sim \normal(0,1) \hspace{7 pt} \text{i.i.d} \hspace{3 pt} \forall \depthSymbol, \widthSymbol, \widthSymbolB \, .
\end{align}

\noindent This amounts to reparameterising the weights in terms of standard normals and making the previously mentioned infinite extension of the width variable $\widthSymbol$. We re-emphasize that neither step changes the distribution over final function values. With the aim of mapping onto Lemma \ref{lemma:finiteLinear} we make the following definitions:

\begin{definition}[Projections and summands] The projections are defined in terms of a finite linear projection of the function values without biases:

\begin{align}
\projection^{(\depthSymbol)}(\projectionIndeces,\projectionCoefficients)\atWidth = \sum_{(\datapoint,\widthSymbol) \in \projectionIndeces} \projectionCoefficients^{(\datapoint,\widthSymbol)}\left[\indexedActivation{\depthSymbol}{\widthSymbol}{\inputSymbol}\atWidth - \indexedBias{\depthSymbol}{\widthSymbol}\right]\equationFullStop
\end{align}

\noindent where $\projectionIndeces \subset \indexSet \times \naturalNumbers$ is a finite set of tuples of data points and indices of pre-nonlinearities, with $\indexSet = (\inputSymbol[i])_{i=1}^\infty$. $\projectionCoefficients \in \realLine^{|\projectionIndeces|}$ is a vector parameterising the linear projection. The suffix $\atWidth$ indicates that the corresponding width functions are instantiated with input $\couplingIndex$.

The summands are defined as:

\begin{align}\summand^{(\depthSymbol)}_{\widthSymbolB}(\projectionIndeces,\projectionCoefficients)\atWidth := \sum_{(\datapoint,\widthSymbol) \in \projectionIndeces} \projectionCoefficients^{(\datapoint,\widthSymbol)} \indexedStandardNormal{\depthSymbol}{\widthSymbol, \widthSymbolB} \indexedActivity{\depthSymbol-1}{\widthSymbolB}{\inputSymbol}\atWidth\sqrt{\weightVarianceScaled^{(\depthSymbol)}} \, ,
\end{align}
 
\noindent in order to ensure the summation relation

\begin{align}\label{eq:projection_as_array}
\projection^{(\depthSymbol)}(\projectionIndeces,\projectionCoefficients)\atWidth := \frac{1}{\sqrt{\widthFunction_{\depthSymbol-1}(n)}}\sum_{\widthSymbolB=1}^{\widthFunction_{\depthSymbol-1}(n)}\summand^{(\depthSymbol)}_{\widthSymbolB}(\projectionIndeces,\projectionCoefficients)\atWidth\, .
\end{align}

\end{definition}

The last relation follows from applying the definitions and re-arranging the order of summation. Note the similarity between the definition of projections used here and in Lemma \ref{lemma:finiteLinear}.  We next show that the summands are exchangeable.

\begin{lemma}[Exchangeability of summands]\label{lemma:exchange}
For each fixed $\sequenceVariable$ and $\depthSymbol \in \{2,\ldots,D+1\}$, the countable sequence of summands $\summand^{(\depthSymbol)}_{\widthSymbolB}(\projectionIndeces,\projectionCoefficients)\atWidth$ are an exchangeable sequence with respect to the index $\widthSymbolB$.
\end{lemma}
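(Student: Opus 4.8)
The plan is to realise each summand $\summand^{(\depthSymbol)}_{\widthSymbolB}(\projectionIndeces,\projectionCoefficients)\atWidth$ as the image, under a single fixed function applied identically for every $\widthSymbolB$, of a pair of ``per-unit'' random objects, and then to prove that the sequence of these pairs is exchangeable in $\widthSymbolB$. Reading off the definition, the summand depends on $\widthSymbolB$ only through two things: the entries $\indexedStandardNormal{\depthSymbol}{\widthSymbol,\widthSymbolB}$ of column $\widthSymbolB$ of the depth-$\depthSymbol$ weight matrix, for the finitely many row indices $\widthSymbol$ occurring in $\projectionIndeces$, and the activities $\indexedActivity{\depthSymbol-1}{\widthSymbolB}{\inputSymbol}\atWidth$ of unit $\widthSymbolB$ at depth $\depthSymbol-1$, for the finitely many inputs $\inputSymbol$ occurring in $\projectionIndeces$. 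Writing $W_{\widthSymbolB}$ and $G_{\widthSymbolB}$ for these two (finite) collections, and noting that $\projectionCoefficients$ and $\sqrt{\weightVarianceScaled^{(\depthSymbol)}}$ are constants, we have $\summand^{(\depthSymbol)}_{\widthSymbolB}(\projectionIndeces,\projectionCoefficients)\atWidth = \Gamma(W_{\widthSymbolB},G_{\widthSymbolB})$ for a single continuous (hence measurable) $\Gamma$ that does not depend on $\widthSymbolB$. Since applying a common measurable map coordinatewise sends an exchangeable sequence to an exchangeable sequence, it suffices to show that $(W_{\widthSymbolB},G_{\widthSymbolB})_{\widthSymbolB}$ is exchangeable.

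Next I would treat the two components separately. The weight-column entries $W_{\widthSymbolB}$ are i.i.d.\ standard normals, independent across $\widthSymbolB$ and jointly independent of all weights and biases at depths strictly below $\depthSymbol$; in particular the family $(W_{\widthSymbolB})_{\widthSymbolB}$ is independent of the family $(G_{\widthSymbolB})_{\widthSymbolB}$. The crux is the structure of $(G_{\widthSymbolB})_{\widthSymbolB}$. Each activity is $\indexedActivity{\depthSymbol-1}{\widthSymbolB}{\inputSymbol}\atWidth = \nonlinearity(\indexedActivation{\depthSymbol-1}{\widthSymbolB}{\inputSymbol}\atWidth)$, and by the recursion \eqref{eq:recursionRefined} the activation $\indexedActivation{\depthSymbol-1}{\widthSymbolB}{\inputSymbol}\atWidth$ is built from row $\widthSymbolB$ of the depth-$(\depthSymbol-1)$ weights together with the bias $\indexedBias{\depthSymbol-1}{\widthSymbolB}$ acting on the depth-$(\depthSymbol-2)$ activities, which are \emph{shared} across all $\widthSymbolB$. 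Conditioning on the $\sigma$-algebra generated by those shared depth-$(\depthSymbol-2)$ activities, the per-unit weight rows and biases are i.i.d.\ across $\widthSymbolB$ and independent of the conditioning, so the activations, and hence the activities, are conditionally i.i.d.\ across $\widthSymbolB$. For the base case $\depthSymbol=2$ the activation $\indexedActivation{\smallestDepth}{\widthSymbolB}{\inputSymbol}$ is formed directly from the deterministic inputs via \eqref{eq:initialRefined}, so $(G_{\widthSymbolB})_{\widthSymbolB}$ is unconditionally i.i.d. In either case, conditional identical distribution and conditional independence yield exchangeability of $(G_{\widthSymbolB})_{\widthSymbolB}$.

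To finish I would assemble the paired sequence. Since $(W_{\widthSymbolB})_{\widthSymbolB}$ is i.i.d.\ (hence exchangeable), $(G_{\widthSymbolB})_{\widthSymbolB}$ is exchangeable, and the two sequences are mutually independent, their ``zip'' $(W_{\widthSymbolB},G_{\widthSymbolB})_{\widthSymbolB}$ is exchangeable: for any finite permutation $\pi$ the simultaneous permutation commutes with the zipping map, and permuting two independent sequences that are each exchangeable leaves the law of the pair of sequences unchanged, so the permuted zip has the same law as the original. Applying $\Gamma$ coordinatewise then gives exchangeability of the summands. I expect the main difficulty to be conceptual rather than computational: one must resist asserting that the summands are independent across $\widthSymbolB$ -- they are not, since every unit at depth $\depthSymbol-1$ is driven by the common depth-$(\depthSymbol-2)$ activities -- and instead identify precisely the conditional-i.i.d.\ structure given that shared layer. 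This is exactly the property needed, since it is exchangeability rather than independence that will feed into the \citet{Blum1958} central limit theorem for exchangeable sequences used later in the proof.
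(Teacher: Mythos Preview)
Your proposal is correct and uses the same key idea as the paper: the summands are conditionally i.i.d.\ given the depth-$(\depthSymbol-2)$ activities $\{\indexedActivity{\depthSymbol-2}{\widthSymbolC}{\inputSymbol}\atWidth\}$, and exchangeability then follows from de Finetti. The paper's proof is simply a more compressed version---it applies de Finetti directly to the summands in one step, rather than first splitting into the per-unit pieces $(W_{\widthSymbolB},G_{\widthSymbolB})$, establishing exchangeability of each component, and then invoking your zipping argument; your route is slightly longer but equally valid.
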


\begin{proof}
To prove the lemma we use de Finetti's theorem, which states that a sequence of random variables is exchangeable if and only if they are i.i.d. conditional on some set of random variables. It is therefore sufficient to exhibit such as set of random variables. To do this we apply the recursion. Removing some multiplicative constants we have:
\begin{align}
\summand^{(\depthSymbol)}_{\widthSymbolB}(\projectionIndeces,\projectionCoefficients)\atWidth &\propto \sum_{(\datapoint,\widthSymbol) \in \projectionIndeces} \projectionCoefficients^{(\datapoint,\widthSymbol)} \indexedStandardNormal{\depthSymbol}{\widthSymbol, \widthSymbolB} \indexedActivity{\depthSymbol-1}{\widthSymbolB}{\inputSymbol}\atWidth \\
&=\sum_{(\datapoint,\widthSymbol) \in \projectionIndeces} \projectionCoefficients^{(\datapoint,\widthSymbol)} \indexedStandardNormal{\depthSymbol}{\widthSymbol, \widthSymbolB} \nonlinearity\left( \frac{1}{\sqrt{\widthFunction_{\depthSymbol-2}(n)}}\sum_{\widthSymbolB=1}^{\widthFunction_{\depthSymbol-2}(n)}\indexedStandardNormal{\depthSymbol-1}{\widthSymbolB,\widthSymbolC} \indexedActivity{\depthSymbol-2}{\widthSymbolC}{\inputSymbol}\atWidth \sqrt{\weightVarianceScaled^{(\depthSymbol-1)}} + \indexedBias{\depthSymbol-1}{\widthSymbolB} \right) \, ,
\end{align}

\noindent with the convention that $h_0(n) = M$ and $g^{(0)}_k(x) = x_k$ for $k=1,\ldots,M$. Conditional on the finite set of random variables $\left \lbrace \indexedActivity{\depthSymbol-2}{\widthSymbolC}{\inputSymbol}\atWidth : \widthSymbolC=1,...,\width_{\depthSymbol-2}, \inputSymbol \in \projectionIndeces_{\indexSet} \right\rbrace$ (where $\projectionIndeces_{\indexSet}$ is the set of inputs points in $\projectionIndeces$), the summands are independent and identically distributed. 

\end{proof}

Thus we are led to consider central limit theorems for sequences of exchangeable sequences. The work of \citet{Blum1958} will provide our starting point.

\begin{theorem}[CLT for exchangeable sequences \citep{Blum1958}]\label{theorem:blum}
For each positive integer $\rowIndex$ let $\left( \genericRV_{\rowIndex, \colIndex} ; \colIndex = 1,2,... \right)$ be an infinitely exchangeable process with mean zero, variance one, and finite absolute third moment. Define

\begin{equation}
\generalSum_{\rowIndex} = \frac{1}{\sqrt{\rowIndex}}\sum_{\colIndex=1}^{\rowIndex} \genericRV_{\rowIndex,\colIndex} \equationFullStop
\end{equation}

\noindent Then if the following conditions hold:

\begin{enumerate}
\item $\expectation{\genericRV_{\rowIndex, 1} \genericRV_{\rowIndex, 2}}{\rowIndex} = \littleO(\frac{1}{\rowIndex}) $
\item $ \lim_{\rowIndex \to \infty }\expectation{\genericRV_{\rowIndex, 1}^{2} \genericRV_{\rowIndex, 2}^{2}}{\rowIndex} = 1 $
\item $ \expectation{|\genericRV_{\rowIndex, 1}|^{3}}{\rowIndex} = \littleO(\sqrt{\rowIndex}) $
\end{enumerate}

Then $\generalSum_{\rowIndex}$ converges in distribution to a standard normal. 

\end{theorem}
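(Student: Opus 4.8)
The plan is to reduce to the conditionally i.i.d.\ case via de Finetti's theorem, exactly as in the proof of Lemma \ref{lemma:exchange}, and then run a third-order characteristic-function argument. For each fixed $\rowIndex$ the infinite exchangeability of $(\genericRV_{\rowIndex,\colIndex})_{\colIndex}$ yields a directing random measure $\Theta_{\rowIndex}$ such that, conditional on $\Theta_{\rowIndex}$, the $\genericRV_{\rowIndex,\colIndex}$ are i.i.d. I would introduce the conditional moments $a_{\rowIndex} := \mathbb{E}[\genericRV_{\rowIndex,1}\mid\Theta_{\rowIndex}]$, $b_{\rowIndex} := \mathbb{E}[\genericRV_{\rowIndex,1}^{2}\mid\Theta_{\rowIndex}]$ and $c_{\rowIndex} := \mathbb{E}[|\genericRV_{\rowIndex,1}|^{3}\mid\Theta_{\rowIndex}]$, which are finite almost surely by the third-moment hypothesis. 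Conditional independence then turns the three conditions into statements about these variables. Since $\mathbb{E}[\genericRV_{\rowIndex,1}\genericRV_{\rowIndex,2}]=\mathbb{E}[a_{\rowIndex}^{2}]$, Condition 1 reads $\mathbb{E}[a_{\rowIndex}^{2}]=\littleO(1/\rowIndex)$, so $\sqrt{\rowIndex}\,a_{\rowIndex}\to 0$ in $L^{2}$; since $\mathbb{E}[\genericRV_{\rowIndex,1}^{2}\genericRV_{\rowIndex,2}^{2}]=\mathbb{E}[b_{\rowIndex}^{2}]$ while $\mathbb{E}[b_{\rowIndex}]=1$ (unit variance), Condition 2 forces $\Var(b_{\rowIndex})\to 0$, i.e.\ $b_{\rowIndex}\to 1$ in probability; and Condition 3 gives $\mathbb{E}[c_{\rowIndex}]=\littleO(\sqrt{\rowIndex})$.

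The core step is to compute the conditional characteristic function of $\generalSum_{\rowIndex}$. By conditional independence,
\begin{equation}
\mathbb{E}\big[e^{it\generalSum_{\rowIndex}}\mid\Theta_{\rowIndex}\big] = \big(\varphi_{\rowIndex}(t/\sqrt{\rowIndex})\big)^{\rowIndex}, \qquad \varphi_{\rowIndex}(s):=\mathbb{E}\big[e^{is\genericRV_{\rowIndex,1}}\mid\Theta_{\rowIndex}\big].
\end{equation}
A third-order Taylor expansion using $|e^{iu}-1-iu+u^{2}/2|\le |u|^{3}/6$ gives, with $s=t/\sqrt{\rowIndex}$,
\begin{equation}
\varphi_{\rowIndex}(t/\sqrt{\rowIndex}) = 1 + \frac{it\,a_{\rowIndex}}{\sqrt{\rowIndex}} - \frac{t^{2}b_{\rowIndex}}{2\rowIndex} + \rho_{\rowIndex}, \qquad |\rho_{\rowIndex}|\le \frac{|t|^{3}c_{\rowIndex}}{6\rowIndex^{3/2}}.
\end{equation}
Writing $z_{\rowIndex}:=\varphi_{\rowIndex}(t/\sqrt{\rowIndex})-1$, I would show $\rowIndex\,z_{\rowIndex}\to -t^{2}/2$ in probability: the term $it\sqrt{\rowIndex}\,a_{\rowIndex}$ vanishes because $\sqrt{\rowIndex}a_{\rowIndex}\to 0$ in $L^{2}$, the term $-t^{2}b_{\rowIndex}/2\to -t^{2}/2$ because $b_{\rowIndex}\to 1$, and $\rowIndex\rho_{\rowIndex}\to 0$ in $L^{1}$ because $\mathbb{E}[\rowIndex|\rho_{\rowIndex}|]\le \tfrac{|t|^{3}}{6\sqrt{\rowIndex}}\mathbb{E}[c_{\rowIndex}]=\littleO(1)$.

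To pass from $\rowIndex z_{\rowIndex}\to -t^{2}/2$ to $(1+z_{\rowIndex})^{\rowIndex}\to e^{-t^{2}/2}$ I would invoke the elementary complex lemma that $(1+z_{\rowIndex})^{\rowIndex}\to e^{\zeta}$ whenever $\rowIndex z_{\rowIndex}\to\zeta$ and $\rowIndex|z_{\rowIndex}|^{2}\to 0$; the second requirement follows from the same estimates, since each contribution to $\rowIndex|z_{\rowIndex}|^{2}$ is a product of factors already shown to vanish or to be bounded in probability. Hence $\mathbb{E}[e^{it\generalSum_{\rowIndex}}\mid\Theta_{\rowIndex}]\to e^{-t^{2}/2}$ in probability, and as these conditional characteristic functions are bounded by $1$, bounded convergence gives $\mathbb{E}[e^{it\generalSum_{\rowIndex}}]\to e^{-t^{2}/2}$ for every $t$. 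L\'evy's continuity theorem then yields $\generalSum_{\rowIndex}\to\normal(0,1)$ in distribution.

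I expect the main obstacle to be controlling the remainder under the weak, third-moment-only hypotheses: because no fourth moment is assumed, a method-of-moments proof is unavailable (the diagonal and mixed partitions cannot be dominated), and one must instead manage the random remainder $\rho_{\rowIndex}$ and the quadratic defect $\rowIndex|z_{\rowIndex}|^{2}$ purely through the conditional third moment $c_{\rowIndex}$ together with the $L^{2}$ concentration of $a_{\rowIndex}$ and $b_{\rowIndex}$. Additional care is needed because the directing measure $\Theta_{\rowIndex}$ varies with $\rowIndex$, so all the limits above are genuinely triangular-array statements that must be proved in probability before $\Theta_{\rowIndex}$ is averaged out.
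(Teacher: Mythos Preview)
The paper does not prove Theorem~\ref{theorem:blum}; it is quoted from \citet{Blum1958} and used as a black box to derive Lemmas~\ref{lemma:gclta} and~\ref{lemma:gcltb}. So there is no ``paper's own proof'' to compare against.

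That said, your proposal is correct and is in fact the standard route, essentially the one taken in the original Blum--Chernoff--Rosenblatt--Teicher argument: use de Finetti to pass to the conditionally i.i.d.\ picture, translate the three hypotheses into $L^2$/probability statements about the conditional first three moments $a_n,b_n,c_n$, and then run a third-order Taylor expansion of the conditional characteristic function. Your reading of the hypotheses is right: Condition~1 gives $\sqrt{n}\,a_n\to 0$ in $L^2$, Condition~2 together with $\expectation{b_n}{}=1$ gives $b_n\to 1$ in $L^2$ (hence in probability), and Condition~3 controls the remainder in $L^1$. The passage from $n z_n\to -t^2/2$ in probability to $(1+z_n)^n\to e^{-t^2/2}$ in probability is justified by the subsequence characterisation of convergence in probability combined with the deterministic fact that $n z_n\to\zeta$ implies $(1+z_n)^n\to e^{\zeta}$; your extra condition $n|z_n|^2\to 0$ is not strictly needed for this but does no harm. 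Bounded convergence then transfers the limit to the unconditional characteristic function.

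One small point worth tightening in a full write-up: when you bound $n|\rho_n|^2$, you only have $\expectation{c_n}{}=\littleO(\sqrt{n})$ and no control on $\expectation{c_n^2}{}$, so you should argue via $n|\rho_n|^2=(n|\rho_n|)\cdot|\rho_n|$ with both factors tending to $0$ in probability, rather than trying to bound it in $L^1$.
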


This is effectively a generalisation of the classical CLT from independent identically distributed variables to the more general class of exchangeable ones. We will need to address the fact that the theorem applies to unit variance variables and that we have non-identity width functions. The next lemma adapts the work of Blum et al. to our specific requirements.

\begin{lemma}[Adapted CLT for sequences of exchangeable sequences]\label{lemma:gcltb}
For each positive integer $\rowIndex$ let $\left( \genericRV_{\rowIndex, \colIndex} ; \colIndex = 1,2,... \right)$ be an infinitely exchangeable process with mean zero, finite variance $\rowVariance$, and finite absolute third moment.  Suppose also that the variance has a limit $\lim_{\rowIndex \to \infty}\rowVariance = \limitVariance$. Define

\begin{equation}
\generalSum_{\rowIndex} = \frac{1}{\sqrt{\monotoneCLTfunc(\rowIndex)}}\sum_{\colIndex=1}^{\monotoneCLTfunc(\rowIndex)} \genericRV_{\rowIndex,\colIndex} \, , 
\end{equation}

\noindent where $\monotoneCLTfunc : \naturalNumbers \mapsto \naturalNumbers$ is a strictly increasing function. Then if the following conditions hold:

\begin{enumerate}[a)]
\item $\expectation{\genericRV_{\rowIndex, 1} \genericRV_{\rowIndex, 2}}{\rowIndex} = 0 $
\item $ \lim_{\rowIndex \to \infty }\expectation{\genericRV_{\rowIndex, 1}^{2} \genericRV_{\rowIndex, 2}^{2}}{\rowIndex} = \limitStd^{4} $
\item $ \expectation{|\genericRV_{\rowIndex, 1}|^{3}}{\rowIndex} = \littleO(\sqrt{\monotoneCLTfunc(\rowIndex)}) $
\end{enumerate}

Then $\generalSum_{\rowIndex}$ converges in distribution to $\normal(0,\limitVariance)$, where $\normal(0,0)$ is interpreted as converging to $0$. 

\end{lemma}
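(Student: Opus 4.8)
The plan is to derive the result from the exchangeable central limit theorem of \citet{Blum1958} (Theorem~\ref{theorem:blum}) in three moves: isolate the degenerate case $\sigma_*^2 = 0$, rescale the array to unit variance, and reindex so that the number of summands per row equals the row index demanded by Blum's formulation.

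First I would compute the mean and variance of $S_n$ exactly, which works uniformly in all cases. Exchangeability gives $\mathrm{Var}(X_{n,i}) = \sigma_n^2$ for every $i$, and hypothesis~(a) together with the zero mean gives $\mathrm{Cov}(X_{n,i}, X_{n,j}) = \mathbb{E}[X_{n,1} X_{n,2}] = 0$ for $i \neq j$. Hence
\[
\mathbb{E}[S_n] = 0, \qquad \mathrm{Var}(S_n) = \frac{1}{h(n)}\sum_{i=1}^{h(n)} \sigma_n^2 = \sigma_n^2 \longrightarrow \sigma_*^2 .
\]
This identifies the only admissible Gaussian limit as $\mathcal{N}(0, \sigma_*^2)$ and settles the case $\sigma_*^2 = 0$ at once: there $S_n \to 0$ in $L^2$, hence in probability, hence in distribution to the constant $0$, which is exactly the stated reading of $\mathcal{N}(0,0)$. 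From here on I assume $\sigma_*^2 > 0$, so that $\sigma_n > 0$ for all large $n$ — all that is relevant for a distributional limit.

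Next I would normalise. Put $Y_{n,i} = X_{n,i}/\sigma_n$ for $n$ large; scaling by a deterministic constant preserves exchangeability, so each row $(Y_{n,i})_i$ is infinitely exchangeable with mean zero, unit variance, and finite third absolute moment, and $S_n = \sigma_n T_n$ with $T_n = h(n)^{-1/2} \sum_{i=1}^{h(n)} Y_{n,i}$. Once $T_n$ is shown to converge in distribution to $\mathcal{N}(0,1)$, the convergence $\sigma_n \to \sigma_*$ of deterministic constants lets Slutsky's theorem conclude that $S_n = \sigma_n T_n$ converges in distribution to $\sigma_* \mathcal{N}(0,1) = \mathcal{N}(0, \sigma_*^2)$.

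The remaining and most delicate step is to extract $T_n \Rightarrow \mathcal{N}(0,1)$ from Theorem~\ref{theorem:blum}, whose hypotheses are phrased for an array in which row $m$ has exactly $m$ summands, whereas here row $n$ has $h(n)$ summands. Since $h$ is strictly increasing it is a bijection onto its image $h(\mathbb{N}) \subseteq \mathbb{N}$; I would set the completed array row $Z_{m,\cdot} := Y_{h^{-1}(m),\cdot}$ for $m \in h(\mathbb{N})$ and fill each missing row $m \notin h(\mathbb{N})$ with an independent i.i.d.\ standard normal sequence, so that $m^{-1/2}\sum_{i=1}^m Z_{m,i} = T_{h^{-1}(m)}$ on the relevant subsequence. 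I would then check Blum's three conditions for the completed array as $m \to \infty$: (1) holds since $\mathbb{E}[Z_{m,1}Z_{m,2}] \equiv 0$ (by hypothesis~(a) on the genuine rows and independence on the filler rows), which is $o(1/m)$; (2) holds since on the genuine rows $\mathbb{E}[Z_{m,1}^2 Z_{m,2}^2] = \mathbb{E}[X_{n,1}^2 X_{n,2}^2]/\sigma_n^4 \to 1$ by hypothesis~(b) and $\sigma_n^4 \to \sigma_*^4 > 0$, while the filler rows give exactly $1$; and (3) holds since $\mathbb{E}[|Z_{m,1}|^3] = \mathbb{E}[|X_{n,1}|^3]/\sigma_n^3 = o(\sqrt{h(n)}) = o(\sqrt{m})$ by hypothesis~(c), using that $\sigma_n^3$ is bounded away from $0$, and the filler rows contribute a constant, also $o(\sqrt m)$. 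Blum's theorem then gives $m^{-1/2}\sum_{i=1}^m Z_{m,i}$ converging in distribution to $\mathcal{N}(0,1)$, and restricting to $m = h(n) \to \infty$ yields $T_n \Rightarrow \mathcal{N}(0,1)$, finishing the argument. I expect the bookkeeping of this reindexing — in particular making sure each of Blum's asymptotic conditions survives the division by $\sigma_n^3$ and the interleaving of genuine and filler rows — to be the main point requiring care, the rest being routine.
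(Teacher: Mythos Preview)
Your proposal is correct and follows essentially the same strategy as the paper: handle the degenerate case $\sigma_*^2 = 0$ by a second-moment argument, normalise the rows to unit variance, embed the non-triangular array into a triangular one by padding the missing row indices with i.i.d.\ standard normals, verify Blum's three conditions on the padded array, and then pass to the subsequence $m = h(n)$ before undoing the normalisation via Slutsky. The paper factors this through an intermediate lemma (the purely triangular, variance-adapted case, Lemma~\ref{lemma:gclta}) and only afterwards performs the padding, whereas you collapse both stages into a single argument; the content and the delicate points you flag---checking that division by $\sigma_n^3$ preserves condition~(c) and that the interleaved filler rows do not disturb conditions (1)--(3)---are exactly those the paper addresses.
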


We postpone the proof of Lemma \ref{lemma:gcltb} until Appendix \ref{section:CLTproof}. Our next step will be to apply Lemma \ref{lemma:gcltb} to the projections and summands by showing they meet each condition. We first establish the existence of a limiting variance. 

\begin{lemma}[Limiting variance]\label{lemma:limitVar}
The limiting variance, defined as

\begin{align}
\limitVar := \lim_{\sequenceVariable \to \infty}\projectionVar \, ,
\end{align}

\noindent exists, where $\projectionVar$ is the variance of the random variables $\summand^{(\depthSymbol)}_{\widthSymbolB}(\projectionIndeces,\projectionCoefficients)\atWidth$, and has the value

\begin{align}
\limitVar = \alpha^{T} \genericCov(\projectionIndeces) \alpha \, ,
\end{align}

\noindent where $\genericCov \in \realLine^{\projectionIndeces \times \projectionIndeces}$ is the Gram matrix implied by the recursion \ref{lemma:recursion} without a bias correction on the final layer.

\end{lemma}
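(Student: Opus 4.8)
The plan is to evaluate the variance $\projectionVar$ of a single summand exactly at finite $\sequenceVariable$, recognise it as a quadratic form in $\projectionCoefficients$ whose matrix is a finite-width Gram matrix, and then pass to the limit entry by entry. I would argue at a fixed depth $\depthSymbol$ and treat the statement as the inductive step of the layerwise induction: I assume that at depth $\depthSymbol-1$ the joint law of any finite collection of activations converges weakly to the Gaussian law prescribed by Lemma \ref{lemma:recursion}, and that the activations carry uniformly (in $\sequenceVariable$) bounded moments of some order strictly greater than two. The base case $\depthSymbol=\secondSmallestDepth$ is immediate: $\indexedActivation{\smallestDepth}{\widthSymbol}{\inputSymbol}$ is an exact Gaussian whose law does not depend on $\sequenceVariable$, so $\projectionVar$ is then constant in $\sequenceVariable$ and already equal to $\limitVar$.

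First I would record that each summand has mean zero, since the weights $\indexedStandardNormal{\depthSymbol}{\widthSymbol, \widthSymbolB}$ are zero-mean and independent of the depth-$(\depthSymbol-1)$ activities. Next I would compute the variance by squaring and taking expectations, the only subtlety being the independence bookkeeping already exposed in Lemma \ref{lemma:exchange}. Because $\indexedStandardNormal{\depthSymbol}{\widthSymbol, \widthSymbolB}$ is independent of every quantity of depth below $\depthSymbol$, the weight and activity factors decouple, and distinct standard normals satisfy $\expectation{\indexedStandardNormal{\depthSymbol}{\widthSymbol, \widthSymbolB}\indexedStandardNormal{\depthSymbol}{\widthSymbol', \widthSymbolB}}{} = \kron_{\widthSymbol,\widthSymbol'}$. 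Every cross term with $\widthSymbol \neq \widthSymbol'$ therefore vanishes, and one is left with the quadratic form $\projectionVar = \projectionCoefficients^{T}\genericCov(\projectionIndeces)\atWidth\projectionCoefficients$, where the finite-width Gram matrix has entries $[\genericCov(\projectionIndeces)\atWidth]_{(\inputSymbol,\widthSymbol),(\inputSymbolB,\widthSymbol')} = \kron_{\widthSymbol,\widthSymbol'}\,\weightVarianceScaled^{(\depthSymbol)}\,\expectation{\nonlinearity(\indexedActivation{\depthSymbol-1}{\widthSymbolB}{\inputSymbol}\atWidth)\,\nonlinearity(\indexedActivation{\depthSymbol-1}{\widthSymbolB}{\inputSymbolB}\atWidth)}{}$, using $\indexedActivity{\depthSymbol-1}{\widthSymbolB}{\inputSymbol} = \nonlinearity(\indexedActivation{\depthSymbol-1}{\widthSymbolB}{\inputSymbol})$ and exchangeability (the entries do not depend on the chosen index $\widthSymbolB$).

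The crux is to show each entry converges, as $\sequenceVariable \to \infty$, to the matching entry of $\genericCov(\projectionIndeces)$, namely $\kron_{\widthSymbol,\widthSymbol'}$ times $\weightVarianceScaled^{(\depthSymbol)}$ times the value given by the recursion of Lemma \ref{lemma:recursion} with the final-layer bias term $\biasVarianceDepth$ omitted. By the inductive hypothesis the pair $(\indexedActivation{\depthSymbol-1}{\widthSymbolB}{\inputSymbol}\atWidth, \indexedActivation{\depthSymbol-1}{\widthSymbolB}{\inputSymbolB}\atWidth)$ converges weakly to the appropriate centred bivariate Gaussian, and $(u,v) \mapsto \nonlinearity(u)\nonlinearity(v)$ is continuous since $\nonlinearity$ is. Weak convergence alone does not transfer to the product moment, so one must supply uniform integrability, and this is exactly where Condition \ref{definition:envelope} is used: $|\nonlinearity(u)\nonlinearity(v)| \leq (\envelopeconstant+\envelopegradient|u|)(\envelopeconstant+\envelopegradient|v|)$ is a polynomial of degree two in the activations, so by Cauchy--Schwarz the family is uniformly integrable once the activations have uniformly bounded moments of order $2+\delta$ for some $\delta>0$ -- precisely the moment control maintained by the induction. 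One may then interchange limit and expectation, giving $\genericCov(\projectionIndeces)\atWidth \to \genericCov(\projectionIndeces)$ entrywise; since $\projectionCoefficients$ is fixed and the matrices are finite-dimensional, $\projectionVar = \projectionCoefficients^{T}\genericCov(\projectionIndeces)\atWidth\projectionCoefficients \to \projectionCoefficients^{T}\genericCov(\projectionIndeces)\projectionCoefficients = \limitVar$, which both establishes existence of the limit and identifies its value.

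The main obstacle is this uniform-integrability step: the linear envelope bound together with the inductively maintained higher-moment bounds are what license passing the limit through the expectation of a product of (unbounded) nonlinearities, a transfer that weak convergence by itself does not guarantee. Everything else is the exact variance computation, which is routine once the independence structure of Lemma \ref{lemma:exchange} is in hand.
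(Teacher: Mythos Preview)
Your proposal is correct and follows essentially the same route as the paper: express $\projectionVar$ as the quadratic form $\projectionCoefficients^{T}\genericCov(\projectionIndeces)\atWidth\projectionCoefficients$ via the independence of the $\indexedStandardNormal{\depthSymbol}{\widthSymbol,\widthSymbolB}$ from the previous-layer activities, then pass to the limit entrywise using the continuous mapping theorem together with uniform integrability of $\nonlinearity(\indexedActivation{\depthSymbol-1}{\widthSymbolB}{\inputSymbol})\nonlinearity(\indexedActivation{\depthSymbol-1}{\widthSymbolB}{\inputSymbolB})$ supplied by the linear envelope and the inductively maintained moment bounds. The only cosmetic difference is that the paper packages the uniform-integrability step into a separate lemma controlling eighth moments of the pre-nonlinearities (and hence of four-way products of post-nonlinearities), whereas you invoke a generic $2+\delta$ moment bound; the logical content is the same.
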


\noindent The proof of this Lemma can be found in Appendix \ref{section:remainingLemmas}.

\begin{lemma}[Convergence in distribution of projections]\label{lemma:standardProjection}
As $\sequenceVariable \to \infty$ the projection $\projection^{(\depthSymbol)}(\projectionIndeces,\projectionCoefficients)\atWidth$ converges in distribution to $\normal(0,\limitVar)$. 
\end{lemma}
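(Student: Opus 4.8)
The plan is to obtain the result as a direct application of the adapted exchangeable central limit theorem, Lemma~\ref{lemma:gcltb}, to the summands. Reading off the summation relation~\eqref{eq:projection_as_array}, I identify the exchangeable array $\genericRV_{\sequenceVariable,\widthSymbolB}$ with the summands $\summand^{(\depthSymbol)}_{\widthSymbolB}(\projectionIndeces,\projectionCoefficients)\atWidth$, the monotone function $\monotoneCLTfunc(\sequenceVariable)$ with the width function $\widthFunction_{\depthSymbol-1}(\sequenceVariable)$, and the target variance $\limitVariance$ with $\limitVar$, so that the normalised sum $\generalSum_{\sequenceVariable}$ of the lemma is exactly $\projection^{(\depthSymbol)}(\projectionIndeces,\projectionCoefficients)\atWidth$. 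Several hypotheses are then supplied by earlier results: exchangeability in $\widthSymbolB$ is Lemma~\ref{lemma:exchange}, strict monotonicity of $\widthFunction_{\depthSymbol-1}$ is assumed in Theorem~\ref{thm:main}, and the existence and value of the limiting variance are Lemma~\ref{lemma:limitVar}. It is precisely to accommodate the non-unit variance $\projectionVar$ of the summands and the non-identity width function that I invoke the adapted Lemma~\ref{lemma:gcltb} rather than Theorem~\ref{theorem:blum} directly. It remains to verify mean zero, finiteness of the second and third absolute moments, and the three numbered conditions.

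Mean zero, condition (a) and condition (c) are the routine ones. Each summand carries a factor $\indexedStandardNormal{\depthSymbol}{\widthSymbol,\widthSymbolB}\sim\normal(0,1)$ that is independent of the depth-$(\depthSymbol-1)$ activities and i.i.d.\ across the fan-out index $\widthSymbolB$; taking expectations factorises this mean-zero factor out, giving $\expectation{\summand^{(\depthSymbol)}_{\widthSymbolB}}{}=0$, and for $\widthSymbolB\neq\widthSymbolB'$ the product of two distinct summands contains two independent mean-zero weights, so its expectation vanishes exactly, matching condition (a). For the moments, the linear envelope $|\nonlinearity(\genericReal)|\leq\envelopeconstant+\envelopegradient|\genericReal|$ bounds each activity by an affine function of a Gaussian-driven activation, so all absolute moments of the summands are finite; an induction through the layers shows the activation moments stay bounded uniformly in $\sequenceVariable$, whence $\expectation{|\summand^{(\depthSymbol)}_{1}|^{3}}{}$ is bounded, and since $\widthFunction_{\depthSymbol-1}(\sequenceVariable)\to\infty$ a bounded quantity is automatically $\littleO(\sqrt{\widthFunction_{\depthSymbol-1}(\sequenceVariable)})$, giving condition (c).

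The substantive condition is (b), namely $\lim_{\sequenceVariable\to\infty}\expectation{(\summand^{(\depthSymbol)}_{1})^{2}(\summand^{(\depthSymbol)}_{2})^{2}}{}=(\limitVar)^{2}$, and this is where I expect the main difficulty. My plan is to use the de~Finetti representation from Lemma~\ref{lemma:exchange}: conditionally on the $\sigma$-algebra $\mathcal{G}$ generated by the depth-$(\depthSymbol-2)$ activities, the summands are i.i.d., so setting $W_{\sequenceVariable}:=\mathbb{E}\!\left[(\summand^{(\depthSymbol)}_{1})^{2}\mid\mathcal{G}\right]$ and using conditional independence of the two summands gives $\expectation{(\summand^{(\depthSymbol)}_{1})^{2}(\summand^{(\depthSymbol)}_{2})^{2}}{}=\mathbb{E}[W_{\sequenceVariable}^{2}]$, while $\mathbb{E}[W_{\sequenceVariable}]=\projectionVar\to\limitVar$ by Lemma~\ref{lemma:limitVar}. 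Condition (b) is therefore equivalent to $\Var(W_{\sequenceVariable})\to 0$, i.e.\ to the conditional second moment concentrating at a constant. Performing the conditional expectation explicitly, $W_{\sequenceVariable}$ is a fixed continuous function of the empirical covariance
\[
A_{\sequenceVariable}(\inputSymbol,\inputSymbolB)=\frac{1}{\widthFunction_{\depthSymbol-2}(\sequenceVariable)}\sum_{\widthSymbolC=1}^{\widthFunction_{\depthSymbol-2}(\sequenceVariable)}\indexedActivity{\depthSymbol-2}{\widthSymbolC}{\inputSymbol}\atWidth\,\indexedActivity{\depthSymbol-2}{\widthSymbolC}{\inputSymbolB}\atWidth
\]
of the preceding layer, because conditionally on $\mathcal{G}$ the activations $\indexedActivation{\depthSymbol-1}{1}{\inputSymbol}$ are centred jointly Gaussian with covariance an affine function of $A_{\sequenceVariable}$. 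So the task reduces to an $L^{2}$ law of large numbers: $A_{\sequenceVariable}$ must converge to the deterministic Gram matrix of the recursion Lemma~\ref{lemma:recursion} with vanishing variance.

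I would establish this last concentration, and hence the whole lemma, by induction on the depth $\depthSymbol$, taking as inductive hypothesis not merely convergence in distribution of the projections but the stronger statement that the empirical activity covariances concentrate in $L^{2}$. The base case $\depthSymbol=2$ is clean: the inputs are deterministic, the depth-one activations are therefore exactly Gaussian, the conditioning $\sigma$-algebra is trivial, $W_{\sequenceVariable}$ is deterministic, and condition (b) holds exactly. For the inductive step, $\Var(A_{\sequenceVariable})$ splits into $\widthFunction_{\depthSymbol-2}(\sequenceVariable)$ diagonal terms of total order $1/\widthFunction_{\depthSymbol-2}(\sequenceVariable)$ and off-diagonal terms whose covariance, by de~Finetti one layer further down, equals the variance of a conditional second moment at depth $\depthSymbol-2$ — exactly the quantity the inductive hypothesis forces to zero; uniform integrability from the linear envelope bound then lets me pass the mean $\mathbb{E}[A_{\sequenceVariable}]$ to the Gaussian integrals defining the kernel. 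With all hypotheses of Lemma~\ref{lemma:gcltb} verified, that lemma delivers convergence of $\projection^{(\depthSymbol)}(\projectionIndeces,\projectionCoefficients)\atWidth$ in distribution to $\normal(0,\limitVar)$, as claimed.
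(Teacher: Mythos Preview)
Your proposal is correct in outline, but it takes a genuinely different route from the paper for the crucial condition~(b). The paper carries as its inductive hypothesis only that the \emph{previous layer's activations converge in distribution} to the limiting Gaussian process; to verify condition~(b) it expands $\expectation{\summand_1^2\summand_2^2}{}$ into four-way products of post-nonlinearities $\expectation{g^{(\depthSymbol-1)}_1(\inputSymbol_{(i)})g^{(\depthSymbol-1)}_2(\inputSymbol_{(j)})g^{(\depthSymbol-1)}_1(\inputSymbol_{(k)})g^{(\depthSymbol-1)}_2(\inputSymbol_{(l)})}{}$, applies the continuous mapping theorem together with uniform integrability (bounded eighth moments from the linear envelope) to pass to the limit, and then uses the key structural fact that in the limiting GP distinct channel indices $1$ and $2$ are \emph{independent}, so the four-way expectation factors into the product of two kernel entries, yielding $\limitStd^4$. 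You instead strengthen the inductive hypothesis to $L^2$ concentration of the empirical covariance $A_{\sequenceVariable}$, write $\expectation{\summand_1^2\summand_2^2}{}=\expectation{W_{\sequenceVariable}^2}{}$ via de~Finetti, and reduce condition~(b) to $\Var(W_{\sequenceVariable})\to0$. Both arguments ultimately lean on the same moment bounds for uniform integrability (your passage from $A_{\sequenceVariable}\to K$ to $W_{\sequenceVariable}\to F(K)$ in $L^2$ needs exactly this, not mere continuity of $F$, which you should make explicit). The paper's route has the virtue that the inductive hypothesis is precisely the statement being propagated, so the induction closes without auxiliary quantities; your route proves a strictly stronger concentration statement along the way, is closer in spirit to the empirical-kernel concentration results of \citet{Daniely2016}, and avoids the need to invoke the independence-across-channels property of the limit, at the cost of carrying a heavier inductive load.
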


The full details of Lemma \ref{lemma:standardProjection} are explained in Appendix \ref{section:remainingLemmas}. Here we outline the key points of the approach. We apply Lemma \ref{lemma:gcltb} to the projections, using the fact that the summands are exchangeable for each $\sequenceVariable$ and with the limiting variance $\limitVar$ derived in Lemma \ref{lemma:limitVar}. Condition a) of Lemma \ref{lemma:gcltb} follows straightforwardly from the fact that the summands are uncorrelated. That Condition c) is fulfilled is intuitively reasonable given that we in fact expect this absolute third moment to tend to a constant. Condition c) will however still need to be shown carefully. This leaves Condition b). Convergence of the expectation of a sequence of random variables can be ensured if the sequence is uniformly integrable and the sequence converges in distribution \citep{billingsey68}. Thus the main work of Appendix \ref{section:remainingLemmas} is to prove these conditions in our case, by induction forwards through the network.

Lemma \ref{lemma:standardProjection} shows consistency of convergence of the finite linear projections of the pre-bias function distribution with the stated Gaussian process. By Lemma \ref{lemma:finiteLinear}, this is sufficient for convergence in distribution to the Gaussian process. As the biases are normally distributed it is straightforward to add them and get the final result. Therefore we are done.

\section{Desirability of Gaussian process behaviour and methods to avoid it}\label{section:avoidingGPs}

When using deep Bayesian neural networks as priors, the emergence of Gaussian priors raises important questions in the cases where it is applicable, even if one sets aside questions of computational tractability. The kernels considered in this paper have not been commonly used in the Gaussian process literature and warrant further analysis. It has been argued by previous authors that there are important cases where kernel machines with \emph{local} kernels will perform badly \citep{Bengio2005}. The analysis applies to the posterior mean of a Gaussian process. The kernels considered in this paper do not meet the strict definition of what could be considered local, though the Euclidean inner product between two points is sufficient to compute the corresponding covariance. In any case, the fact remains that a Gaussian process with a fixed kernel does not use a learnt hierarchical representation. Such representations are widely regarded to be essential to the success of deep learning. A complication to consider is when a hierarchical treatment of the model is taken, learning model hyperparameters. Typically only a few such hyperparameters are used and it seems unlikely this could offer the same benefits as full representation learning. Using significantly more hyperparameters would move the model beyond the scope of this paper. \citet[p.~547]{MacKay2002} famously reflected on what is lost when taking the Gaussian process limit of a single hidden layer network, remarking that Gaussian processes will not learn hidden features. \citet[p.~43]{Neal1996} makes similar comments and also expresses the hope that Bayesian neural networks could expand the range of probabilistic models beyond Gaussian processes. In light of the results in this paper for networks with more than one hidden layer these considerations are of considerable importance going forward.

There is literature on learning the representation of a standard, usually structured, network composed with a Gaussian process \citep{Wilson16,Wilson16b,AlShedivat17}. This differs from the assumed paradigm of this paper, where all model complexity is specified probabilistically and we do not assume convolutional, recurrent or other problem specific structure. 

Within the paradigm considered here, the question therefore arises as to what can be done to avoid marginal Gaussian process behaviour if it is not desired. Speaking loosely, to stop the onset of the central limit theorem and the approximate analogues discussed in this paper one needs to make sure that one or more of its conditions is far from being met. Since the chief conditions on the summands are independence, bounded variance and many terms, violating these assumptions will remove Gaussian process behaviour. Deep Gaussian processes \citep{Damianou2013} are not close to standard Gaussian processes marginally because they are typically used with narrow intermediate layers. It can be challenging to choose the precise nature of these narrow layers a priori. \citet{Neal1996} suggests using networks with infinite variance in the activities. With a single hidden layer and correctly scaled, these networks become alpha stable processes in the wide limit. Neal also discusses variants that destroy independence by coupling weights. These alternatives each arguably have a mechanism to discover hierarchies of features. Again, given the convergence results for multiple hidden layer networks from this paper, there is now further motivation to study the non-Gaussian alternatives as well.

\section{Conclusions}

Studying the limiting behaviour of distributions on feedforward neural networks has been a fruitful avenue for understanding these models historically. In this paper we have formalised and extended prior results by \citet{Neal1996} to deep networks. In particular, we have shown that, under broad conditions, as we make the architecture increasingly wide, the implied random function converges in distribution to a Gaussian process. Our empirical study using MMD suggests that this behaviour is exhibited in a variety of models of size comparable to networks used in the literature. This led us to juxtapose finite Bayesian neural networks with their Gaussian process analogues. In several cases there was close agreement, leading us to conclude that it is likely some results from the existing Bayesian deep learning literature would be very similar to those obtained with the corresponding Gaussian process model. We recommend that empirical investigation of Bayesian neural networks should routinely include comparison to their Gaussian process analogue. If Gaussian process behaviour is desired then exact and approximate inference using the analytic properties of Gaussian processes should be considered as an alternative to neural network inference. Since Gaussian processes have an equivalent flat representation then in the context of deep learning there may well be cases where the behaviour is not desired and steps should be taken to avoid it.

We view these results as a new opportunity to further the understanding of neural networks in the work that follows. Initialisation and learning dynamics are crucial topics of study in modern deep learning which require that we understand random networks. Bayesian neural networks should offer a principled approach to generalisation but this relies on successfully approximating a clearly understood prior. In illustrating the continued importance of Gaussian processes as limit distributions, we hope that our results will further research in these broader areas.

\section{Acknowledgements}

We wish to thank Neil Lawrence and Jascha Sohl-Dickstein for helpful conversations. We also thank anonymous reviewers of previous versions for their insights. Alexander Matthews and Zoubin Ghahramani acknowledge the support of EPSRC Grant EP/N014162/1 and EPSRC Grant EP/N510129/1 (The Alan Turing Institute). Jiri Hron holds a Nokia CASE Studentship. Mark Rowland acknowledges support by EPSRC grant EP/L016516/1 for the Cambridge Centre for Analysis. Richard E. Turner is supported by Google as well as EPSRC grants EP/M0269571 and EP/L000776/1.

\bibliography{bibliography}

\appendix

\section{Adapting the exchangeable CLT of Blum et al. 1958.}\label{section:CLTproof}

This section gives further detail on our adaption of Theorem \ref{theorem:blum} to our specific needs. It states and proves an intermediate Lemma \ref{lemma:gclta} and then using that lemma gives the postponed proof of Lemma \ref{lemma:gcltb}.

\begin{lemma}[Variance adapted CLT for sequences of exchangeable sequences]\label{lemma:gclta}
For each positive integer $\rowIndex$ let $\left( \genericRV_{\rowIndex, \colIndex} ; \colIndex = 1,2,... \right)$ be an infinitely exchangeable process with mean zero, finite variance $\rowVariance$, and finite absolute third moment.  Suppose also that the variance has a limit $\lim_{\rowIndex \to \infty}\rowVariance = \limitVariance$. Define

\begin{equation}
\generalSum_{\rowIndex} = \frac{1}{\sqrt{\rowIndex}}\sum_{\colIndex=1}^{\rowIndex} \genericRV_{\rowIndex,\colIndex} \equationFullStop
\end{equation}

\noindent Then if the following conditions hold:

\begin{enumerate}[i.]
\item $\expectation{\genericRV_{\rowIndex, 1} \genericRV_{\rowIndex, 2}}{\rowIndex} = 0 $
\item $ \lim_{\rowIndex \to \infty }\expectation{\genericRV_{\rowIndex, 1}^{2} \genericRV_{\rowIndex, 2}^{2}}{\rowIndex} = \limitStd^{4} $
\item $ \expectation{|\genericRV_{\rowIndex, 1}|^{3}}{\rowIndex} = \littleO(\sqrt{\rowIndex}) $
\end{enumerate}

Then $\generalSum_{\rowIndex}$ converges in distribution to $\normal(0,\limitVariance)$, where $\normal(0,0)$ is interpreted as converging to $0$.

\end{lemma}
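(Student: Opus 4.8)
The plan is to reduce to Theorem \ref{theorem:blum} by rescaling the summands to unit variance, while handling a degenerate limiting variance separately. The organising observation is a direct variance computation: since the array is exchangeable and uncorrelated by hypothesis (i), the covariances between distinct entries vanish and
\begin{align}
\Var(\generalSum_{\rowIndex}) = \frac{1}{\rowIndex} \sum_{\colIndex=1}^{\rowIndex} \Var(\genericRV_{\rowIndex,\colIndex}) + \frac{1}{\rowIndex} \sum_{\colIndex \neq \colIndex'} \expectation{\genericRV_{\rowIndex,\colIndex}\genericRV_{\rowIndex,\colIndex'}}{\rowIndex} = \rowVariance \, ,
\end{align}
which tends to $\limitVariance$. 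This both pins down the only possible limiting variance and immediately disposes of the degenerate case.

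First I would treat $\limitStd = 0$. Here $\expectation{\generalSum_{\rowIndex}}{\rowIndex} = 0$ and $\Var(\generalSum_{\rowIndex}) = \rowVariance \to 0$, so Chebyshev's inequality gives $\generalSum_{\rowIndex} \to 0$ in probability, and hence in distribution, which is precisely convergence to $\normal(0,0)$ under the stated convention. No appeal to Blum's theorem is needed.

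For the main case $\limitStd > 0$, I would work with $\rowIndex$ large enough that $\rowStd > 0$; this holds eventually since $\rowStd \to \limitStd > 0$, and convergence in distribution is a tail property. Define the rescaled array $\tilde{\genericRV}_{\rowIndex,\colIndex} := \genericRV_{\rowIndex,\colIndex}/\rowStd$, which is again exchangeable with zero mean and now unit variance. I would then verify the three hypotheses of Theorem \ref{theorem:blum} for $\tilde{\genericRV}$: its Condition~1 is immediate because $\expectation{\tilde{\genericRV}_{\rowIndex,1}\tilde{\genericRV}_{\rowIndex,2}}{\rowIndex} = \rowStd^{-2}\expectation{\genericRV_{\rowIndex,1}\genericRV_{\rowIndex,2}}{\rowIndex} = 0$ by hypothesis (i); its Condition~2 follows from hypothesis (ii) together with $\rowStd \to \limitStd$, since $\expectation{\tilde{\genericRV}_{\rowIndex,1}^2\tilde{\genericRV}_{\rowIndex,2}^2}{\rowIndex} = \rowStd^{-4}\expectation{\genericRV_{\rowIndex,1}^2\genericRV_{\rowIndex,2}^2}{\rowIndex} \to \limitStd^{4}/\limitStd^{4} = 1$; and its Condition~3 follows from hypothesis (iii) because $\rowStd$ is bounded away from zero for large $\rowIndex$, whence $\expectation{|\tilde{\genericRV}_{\rowIndex,1}|^3}{\rowIndex} = \rowStd^{-3}\expectation{|\genericRV_{\rowIndex,1}|^3}{\rowIndex} = \littleO(\sqrt{\rowIndex})$. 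Theorem \ref{theorem:blum} then yields $\tilde{\generalSum}_{\rowIndex} := \rowIndex^{-1/2}\sum_{\colIndex=1}^{\rowIndex}\tilde{\genericRV}_{\rowIndex,\colIndex} \to \normal(0,1)$ in distribution.

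Finally I would recover the claim by observing $\generalSum_{\rowIndex} = \rowStd\,\tilde{\generalSum}_{\rowIndex}$; since the deterministic scalars $\rowStd$ converge to the constant $\limitStd$, Slutsky's theorem gives $\generalSum_{\rowIndex} \to \limitStd \cdot \normal(0,1) = \normal(0,\limitVariance)$ in distribution. The main obstacle is not any single estimate but the bookkeeping around the degenerate limit: one cannot normalise by $\rowStd$ when $\limitStd = 0$, so the case split is essential, and in the nondegenerate case one must confirm that Blum's third-moment condition survives the rescaling, which is exactly where the hypothesis $\limitStd > 0$ (ensuring $\rowStd$ is bounded below) is invoked.
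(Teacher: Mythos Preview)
Your proof is correct and follows essentially the same route as the paper: a case split on $\limitStd = 0$ versus $\limitStd > 0$, the degenerate case handled via the variance computation $\Var(\generalSum_{\rowIndex}) = \rowVariance$ and Chebyshev, and the nondegenerate case handled by rescaling to unit variance, verifying the three conditions of Theorem~\ref{theorem:blum}, and then undoing the rescaling. The only cosmetic difference is in the last step: you invoke Slutsky's theorem directly on $\generalSum_{\rowIndex} = \rowStd\,\tilde{\generalSum}_{\rowIndex}$ with the deterministic scalars $\rowStd \to \limitStd$, whereas the paper first multiplies by $\limitStd$ to obtain $\generalSum_{\rowIndex}\,\limitStd/\rowStd \to \normal(0,\limitVariance)$ and then argues that $|\generalSum_{\rowIndex}\,\limitStd/\rowStd - \generalSum_{\rowIndex}| \to 0$ in probability; your formulation is the cleaner of the two.
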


\begin{proof}[Proof of Lemma \ref{lemma:gcltb}] Either $\limitVariance=0$ or it does not. We deal with each case separately. 

In the case where  $\limitVariance=0$, we have:

\begin{align}
\Var[\generalSum_{\rowIndex}] &= \frac{1}{\rowIndex}\Var\left[\sum_{\colIndex=1}^{\rowIndex}\genericRV_{\rowIndex,\colIndex} \right] \\
&= \frac{1}{\rowIndex}\left( \sum_{\colIndex=1}^{\rowIndex} \Var[\genericRV_{\rowIndex,\colIndex}] + \sum_{\colIndex \neq \colIndex'}\Cov[\genericRV_{\rowIndex,\colIndex},\genericRV_{\rowIndex,\colIndex'}] \right) \\
&= \Var[\genericRV_{\rowIndex,1}] = \rowVariance,
\end{align}

\noindent where we have used property (i) that the distinct elements in a row are uncorrelated. Now the proof can take a similar route to that used in proving the weak law of large numbers with finite variance. Chebyshev's inequality we have:

\begin{align}
\Prob( |\generalSum_\rowIndex| \leq \beta ) \leq \frac{\rowVariance}{\beta^2} \, ,
\end{align}

\noindent for all $\beta>0$, So that:

\begin{align}
\Prob( |\generalSum_\rowIndex| > \beta ) \leq  1-  \frac{\rowVariance}{\beta^2} \, .
\end{align}

That is to say $\generalSum_{\rowIndex}$ converges in probability to $0$. In such a case of a constant target convergence in probability is equivalent to convergence in distribution.

In the case where $\limitVariance\neq 0$ there is always some $M$ such that for $\rowIndex \geq M$ $\rowVariance > 0$ by the definition of a limit. Let us assume we are in this range of $\rowIndex$. Then the standardised values $\frac{\genericRV_{\rowIndex,\colIndex}}{\rowStd}$ will obey the conditions of Theorem \ref{theorem:blum}, which we now show term by term. We clearly have mean zero, unit variance and finite third moment, and conditions 1) and i) are identical. This leaves us to validate conditions 2) and 3). Starting from ii) we have:

\begin{align}
\lim_{\rowIndex \to \infty }\expectation{\genericRV_{\rowIndex 1}^{2} \genericRV_{\rowIndex 2}^{2}}{\rowIndex} = \limitStd^{4} \\
\lim_{\rowIndex \to \infty }\expectation{\genericRV_{\rowIndex 1}^{2} \genericRV_{\rowIndex 2}^{2}}{\rowIndex} \lim_{\rowIndex \to \infty } \expectation{\frac{1}{\rowStd^{4}}}{\rowIndex} = 1 \\
\lim_{\rowIndex \to \infty }\expectation{\frac{\genericRV_{\rowIndex 1}^{2} \genericRV_{\rowIndex 2}^{2}}{\rowStd^{4}}}{\rowIndex} = 1
\end{align}

\noindent which clearly implies condition 2). Starting from condition iii) we have:

\begin{align}
\lim_{\rowIndex \to \infty }\left[ \frac{|\genericRV_{\rowIndex 1}|^{3}}{\sqrt{\rowIndex}}\right] &= 0 \\
\lim_{\rowIndex \to \infty }\left[ \frac{|\genericRV_{\rowIndex 1}|^{3}}{\rowStd^3\sqrt{\rowIndex}}\right] &= \frac{1}{\limitStd^{3}}\lim_{\rowIndex \to \infty }\left[ \frac{|\genericRV_{\rowIndex 1}|^{3}}{\sqrt{\rowIndex}}\right] = 0
\end{align}

\noindent which implies condition 3). Since multiplication by a constant is a continuous function we have therefore showed that $\generalSum_{\rowIndex} \frac{\limitStd}{\rowStd}$ converges in distribution to $\normal(0,\limitVar)$. 

Note that the sequence $|\generalSum_{\rowIndex} \frac{\limitStd}{\rowStd} - \generalSum_{\rowIndex}|$ converges \emph{surely} to $0$. This certainly implies convergence in probability of the same sequence to zero. We can therefore invoke a general result on convergence of sequences that says if a sequence of random variables $X_i$ converges to $X_*$ and $|X_i - Y_i|$ converges in probability to zero, then $Y_i$ converges in distribution to $X_*$ \citep{vaart1998}. 

\end{proof}

\begin{proof}[Proof of Lemma \ref{lemma:gcltb}] Lemma \ref{lemma:gclta} applies to what are known as triangular arrays in the literature. This lemma is the generalisation to arrays that are not strictly triangular. To do this we embed the non-triangular array in a large triangular one. We fill the extra spaces with standard normal random variables. This gives an interleaved sequence. The terms we actually care about will obey the necessary conditions if conditions 1) 2) and 3) if a) b) and c) hold. The conditions 1) 2) and 3) will hold trivially for the standard normal rows. Thus the whole sequence converges in distribution. But since any subsequence also converges in distribution we get our required result. \end{proof}

\section{Details of the proof of Theorem \ref{thm:main}}

Here, we summarise the high-level structure of the proof of Theorem \ref{thm:main}. The argument is inductive, showing sequentially that the hidden units in each layer of the network converge in distribution; to avoid repetition, all mentions of convergence in distribution of infinite-dimensional random variables in what follows are specifically with respect to the topology generated by the metric $\rho$ introduced in Section \ref{section:measureConverge}. The main part of the inductive argument is summarised in the following proposition.

\begin{proposition}\label{prop:inductive}
	For any $\depthSymbol \in \{2, \ldots, \depthSymbol + 1\}$, suppose that the collection of random variables $\{f^{(\depthSymbol-1)}_i(\inputSymbol)\atWidth\}_{\widthSymbol \in \naturalNumbers, \inputSymbol \in \indexSet}$ converges in distribution as $n \rightarrow \infty$ to a centred Gaussian with covariance function of the form given in Lemma \ref{lemma:recursion}. Then any finite linear combination $\projection^{(\depthSymbol)}(\projectionIndeces,\projectionCoefficients)\atWidth$ (with $\projectionIndeces \subset \indexSet \times \naturalNumbers$ finite and $\projectionCoefficients \in \realLine^{\projectionIndeces}$) of pre-nonlinearities at the next layer also converges in distribution to a centred Gaussian of the form described in Lemma \ref{lemma:limitVar}.
\end{proposition}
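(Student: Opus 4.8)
The plan is to write the projection $\projection^{(\depthSymbol)}(\projectionIndeces,\projectionCoefficients)\atWidth$ as the normalised sum of summands in Equation \eqref{eq:projection_as_array}, and then to invoke the adapted exchangeable central limit theorem, Lemma \ref{lemma:gcltb}, with the role of the strictly increasing array-length function $\monotoneCLTfunc$ played by the width function $\widthFunction_{\depthSymbol-1}$. By Lemma \ref{lemma:exchange} the summands $\summand^{(\depthSymbol)}_{\widthSymbolB}(\projectionIndeces,\projectionCoefficients)\atWidth$ form, for each fixed $\sequenceVariable$, a mean-zero exchangeable sequence in the fan-out index $\widthSymbolB$, so the whole problem reduces to verifying the three moment conditions a), b) and c) of Lemma \ref{lemma:gcltb}, with the limiting variance identified by Lemma \ref{lemma:limitVar} as $\limitStd^{2} = \limitVar = \projectionCoefficients^{\top}\genericCov(\projectionIndeces)\projectionCoefficients$. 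I abbreviate $\summand^{(\depthSymbol)}_{\widthSymbolB} := \summand^{(\depthSymbol)}_{\widthSymbolB}(\projectionIndeces,\projectionCoefficients)\atWidth$ below.

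The mean and condition a) are immediate. Conditioning on the previous layer's activities $\indexedActivity{\depthSymbol-1}{\widthSymbolB}{\inputSymbol}\atWidth$, each summand is linear in the fresh standard-normal weights $\indexedStandardNormal{\depthSymbol}{\widthSymbol, \widthSymbolB}$, which have mean zero and are independent across $\widthSymbolB$. Hence $\expectation{\summand^{(\depthSymbol)}_{\widthSymbolB}}{}=0$, and for $\widthSymbolB \neq \widthSymbolB'$ the two summands depend on disjoint, independent blocks of weights, so the conditional expectation of their product factorises into two conditionally zero-mean factors, giving $\expectation{\summand^{(\depthSymbol)}_{1}\summand^{(\depthSymbol)}_{2}}{}=0$. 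This is also what pins the variance of the normalised sum to the single-summand variance $\rowVariance$, whose limit $\limitVar$ is exactly the content of Lemma \ref{lemma:limitVar}.

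The heart of the argument -- and the step I expect to be the main obstacle -- is condition b), namely $\lim_{\sequenceVariable \to \infty}\expectation{(\summand^{(\depthSymbol)}_{1})^{2}(\summand^{(\depthSymbol)}_{2})^{2}}{} = \limitStd^{4}$, which asserts asymptotic decorrelation of the squared summands. The inductive hypothesis supplies joint convergence in distribution of the finite collection of previous-layer activations $\{\indexedActivation{\depthSymbol-1}{\widthSymbol}{\inputSymbol}\atWidth\}$ to a centred Gaussian in which distinct fan-out units are independent; since $\nonlinearity$ is continuous and the layer-$\depthSymbol$ weights are fresh independent Gaussians, the continuous mapping theorem gives convergence in distribution of the pair $(\summand^{(\depthSymbol)}_{1},\summand^{(\depthSymbol)}_{2})$ to a limit in which the two components are independent, so the mixed fourth moment of the limit factorises as $\limitStd^{2}\cdot\limitStd^{2}$. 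Convergence in distribution does not by itself transfer to this fourth moment, however, and the delicate point is to upgrade it using uniform integrability of $(\summand^{(\depthSymbol)}_{1})^{2}(\summand^{(\depthSymbol)}_{2})^{2}$. This is exactly where the linear envelope Condition \ref{definition:envelope} is indispensable: the bound $|\indexedActivity{\depthSymbol-1}{\widthSymbolB}{\inputSymbol}| \leq \envelopeconstant + \envelopegradient|\indexedActivation{\depthSymbol-1}{\widthSymbolB}{\inputSymbol}|$ controls the higher moments of each summand by those of the previous layer's activations and the Gaussian weights. I would therefore run an induction forward through the layers establishing that all relevant activation moments stay bounded uniformly in $\sequenceVariable$, which yields a uniform bound on the $(1+\delta)$-th moment of $(\summand^{(\depthSymbol)}_{1})^{2}(\summand^{(\depthSymbol)}_{2})^{2}$ for some $\delta > 0$ and hence the needed uniform integrability.

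Condition c), $\expectation{|\summand^{(\depthSymbol)}_{1}|^{3}}{} = \littleO(\sqrt{\widthFunction_{\depthSymbol-1}(\sequenceVariable)})$, then follows cheaply from the same moment bounds: $\expectation{|\summand^{(\depthSymbol)}_{1}|^{3}}{}$ stays bounded as $\sequenceVariable \to \infty$, whereas $\sqrt{\widthFunction_{\depthSymbol-1}(\sequenceVariable)} \to \infty$ since the width functions are strictly increasing. With all three conditions in hand, Lemma \ref{lemma:gcltb} yields convergence of $\projection^{(\depthSymbol)}(\projectionIndeces,\projectionCoefficients)\atWidth$ in distribution to $\normal(0,\limitVar)$, and the identification $\limitVar = \projectionCoefficients^{\top}\genericCov(\projectionIndeces)\projectionCoefficients$ shows this is precisely the centred Gaussian with the covariance of the recursion, as claimed.
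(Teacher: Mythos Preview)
Your proposal is correct and follows essentially the same route as the paper: write the projection as the normalised sum \eqref{eq:projection_as_array}, invoke the adapted exchangeable CLT (Lemma \ref{lemma:gcltb}) with $\monotoneCLTfunc=\widthFunction_{\depthSymbol-1}$, and verify conditions a)--c) using, for b), convergence in distribution from the inductive hypothesis plus uniform integrability obtained from the linear envelope and an inductive propagation of higher-moment bounds through the layers. The paper packages the details slightly differently---it expands the mixed fourth moment into products of post-nonlinearities and integrates out the fresh Gaussian weights explicitly, then establishes uniform integrability via a separate lemma bounding eighth moments of the activations---but the ideas and the order in which they are deployed are the same as yours.
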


Note that the conclusion of Proposition \ref{prop:inductive} leads to the statement of Lemma \ref{lemma:standardProjection}.
By the Cram\'er-Wold device discussed in Section \ref{section:proof}, the convergence of the finite linear projections established in Proposition \ref{prop:inductive} guarantees convergence of all finite-dimensional marginal distributions. Adding in the independent bias terms yields convergence of finite-dimensional marginals of the pre-activations at layer $\mu$; this may be demonstrated via a standard argument using characteristic functions. Due to the remarks on weak convergence in Section \ref{section:measureConverge}, convergence in distribution of all finite-dimensional marginals guarantees convergence in distribution of the full collection of random variables $\{f^{(\depthSymbol)}_i(\inputSymbol)\atWidth\}_{\widthSymbol \in \naturalNumbers, \inputSymbol \in \indexSet}$ in the next layer, completing the inductive step.

The proof of Theorem \ref{thm:main} is then concluded by observing that the pre-nonlinearities in the first hidden layer, $\{f^{(1)}_i(\inputSymbol)\atWidth\}_{\widthSymbol \in \naturalNumbers, \inputSymbol \in \indexSet}$, have a fixed Gaussian distribution that does not depend on $n$.

We thus turn our attention to proving Proposition \ref{prop:inductive}. The main idea is to use Lemma~\ref{lemma:gcltb}, taking each of the random variables $X_{n, i}$ (for $i \in \mathbb{N}, n \in \mathbb{N}$) appearing in the statement of the Lemma to be the summands appearing in the finite linear projections $\projection^{(\depthSymbol)}(\projectionIndeces,\projectionCoefficients)\atWidth$:
\begin{align}
X_{n, i} = \summand^{(\depthSymbol)}_{\widthSymbol}(\projectionIndeces,\projectionCoefficients)\atWidth \, .
\end{align}
Addressing the conditions of Lemma \ref{lemma:gcltb}, we note that the exchangeability condition is provided by Lemma \ref{lemma:exchange}, the mean-zero condition is immediate, the limiting variance condition is dealt with by Lemma~\ref{lemma:limitVar}. 
Condition a) of Lemma~\ref{lemma:gcltb} holds trivially as the random variables $X_{n1}$ and $X_{n2}$ are mean-zero and uncorrelated. The remaining conditions of Lemma \ref{lemma:gcltb} are dealt with through the following results; Lemma~\ref{lemma:convg_product_2nd_mmnts} deals with Condition b), whilst Lemma~\ref{lemma:3rd_abs_mmnt_summands} deals with the~growth of third absolute moments as required by Condition c).


\begin{lemma}[Convergence of $\expectation{|\genericRV_{\rowIndex, 1} \genericRV_{\rowIndex, 2}|^{2}}{}$]\label{lemma:convg_product_2nd_mmnts}
Consider arbitrary $\depthSymbol \in \{2, \ldots, \depth + 1\}$ and the~corresponding set of random variables $\{\indexedActivation{\depthSymbol}{\widthSymbol}{\inputSymbol}\atWidth\}_{(\widthSymbol, \inputSymbol) \in \projectionIndeces}$. Assume that the~countably infinite vector of random variables $\{ \indexedActivation{\depthSymbol-1}{\widthSymbol}{\inputSymbol}\atWidth \}_{\widthSymbol \in \naturalNumbers, \inputSymbol \in \indexSet}$ converges in distribution to a~centred Gaussian process with covariance specified by the~recursion in Lemma~\ref{lemma:recursion} as $\sequenceVariable \to \infty$. Then
$$\lim_{\sequenceVariable \to \infty} \expectation{|\genericRV_{\sequenceVariable, 1} \genericRV_{\sequenceVariable, 2}|^2}{} = \limitStd^{4} \, .$$
\end{lemma}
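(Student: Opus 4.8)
The plan is to exploit the conditional Gaussianity of the summands given the previous layer, reducing the mixed fourth moment to a product of conditional variances, and then to pass to the limit using the inductive hypothesis together with a uniform integrability argument. First I would condition on the activities $\{\indexedActivity{\depthSymbol-1}{\widthSymbolC}{\inputSymbol}\atWidth\}$ of the previous layer. With $\genericRV_{\sequenceVariable, \widthSymbolB}$ identified as the summand,
\[
\genericRV_{\sequenceVariable, \widthSymbolB} = \summand^{(\depthSymbol)}_{\widthSymbolB}(\projectionIndeces,\projectionCoefficients)\atWidth = \sqrt{\weightVarianceDepthScaled}\sum_{(\inputSymbol,\widthSymbol)\in\projectionIndeces}\projectionCoefficients^{(\inputSymbol,\widthSymbol)}\indexedStandardNormal{\depthSymbol}{\widthSymbol,\widthSymbolB}\indexedActivity{\depthSymbol-1}{\widthSymbolB}{\inputSymbol}\atWidth ,
\]
the weights $\indexedStandardNormal{\depthSymbol}{\widthSymbol,\widthSymbolB}$ are i.i.d.\ standard normal, independent across the fan-in index $\widthSymbolB$ and independent of the activities. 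Hence, conditional on the activities, $\genericRV_{\sequenceVariable, 1}$ and $\genericRV_{\sequenceVariable, 2}$ are independent, each a centred Gaussian combination of the weights. Writing the conditional variance as
\[
V_{\sequenceVariable,\widthSymbolB} := \weightVarianceDepthScaled\sum_{\widthSymbol}\Big(\sum_{\inputSymbol:(\inputSymbol,\widthSymbol)\in\projectionIndeces}\projectionCoefficients^{(\inputSymbol,\widthSymbol)}\indexedActivity{\depthSymbol-1}{\widthSymbolB}{\inputSymbol}\atWidth\Big)^2 ,
\]
the tower property and conditional independence give $\expectation{\genericRV_{\sequenceVariable, 1}^2\genericRV_{\sequenceVariable, 2}^2}{} = \expectation{V_{\sequenceVariable,1}V_{\sequenceVariable,2}}{}$, so the fourth mixed moment is reduced to a second moment of a product of conditional variances.

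Next I would pass to the limit. Each $V_{\sequenceVariable,\widthSymbolB}$ is a fixed quadratic function of the activities of hidden unit $\widthSymbolB$ alone, i.e.\ of $\{\nonlinearity(\indexedActivation{\depthSymbol-1}{\widthSymbolB}{\inputSymbol}\atWidth)\}_{\inputSymbol}$, and $\nonlinearity$ is continuous by the hypothesis of Theorem \ref{thm:main}. By the inductive hypothesis the pre-activations $\{\indexedActivation{\depthSymbol-1}{\widthSymbol}{\inputSymbol}\atWidth\}$ converge in distribution to a centred Gaussian process whose covariance (Lemma \ref{lemma:recursion}) carries a factor $\kron_{\widthSymbolPair}$ in the unit index, so that distinct hidden units become jointly Gaussian and hence independent in the limit. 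Applying the continuous mapping theorem to the finite-dimensional marginal indexed by $\widthSymbolB \in \{1,2\}$ and the inputs occurring in $\projectionIndeces$, the pair $(V_{\sequenceVariable,1}, V_{\sequenceVariable,2})$ converges in distribution to a pair of independent limits $(V_1, V_2)$, and their product to $V_1 V_2$. Since $\expectation{V_{\sequenceVariable,\widthSymbolB}}{} = \Var(\genericRV_{\sequenceVariable,\widthSymbolB}) = \projectionVar$, Lemma \ref{lemma:limitVar} identifies $\lim_{\sequenceVariable}\expectation{V_{\sequenceVariable,\widthSymbolB}}{} = \limitStd^2$. Thus, provided the limit may be taken inside the expectation, $\lim_{\sequenceVariable}\expectation{V_{\sequenceVariable,1}V_{\sequenceVariable,2}}{} = \expectation{V_1}{}\expectation{V_2}{} = \limitStd^4$, which is the claim.

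The main obstacle is precisely this interchange of limit and expectation, since convergence in distribution alone does not force convergence of moments. I would justify it by establishing uniform integrability of $\{V_{\sequenceVariable,1}V_{\sequenceVariable,2}\}$, for which it suffices to prove a uniform bound $\sup_{\sequenceVariable}\expectation{(V_{\sequenceVariable,1}V_{\sequenceVariable,2})^{1+\delta}}{} < \infty$ for some $\delta > 0$ (the same bound also giving uniform integrability of each single sequence $\{V_{\sequenceVariable,\widthSymbolB}\}$, so that $\expectation{V_{\widthSymbolB}}{} = \limitStd^2$). Expanding the product, $V_{\sequenceVariable,1}V_{\sequenceVariable,2}$ is a finite sum of products of four activities, two from unit $1$ and two from unit $2$; applying the linear envelope Condition \ref{definition:envelope} to each $\nonlinearity$ bounds every such factor by $\envelopeconstant + \envelopegradient|\indexedActivation{\depthSymbol-1}{\widthSymbolB}{\inputSymbol}\atWidth|$. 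The required bound then reduces to uniform (in $\sequenceVariable$) control of the absolute moments of the previous layer's pre-activations up to order $4(1+\delta)$. Such uniform moment bounds are supplied by the moment-propagation induction running forward through the layers that underpins Lemma \ref{lemma:limitVar} (Appendix \ref{section:remainingLemmas}), where the linear envelope property again prevents heavy-tailed blow-up at each stage. With uniform integrability in hand the interchange is valid and the proof is complete.
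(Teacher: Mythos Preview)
Your proposal is correct and follows essentially the same route as the paper. The only organisational difference is that you first condition on the previous-layer activities and use the tower property to reduce $\expectation{\genericRV_{\sequenceVariable,1}^2\genericRV_{\sequenceVariable,2}^2}{}$ to $\expectation{V_{\sequenceVariable,1}V_{\sequenceVariable,2}}{}$, whereas the paper expands the quadratic form directly and integrates out the weights via the Kronecker-delta structure; both arrive at an expectation of a four-way product of post-nonlinearities, then invoke the continuous mapping theorem, uniform integrability (via the eighth-moment bound of Lemma~\ref{lemma:mmnt_bound_prenonlin} and the linear envelope), and the factorisation across units in the Gaussian limit.
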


\begin{lemma}[Bound on $\expectation{|\genericRV_{\rowIndex, 1}|^{3}}{}$]\label{lemma:3rd_abs_mmnt_summands}
For arbitrary given $\projectionCoefficients$, $\projectionIndeces$, and $\depthSymbol \in \{1, 2, \ldots, \depth+1\}$, $\expectation{|\genericRV_{\rowIndex, 1}|^{3}}{} < c < \infty$ with $c$ independent of $\sequenceVariable$. Thus $\expectation{|\genericRV_{\rowIndex, 1}|^{3}}{} = \littleO(\sqrt{\monotoneCLTfunc(\rowIndex)})$.
\end{lemma}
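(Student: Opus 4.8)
The plan is to bound $\expectation{|\genericRV_{\sequenceVariable,1}|^3}{}$ by a finite constant independent of $\sequenceVariable$; the relation $\expectation{|\genericRV_{\sequenceVariable,1}|^3}{} = \littleO(\sqrt{\widthFunction_{\depthSymbol-1}(\sequenceVariable)})$ is then automatic, since strictly increasing width functions $\widthFunction_{\depthSymbol-1} : \naturalNumbers \mapsto \naturalNumbers$ diverge. Writing the summand out,
\[
\genericRV_{\sequenceVariable,1} = \sqrt{\weightVarianceScaled^{(\depthSymbol)}} \sum_{(\inputSymbol,\widthSymbol) \in \projectionIndeces} \projectionCoefficients^{(\inputSymbol,\widthSymbol)}\, \indexedStandardNormal{\depthSymbol}{\widthSymbol,1}\, \indexedActivity{\depthSymbol-1}{1}{\inputSymbol}\atWidth \, ,
\]
the weights $\indexedStandardNormal{\depthSymbol}{\widthSymbol,1}$ live at layer $\depthSymbol$ and are therefore independent of each activity $\indexedActivity{\depthSymbol-1}{1}{\inputSymbol}\atWidth$, which is a function of weights and biases at layers strictly below $\depthSymbol$. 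Applying Minkowski's inequality in $L^3$ term by term and factorising each resulting moment by this independence, I would obtain
\[
\| \genericRV_{\sequenceVariable,1} \|_3 \leq \sqrt{\weightVarianceScaled^{(\depthSymbol)}}\, \big(\expectation{|\standardNormals|^3}{}\big)^{1/3} \sum_{(\inputSymbol,\widthSymbol) \in \projectionIndeces} |\projectionCoefficients^{(\inputSymbol,\widthSymbol)}|\, \| \nonlinearity(\indexedActivation{\depthSymbol-1}{1}{\inputSymbol}\atWidth) \|_3 \, ,
\]
where $\standardNormals$ is standard normal. As $\projectionIndeces$ is finite, it suffices to bound $\| \nonlinearity(\indexedActivation{\depthSymbol-1}{1}{\inputSymbol}\atWidth) \|_3$ uniformly in $\sequenceVariable$, which by the linear envelope property reduces further to a uniform bound on $\| \indexedActivation{\depthSymbol-1}{1}{\inputSymbol}\atWidth \|_3$.

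I would establish such a uniform bound for the pre-nonlinearities by induction on depth, proving that for every layer $\nu$ and every input $\inputSymbol$ in the finite set occurring in $\projectionIndeces$ one has $\sup_{\sequenceVariable} \| \indexedActivation{\nu}{\widthSymbol}{\inputSymbol}\atWidth \|_3 < \infty$, a bound independent of the unit index $\widthSymbol$ by exchangeability. The base case $\nu = 1$ is immediate, since $\indexedActivation{1}{\widthSymbol}{\inputSymbol}\atWidth$ has a Gaussian law not depending on $\sequenceVariable$. For the inductive step I would condition on the activities of layer $\nu-1$: by the recursion \eqref{eq:recursionRefined}, $\indexedActivation{\nu}{\widthSymbol}{\inputSymbol}\atWidth$ is then conditionally centred Gaussian with conditional variance
\[
V_\sequenceVariable = \biasVariance^{(\nu)} + \frac{\weightVarianceScaled^{(\nu)}}{\widthFunction_{\nu-1}(\sequenceVariable)} \sum_{\widthSymbolB=1}^{\widthFunction_{\nu-1}(\sequenceVariable)} \big( \indexedActivity{\nu-1}{\widthSymbolB}{\inputSymbol}\atWidth \big)^2 \, ,
\]
so that $\expectation{|\indexedActivation{\nu}{\widthSymbol}{\inputSymbol}\atWidth|^3}{} = \expectation{|\standardNormals|^3}{}\, \expectation{V_\sequenceVariable^{3/2}}{}$.

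The crux is to bound $\expectation{V_\sequenceVariable^{3/2}}{} = \| V_\sequenceVariable \|_{3/2}^{3/2}$ uniformly in $\sequenceVariable$. A naive triangle inequality applied directly to the sum defining $\indexedActivation{\nu}{\widthSymbol}{\inputSymbol}\atWidth$ would grow like $\sqrt{\widthFunction_{\nu-1}(\sequenceVariable)}$ and be useless; the conditional-Gaussian representation is exactly what avoids this, because Minkowski in $L^{3/2}$ applied to $V_\sequenceVariable$, together with the identity $\| Y^2 \|_{3/2} = \| Y \|_3^2$, gives
\[
\| V_\sequenceVariable \|_{3/2} \leq \biasVariance^{(\nu)} + \frac{\weightVarianceScaled^{(\nu)}}{\widthFunction_{\nu-1}(\sequenceVariable)} \sum_{\widthSymbolB=1}^{\widthFunction_{\nu-1}(\sequenceVariable)} \| \nonlinearity(\indexedActivation{\nu-1}{\widthSymbolB}{\inputSymbol}\atWidth) \|_3^2 \, ,
\]
in which the prefactor $1/\widthFunction_{\nu-1}(\sequenceVariable)$ exactly cancels the number of summands. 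By the linear envelope property and the inductive hypothesis, each term satisfies $\| \nonlinearity(\indexedActivation{\nu-1}{\widthSymbolB}{\inputSymbol}\atWidth) \|_3 \leq \envelopeconstant + \envelopegradient \sup_\sequenceVariable \| \indexedActivation{\nu-1}{\widthSymbolB}{\inputSymbol}\atWidth \|_3 < \infty$ uniformly in $\widthSymbolB$ and $\sequenceVariable$, so $\| V_\sequenceVariable \|_{3/2}$, and hence $\expectation{|\indexedActivation{\nu}{\widthSymbol}{\inputSymbol}\atWidth|^3}{}$, is bounded independently of $\sequenceVariable$, closing the induction.

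Combining the two parts, $\expectation{|\genericRV_{\sequenceVariable,1}|^3}{}$ is bounded by a finite constant independent of $\sequenceVariable$, and the conclusion follows. I expect the inductive step to be the main obstacle: the point is to resist the divergent direct application of the triangle inequality and instead route the estimate through the conditional variance $V_\sequenceVariable$, so that the $1/\widthFunction_{\nu-1}(\sequenceVariable)$ normalisation can be exploited. A minor bookkeeping point is the degenerate convention $\indexedActivity{0}{\widthSymbolC}{\inputSymbol} = \inputSymbol_{\widthSymbolC}$ at layer zero, which makes the argument cover $\depthSymbol = 1$ as well.
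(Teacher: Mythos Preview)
Your proof is correct and takes a genuinely different, more direct route than the paper's.

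The paper first passes from the third moment to the fourth via H{\"o}lder, expands $\expectation{(\summand^{(\depthSymbol)}_{1})^4}{}$ as a sum of four-way products $\expectation{\tilde{\activitySymbol}_1\atWidth_i\tilde{\activitySymbol}_1\atWidth_j\tilde{\activitySymbol}_1\atWidth_k\tilde{\activitySymbol}_1\atWidth_l}{}$, and then bounds each of those using an auxiliary inequality (their Lemma~\ref{remark:joint_moments_bound}) that controls such products by \emph{eighth} moments of the pre-nonlinearities. The eighth-moment bound is established by a separate induction (their Lemma~\ref{lemma:mmnt_bound_prenonlin}) which also conditions on the previous layer to obtain conditional Gaussianity, but then expands $\expectation{V_\sequenceVariable^4}{}$ into $O(\widthFunction_{\nu-1}(\sequenceVariable)^4)$ cross terms and bounds each via the same four-way product lemma. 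Your argument avoids all of this: you work directly in $L^3$, use Minkowski on the finite outer sum, and --- crucially --- apply Minkowski in $L^{3/2}$ to $V_\sequenceVariable$ rather than expanding a power of it, so that the cancellation between $1/\widthFunction_{\nu-1}(\sequenceVariable)$ and the number of summands is immediate. This is cleaner and needs only third moments throughout.

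What the paper's heavier machinery buys is reusability: the eighth-moment bound of Lemma~\ref{lemma:mmnt_bound_prenonlin} is also the engine behind their uniform-integrability Lemma~\ref{lemma:integrability_postnonlin}, which in turn feeds the proofs of Lemmas~\ref{lemma:conditional_limit_std} and~\ref{lemma:convg_product_2nd_mmnts}. Your approach proves the present lemma more economically but does not, as stated, supply those higher-moment bounds; that is of course fine for the question asked.
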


Thus, all that remains to establish Theorem \ref{thm:main} is the proof of these intermediate lemmas; the proofs are given in the sections that follow.

%
%
%


\subsection{Proofs of main lemmas and corollaries}\label{section:remainingLemmas}

Throughout this section, we simplify the~notation by defining
\begin{align*}
\summand^{(\depthSymbol)}_{j}(\projectionIndeces,\projectionCoefficients)\atWidth
&:= 
\projectionCoefficients^\top \tilde{\activitySymbol}_j^{(\depthSymbol)}\atWidth
&& j \in \naturalNumbers \, ,
\nonumber \\
\tilde{\activitySymbol}_j^{(\depthSymbol)}\atWidth_i 
&:= 
\indexedStandardNormal{\depthSymbol}{(i), j} \indexedActivity{\depthSymbol-1}{j}{\inputSymbol_{(i)}}\atWidth
&& i \in \{1, \ldots, |\projectionIndeces|\}
\, ,
\end{align*}
where $((i), \inputSymbol_{(i)})$ is the~$i$\textsuperscript{th} member of the~set $\projectionIndeces$. Without loss of generality, in what follows we will take $\weightVarianceDepthScaled = 1$ to lighten notation.

To prove Lemma~\ref{lemma:convg_product_2nd_mmnts}, we need to know the~value of $\limitStd^4$ where $\limitStd^2 = \limitVar$ as defined in Lemma~\ref{lemma:limitVar}. Lemma~\ref{lemma:conditional_limit_std} combined with the~inductive propagation of convergence in distribution verifies Lemma~\ref{lemma:limitVar} and thus yields $\limitStd^4$.

\begin{lemma}\label{lemma:conditional_limit_std}
Consider arbitrary $\depthSymbol \in \{2, \ldots, \depth+1\}$. Assume that the~countably infinite vector of random variables $\{ \indexedActivation{\depthSymbol-1}{\widthSymbol}{\inputSymbol}\atWidth \}_{\widthSymbol \in \naturalNumbers, \inputSymbol \in \indexSet}$ converges in distribution to a~centred Gaussian process with covariance specified by the~recursion in Lemma~\ref{lemma:recursion} as $\sequenceVariable \to \infty$. Then
\begin{equation*}
\limitVar = \lim_{\sequenceVariable \to \infty}\projectionVar = \alpha^{T} \genericCov(\projectionIndeces) \alpha \, .
\end{equation*}
\end{lemma}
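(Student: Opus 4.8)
The plan is to evaluate $\projectionVar$ in closed form for each finite $\sequenceVariable$, reduce it to a quadratic form in $\projectionCoefficients$ whose coefficients are second moments of the previous-layer activities, and then pass to the limit using the inductive convergence hypothesis together with the linear envelope property. Concretely, in the simplified notation $\summand^{(\depthSymbol)}_{j}(\projectionIndeces,\projectionCoefficients)\atWidth = \projectionCoefficients^\top \tilde{\activitySymbol}_j^{(\depthSymbol)}\atWidth$, with $i$th component $\indexedStandardNormal{\depthSymbol}{(i), j}\,\indexedActivity{\depthSymbol-1}{j}{\inputSymbol_{(i)}}\atWidth$. The layer-$\depthSymbol$ weights $\indexedStandardNormal{\depthSymbol}{(i),j}$ are i.i.d.\ standard normal and independent of the activities, which are measurable with respect to the weights and biases up to layer $\depthSymbol-1$; since the weights have mean zero, each summand has mean zero and $\projectionVar = \projectionCoefficients^\top \Cov[\tilde{\activitySymbol}_j^{(\depthSymbol)}\atWidth]\,\projectionCoefficients$. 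For each fixed $\sequenceVariable$ the activities are polynomially bounded functions (via Condition \ref{definition:envelope}) of finitely many Gaussians and so have finite moments of every order, making this quadratic form well defined.

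Next I would compute the $(i,i')$ entry of $\Cov[\tilde{\activitySymbol}_j^{(\depthSymbol)}\atWidth]$. Using independence of the weights from the activities and $\expectation{\indexedStandardNormal{\depthSymbol}{(i),j}\indexedStandardNormal{\depthSymbol}{(i'),j}}{} = \kron_{(i),(i')}$, this entry equals $\kron_{(i),(i')}\,\expectation{\indexedActivity{\depthSymbol-1}{j}{\inputSymbol_{(i)}}\atWidth\,\indexedActivity{\depthSymbol-1}{j}{\inputSymbol_{(i')}}\atWidth}{}$. Thus the covariance matrix is diagonal in the pre-nonlinearity index, the surviving entries being the second moments of the previous-layer activities at the corresponding inputs. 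This already matches the block structure of $\genericCov(\projectionIndeces)$, which carries the same $\kron_{(i),(i')}$ factor coming from the recursion in Lemma \ref{lemma:recursion}.

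To take $\sequenceVariable \to \infty$ I would write $\indexedActivity{\depthSymbol-1}{j}{\inputSymbol}\atWidth = \nonlinearity(\indexedActivation{\depthSymbol-1}{j}{\inputSymbol}\atWidth)$ and invoke the inductive hypothesis: the pair $(\indexedActivation{\depthSymbol-1}{j}{\inputSymbol_{(i)}}\atWidth,\, \indexedActivation{\depthSymbol-1}{j}{\inputSymbol_{(i')}}\atWidth)$ converges in distribution to the bivariate centred Gaussian with covariance prescribed by Lemma \ref{lemma:recursion}. Continuity of $\nonlinearity$ and the continuous mapping theorem then give convergence in distribution of the products $\nonlinearity(\cdot)\nonlinearity(\cdot)$, and the surviving entries converge to $\expectation{\nonlinearity(\integrationVariable_1)\nonlinearity(\integrationVariable_2)}{(\integrationVariable_1,\integrationVariable_2)\sim\normal(0,\genericCov)}$, i.e.\ the layer-$\depthSymbol$ Gram entry of Lemma \ref{lemma:recursion} with the bias term omitted. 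Reassembling the diagonal quadratic form yields $\limitVar = \projectionCoefficients^\top \genericCov(\projectionIndeces)\,\projectionCoefficients$, the claimed value.

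The main obstacle is this last step: since $\nonlinearity \cdot \nonlinearity$ is unbounded, convergence in distribution of the activations does not by itself deliver convergence of the second moments of the activities, so the exchange of limit and expectation must be justified by uniform integrability. This is exactly where Condition \ref{definition:envelope} is indispensable, as it bounds the product by $(\envelopeconstant + \envelopegradient|\indexedActivation{\depthSymbol-1}{j}{\inputSymbol_{(i)}}\atWidth|)(\envelopeconstant + \envelopegradient|\indexedActivation{\depthSymbol-1}{j}{\inputSymbol_{(i')}}\atWidth|)$, which grows at most quadratically. Uniform integrability then follows from the Cauchy--Schwarz inequality together with a uniform-in-$\sequenceVariable$ bound on a moment of the activations of order strictly above two, a bound of precisely the type furnished by the envelope-based inductive estimates underlying Lemma \ref{lemma:3rd_abs_mmnt_summands}.
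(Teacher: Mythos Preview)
Your proposal is correct and follows essentially the same route as the paper: write $\projectionVar$ as the quadratic form $\projectionCoefficients^\top \expectation{\tilde{\activitySymbol}_j^{(\depthSymbol)}\atWidth\,\tilde{\activitySymbol}_j^{(\depthSymbol)}\atWidth^\top}{}\,\projectionCoefficients$, compute its entries via independence of the layer-$\depthSymbol$ weights from the previous activities, use the continuous mapping theorem on $\nonlinearity(\cdot)\nonlinearity(\cdot)$, and close the argument with uniform integrability to exchange limit and expectation. The only cosmetic difference is that the paper packages the uniform integrability step into a separate lemma (bounding eighth moments of the post-nonlinearities uniformly in $\sequenceVariable$), whereas you sketch it directly via Cauchy--Schwarz and a higher-moment bound on the activations; both rest on the same envelope-based inductive moment estimate that also underlies Lemma~\ref{lemma:3rd_abs_mmnt_summands}.
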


\begin{proof} 
Lemma~\ref{lemma:limitVar} introduces $\genericCov(\projectionIndeces)$ which is the~marginal covariance of the~limiting Gaussian process without the~bias term (c.f.\ the~recursion in Lemma~\ref{lemma:recursion}).

We use exchangeability of $\summand^{(\depthSymbol)}_{j}(\projectionIndeces,\projectionCoefficients)\atWidth$ over the~index $j$ to obtain
\begin{equation*}
\projectionVar
=
\expectation{
	(\summand^{(\depthSymbol)}_{1}(\projectionIndeces,\projectionCoefficients)\atWidth)^2
}{}
= 
\projectionCoefficients^\top 
\expectation{
	\tilde{\activitySymbol}_1^{(\depthSymbol)}\atWidth
	\tilde{\activitySymbol}_1^{(\depthSymbol)}\atWidth^\top
}{}
\projectionCoefficients
\, .
\end{equation*}
Hence the~limit of $\projectionVar$ is fully determined by the~behaviour of $\tilde{\activitySymbol}_1^{(\depthSymbol)}\atWidth$ as $\sequenceVariable \to \infty$.

We can thus focus on individual entries of the~expectation on the~RHS of the~above equation. For entry $(i, j)$ with $i, j \in \{1, \ldots, |\projectionIndeces|\}$, we have
\begin{align*}
\expectation{
	\tilde{\activitySymbol}_1^{(\depthSymbol)}\atWidth_i \,
	\tilde{\activitySymbol}_1^{(\depthSymbol)}\atWidth_j
}{}
=
\delta_{(i) = (j)}
\expectation{
	\indexedActivity{\depthSymbol-1}{1}{\inputSymbol_{(i)}}\atWidth
	\indexedActivity{\depthSymbol-1}{1}{\inputSymbol_{(j)}}\atWidth
}{}
\, .
\end{align*}

Since $\indexedActivity{\depthSymbol-1}{1}{\inputSymbol_{(k)}}\atWidth = \nonlinearity(\indexedActivation{\depthSymbol-1}{1}{\inputSymbol_{(k)}}\atWidth), k \in \{i, j\}$ and the~collection  $\{\indexedActivation{\depthSymbol-1}{1}{\inputSymbol}\atWidth\}_{\inputSymbol \in \indexSet}$ converges in distribution as $\sequenceVariable \to \infty$ by assumption, we can use the~continuity of $\nonlinearity$ and the~continuous mapping theorem to deduce that the~post-nonlinearities are converging in distribution. Because the~function $h(x_1, x_2) = x_1 x_2$ is continuous, we can apply the~continuous mapping theorem again to deduce that the~two-way products of post-nonlinearities are converging in distribution to the~limit specified by the~pushforward of the~limiting multivariate normal distribution.

Theorem~3.5 in \citep{billingsey68} tells us that the~expectation
\begin{equation*}
\lim_{\sequenceVariable \to \infty}
\expectation{
	\indexedActivity{\depthSymbol-1}{1}{\inputSymbol_{(i)}}\atWidth
	\indexedActivity{\depthSymbol-1}{1}{\inputSymbol_{(j)}}\atWidth
}{}
=
\expectation{
	\indexedActivity{\depthSymbol-1}{1}{\inputSymbol_{(i)}}\atLimit
	\indexedActivity{\depthSymbol-1}{1}{\inputSymbol_{(j)}}\atLimit
}{}
\, ,
\end{equation*}
if the~family of random variables indexed by $\sequenceVariable$ is uniformly integrable. Uniform integrability is a~corollary of Lemma~\ref{lemma:integrability_postnonlin}. Inspection of the~recursion in Lemma~\ref{lemma:limitVar} finishes the~proof.
\end{proof}


\begin{proof}[Proof of Lemma~\ref{lemma:convg_product_2nd_mmnts}]
Substituting for $\genericRV_{\sequenceVariable, 1}$ and $\genericRV_{\sequenceVariable, 2}$, we have
\begin{equation}\label{eq:2nd_mmnt_product_summands}
\expectation{
	\left|
		\summand^{(\depthSymbol)}_{1}(\projectionIndeces,\projectionCoefficients)\atWidth
		\summand^{(\depthSymbol)}_{2}(\projectionIndeces,\projectionCoefficients)\atWidth
	\right|^2
}{}
=
\projectionCoefficients^\top
\expectation{
	\tilde{\activitySymbol}_1^{(\depthSymbol)}\atWidth 
	\tilde{\activitySymbol}_1^{(\depthSymbol)}\atWidth^\top 
	\projectionCoefficients \projectionCoefficients^\top
	\tilde{\activitySymbol}_2^{(\depthSymbol)}\atWidth 
	\tilde{\activitySymbol}_2^{(\depthSymbol)}\atWidth^\top 
}{}
\projectionCoefficients
\, .
\end{equation}
The~expectation on the~RHS can be rewritten as
\begin{equation*}
\expectation{
	\tilde{\activitySymbol}_1^{(\depthSymbol)}\atWidth 
	\tilde{\activitySymbol}_1^{(\depthSymbol)}\atWidth^\top 
	\projectionCoefficients \projectionCoefficients^\top
	\tilde{\activitySymbol}_2^{(\depthSymbol)}\atWidth 
	\tilde{\activitySymbol}_2^{(\depthSymbol)}\atWidth^\top 
}{}
=
\sum_{i=1}^{|\projectionIndeces|} \sum_{j=1}^{|\projectionIndeces|}
\projectionCoefficients_i \projectionCoefficients_j
\expectation{
	\tilde{\activitySymbol}_1^{(\depthSymbol)}\atWidth_i \,
	\tilde{\activitySymbol}_2^{(\depthSymbol)}\atWidth_j \,
	\tilde{\activitySymbol}_1^{(\depthSymbol)}\atWidth \,
	\tilde{\activitySymbol}_2^{(\depthSymbol)}\atWidth^\top
}{}
\, ,
\end{equation*}
Hence the~limit of the~LHS of Equation~\eqref{eq:2nd_mmnt_product_summands} is fully determined by the~behaviour of $\tilde{\activitySymbol}_t^{(\depthSymbol)}\atWidth, t = 1, 2$, as $\sequenceVariable \to \infty$. We can thus focus on individual entries of the~expectation on the~RHS of the~above equation. For entry $(k, l)$ with $k, l \in \{1, \ldots, |\projectionIndeces|\}$, we have
\begin{align}\label{eq:modif_postnonlin_quad_form}
&\expectation{
	\tilde{\activitySymbol}_1^{(\depthSymbol)}\atWidth_i \,
	\tilde{\activitySymbol}_2^{(\depthSymbol)}\atWidth_j \,
	\tilde{\activitySymbol}_1^{(\depthSymbol)}\atWidth_k \,
	\tilde{\activitySymbol}_2^{(\depthSymbol)}\atWidth_l
}{}
\nonumber \\
&=
\delta_{(i) = (k)}
\delta_{(j) = (l)}
\expectation{
	\indexedActivity{\depthSymbol-1}{1}{\inputSymbol_{(i)}}\atWidth
	\indexedActivity{\depthSymbol-1}{2}{\inputSymbol_{(j)}}\atWidth
	\indexedActivity{\depthSymbol-1}{1}{\inputSymbol_{(k)}}\atWidth
	\indexedActivity{\depthSymbol-1}{2}{\inputSymbol_{(l)}}\atWidth
}{}
\, .
\end{align}

In analogy with the~proof of Lemma~\ref{lemma:conditional_limit_std}, we can establish convergence in distribution of the~four-way product inside the~RHS expectation, and combine Lemma~\ref{lemma:integrability_postnonlin} with Theorem~3.5 in \citep{billingsey68} to get convergence in distribution as $\sequenceVariable \to \infty$. Hence $\expectation{| \summand^{(\depthSymbol)}_{1}(\projectionIndeces,\projectionCoefficients)\atWidth \summand^{(\depthSymbol)}_{2}(\projectionIndeces,\projectionCoefficients)\atWidth |^2}{}$ converges to a~limit which is a~function of terms
\begin{align*}
&\expectation{
	\indexedActivity{\depthSymbol-1}{1}{\inputSymbol_{(i)}}\atLimit
	\indexedActivity{\depthSymbol-1}{2}{\inputSymbol_{(j)}}\atLimit
	\indexedActivity{\depthSymbol-1}{1}{\inputSymbol_{(k)}}\atLimit
	\indexedActivity{\depthSymbol-1}{2}{\inputSymbol_{(l)}}\atLimit
}{}
\nonumber \\
&=
\expectation{
	\indexedActivity{\depthSymbol-1}{1}{\inputSymbol_{(i)}}\atLimit
	\indexedActivity{\depthSymbol-1}{1}{\inputSymbol_{(k)}}\atLimit
}{}
\expectation{
	\indexedActivity{\depthSymbol-1}{2}{\inputSymbol_{(j)}}\atLimit
	\indexedActivity{\depthSymbol-1}{2}{\inputSymbol_{(l)}}\atLimit
}{}
\, .
\end{align*}
Substituting back and inspecting the~recursion in Lemma~\ref{lemma:limitVar} concludes the~proof.
\end{proof}

%

\begin{proof}[Proof of Lemma~\ref{lemma:3rd_abs_mmnt_summands}]
By H{\" o}lder's inequality, it is sufficient to exhibit a bound on the sequence of fourth moments, which are algebraically convenient to work with. Hence it is sufficient to prove that
\begin{equation*}
\expectation{
	\left|
		\summand^{(\depthSymbol)}_{1}(\projectionIndeces,\projectionCoefficients)\atWidth
	\right|^4
}{}
=
\projectionCoefficients^\top
\expectation{
	\tilde{\activitySymbol}_1^{(\depthSymbol)}\atWidth 
	\tilde{\activitySymbol}_1^{(\depthSymbol)}\atWidth^\top 
	\projectionCoefficients \projectionCoefficients^\top
	\tilde{\activitySymbol}_1^{(\depthSymbol)}\atWidth 
	\tilde{\activitySymbol}_1^{(\depthSymbol)}\atWidth^\top 
}{}
\projectionCoefficients
\, ,
\end{equation*}
is bounded by a~constant independent of $\sequenceVariable$. A~way to obtain such constant is to bound each term inside the~RHS expectation. We rearrange
\begin{equation*}
\expectation{
	\tilde{\activitySymbol}_1^{(\depthSymbol)}\atWidth 
	\tilde{\activitySymbol}_1^{(\depthSymbol)}\atWidth^\top 
	\projectionCoefficients \projectionCoefficients^\top
	\tilde{\activitySymbol}_1^{(\depthSymbol)}\atWidth 
	\tilde{\activitySymbol}_1^{(\depthSymbol)}\atWidth^\top 
}{}
=
\sum_{i=1}^{|\projectionIndeces|} \sum_{j=1}^{|\projectionIndeces|}
\projectionCoefficients_i \projectionCoefficients_j
\expectation{
	\tilde{\activitySymbol}_1^{(\depthSymbol)}\atWidth_i \,
	\tilde{\activitySymbol}_1^{(\depthSymbol)}\atWidth_j \,
	\tilde{\activitySymbol}_1^{(\depthSymbol)}\atWidth \,
	\tilde{\activitySymbol}_1^{(\depthSymbol)}\atWidth^\top
}{}
\, .
\end{equation*}
Hence it is sufficient to ensure that the~expectations
\begin{equation*}
\expectation{
	\tilde{\activitySymbol}_1^{(\depthSymbol)}\atWidth_i \,
	\tilde{\activitySymbol}_1^{(\depthSymbol)}\atWidth_j \,
	\tilde{\activitySymbol}_1^{(\depthSymbol)}\atWidth_k \,
	\tilde{\activitySymbol}_1^{(\depthSymbol)}\atWidth_l
}{}
\, ,
\end{equation*}
are bounded by a~constant independent of $\sequenceVariable$ for any combination of $i, j, k, l \in \{1, \ldots, |\projectionIndeces|\}$. Substituting back for $\tilde{\activitySymbol}_1^{(\depthSymbol)}\atWidth$
\begin{align*}
&\expectation{
	\tilde{\activitySymbol}_1^{(\depthSymbol)}\atWidth_i \,
	\tilde{\activitySymbol}_1^{(\depthSymbol)}\atWidth_j \,
	\tilde{\activitySymbol}_1^{(\depthSymbol)}\atWidth_k \,
	\tilde{\activitySymbol}_1^{(\depthSymbol)}\atWidth_l
}{}
\nonumber \\
&=
\delta_{(i) = (j) = (k) = (l)}
\expectation{
	\indexedActivity{\depthSymbol-1}{1}{\inputSymbol_{(i)}}\atWidth
	\indexedActivity{\depthSymbol-1}{1}{\inputSymbol_{(j)}}\atWidth
	\indexedActivity{\depthSymbol-1}{1}{\inputSymbol_{(k)}}\atWidth
	\indexedActivity{\depthSymbol-1}{1}{\inputSymbol_{(l)}}\atWidth
}{}
\, ,
\end{align*}
We thus only need to bound the~second factor on the~RHS. After upper bounding by the~absolute value, we can use Lemma~\ref{remark:joint_moments_bound} to conclude it is sufficient to bound the~fourth moment of $\indexedActivity{\depthSymbol-1}{1}{\inputSymbol_{(t)}}\atWidth$ for $t = 1, \ldots , |\projectionIndeces|$.\footnote{This result can be obtained by allowing only $p_i = 0, 1$ in Lemma~\ref{remark:joint_moments_bound}.} Using the~linear envelope condition, we see
\begin{equation*}
\expectation{
	\left|
		\indexedActivity{\depthSymbol-1}{1}{\inputSymbol_{(t)}}\atWidth
	\right|^4
}{}
\leq
2^{4 - 1}
\expectation{
	\envelopeconstant^4
	+
	\envelopegradient^4
	\left|
		\indexedActivation{\depthSymbol-1}{1}{\inputSymbol_{(t)}}\atWidth
	\right|^4
}{}
\, .
\end{equation*}
By Lemma~\ref{lemma:mmnt_bound_prenonlin} and a~simple application of H{\" o}lder's inequality, we know that the~fourth moment above is bounded by a~constant independent of $\sequenceVariable$. Because we are only considering a~finite set of inputs, we can bound the~fourth moments for all $\indexedActivation{\depthSymbol-1}{1}{\inputSymbol_{(t)}}\atWidth$ by a~shared constant, namely the~maximum over $t \in \{1, \ldots, |\projectionIndeces| \}$. This constant is independent of $\sequenceVariable$ which concludes the~proof.
\end{proof}

\subsection{Proofs of auxiliary results}

The~following results are useful in proving Lemmas~\ref{lemma:convg_product_2nd_mmnts} and~\ref{lemma:3rd_abs_mmnt_summands}.

\begin{lemma}\label{remark:joint_moments_bound} 
Suppose $\genericRV_1$, $\genericRV_2$, $\genericRV_3$, and $\genericRV_4$ are random variables on $\realLine$ with the~usual Borel $\sigma$-algebra. Assume that $\expectation{|\genericRV_i|^8}{} < \infty$ for all $i = 1, 2, 3, 4$. Then for any choice of $p_i = 0, 1, 2$ (where $i = 1, 2, 3, 4$), the expectations $\mathbb{E} [\prod_{i=1}^{4} |\genericRV_i|^{p_i}]$ are uniformly bounded by a polynomial in the $8$\textsuperscript{th} moments $\expectation{|\genericRV_i|^8}{} < \infty$ for $i=1,\ldots,4$.
\end{lemma}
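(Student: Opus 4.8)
The plan is to reduce the entire statement to a single application of the weighted arithmetic--geometric mean inequality (equivalently Young's inequality), after making one elementary degree count. The key observation is that since each $p_i \in \{0,1,2\}$ and there are four factors, the total degree satisfies $\sum_{i=1}^4 p_i \le 8$. This is exactly what makes the eighth moments the natural quantities to bound against: each factor can be written as $|\genericRV_i|^{p_i} = (|\genericRV_i|^8)^{p_i/8}$ with weight $\theta_i := p_i/8$, and these weights sum to at most one, which is the hypothesis under which weighted AM--GM applies.

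First I would record the pointwise inequality. Setting $\theta_i = p_i/8$ and $\theta_0 = 1 - \sum_{i=1}^4 \theta_i \ge 0$, weighted AM--GM applied to the nonnegative numbers $|\genericRV_i|^8$, together with a dummy factor equal to $1$ carrying the leftover weight $\theta_0$, gives at every sample point
\[
\prod_{i=1}^4 |\genericRV_i|^{p_i}
= \prod_{i=1}^4 \bigl(|\genericRV_i|^8\bigr)^{\theta_i}
\le \theta_0 + \sum_{i=1}^4 \theta_i\, |\genericRV_i|^8 \, .
\]
Taking expectations and using $\theta_0 \le 1$ then yields
\[
\expectation{\prod_{i=1}^4 |\genericRV_i|^{p_i}}{}
\le 1 + \frac{1}{8}\sum_{i=1}^4 p_i\, \expectation{|\genericRV_i|^8}{} \, ,
\]
which is manifestly an affine (hence polynomial) function of the eighth moments $\expectation{|\genericRV_i|^8}{}$, with coefficients that do not depend on the particular admissible choice of $(p_i)_{i=1}^4$. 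The hypothesis $\expectation{|\genericRV_i|^8}{} < \infty$ makes the right-hand side finite, so in particular the left-hand expectation is finite and the bound is meaningful.

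An equally short alternative avoids the unbalanced form of Young's inequality: one uses the crude pointwise bound $|\genericRV_i|^{p_i} \le 1 + |\genericRV_i|^2$ (immediate for $p_i \in \{0,1,2\}$), expands the product $\prod_{i=1}^4 (1+|\genericRV_i|^2)$ into $2^4$ monomials indexed by subsets $S \subseteq \{1,2,3,4\}$, and bounds each $\expectation{\prod_{i\in S}|\genericRV_i|^2}{}$ by the balanced generalized H\"older inequality, so that each factor becomes $(\expectation{|\genericRV_i|^{2|S|}}{})^{1/|S|}$. Since $|S| \le 4$ forces $2|S| \le 8$, Lyapunov's inequality bounds each such moment by a power of the eighth moment, and the crude inequality $t^\theta \le 1 + t$ (valid for $t \ge 0$, $\theta \in [0,1]$) converts the fractional powers into a genuine polynomial; this route produces a degree-four polynomial bound rather than the affine one above.

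There is no substantive obstacle here: the whole content is the degree count $\sum_{i=1}^4 p_i \le 8$, which matches the hypothesis to the available moment bound, plus a standard convexity inequality. The only points needing mild care are the handling of the weight deficit $\theta_0$ when $\sum_i p_i < 8$ (resolved by assigning the leftover weight to a constant, which is why working on a probability space is convenient), and the bookkeeping that the resulting coefficients are uniform over all admissible tuples $(p_i)_{i=1}^4$, as required by the word \emph{uniformly} in the statement.
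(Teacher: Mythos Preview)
Your proof is correct and considerably simpler than the paper's. The paper does not use AM--GM or Young's inequality; instead it repeatedly applies the correlation bound
\[
\expectation{|XY|}{} \le \expectation{|X|}{}\,\expectation{|Y|}{} + \sqrt{\Var(|X|)\,\Var(|Y|)}\,,
\]
first to split $\expectation{\prod_{i=1}^4 |\genericRV_i|^{p_i}}{}$ into two pairwise expectations and a square-root of variances, then again on each pair, and finally invokes H\"older to push all the resulting lower moments up to eighth moments. This produces a genuine higher-degree polynomial in the $\expectation{|\genericRV_i|^8}{}$ rather than your affine bound. Your route is more direct: the single degree count $\sum p_i \le 8$ plus weighted AM--GM settles everything at once, and the uniformity over admissible $(p_i)$ is immediate after replacing each $p_i$ by $2$ in the coefficients (a cosmetic step you mention but do not carry out explicitly in the displayed inequality). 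The paper's argument, by contrast, has the minor advantage of making the recursive pairwise structure visible, which mirrors how the lemma is actually invoked later (on two-way and four-way products of activities), but this is not needed for the lemma as stated.
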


\begin{proof}
Throughout this proof, we will be using the~following inequality
\begin{equation*}
\expectation{|\genericRV \genericRVB|}{}
\leq
\expectation{|\genericRV|}{} \expectation{|\genericRVB|}{}
+
\left\{
	\Var(|\genericRV|) \Var(|\genericRVB|)
\right\}^{1 / 2}
\, ,
\end{equation*}
which can be derived from the~boundedness of Pearson correlation coefficient.

Using the~above inequality, we have
\begin{equation*}\label{eq:4way_prod_ub}
\expectation{\prod_{i=1}^{4} |\genericRV_i|^{p_i}}{}
\leq
\expectation{|\genericRV_1|^{p_1} |\genericRV_2|^{p_2}}{}
\expectation{|\genericRV_3|^{p_3} |\genericRV_4|^{p_4}}{}
+
\left\{
	\Var[|\genericRV_1|^{p_1} |\genericRV_2|^{p_2}]
	\Var[|\genericRV_3|^{p_3} |\genericRV_4|^{p_4}]
\right\}^{1/2}
\, .
\end{equation*}
The~expectations in the~first term on the~RHS can then be again upper bounded, for example
\begin{equation*}
\expectation{|\genericRV_1|^{p_1} |\genericRV_2|^{p_2}}{}
\leq
\expectation{|\genericRV_1|^{p_1}}{}
\expectation{|\genericRV_2|^{p_2}}{}
+
\left\{
	\Var[|\genericRV_1|^{p_1}]
	\Var[|\genericRV_2|^{p_2}]
\right\}^{1/2}
\, ,
\end{equation*}
which is bounded if $\expectation{|\genericRV_i|^{4}}{} < \infty$ for $i = 1, 2$. Similarly for $\expectation{|\genericRV_3|^{p_3} |\genericRV_4|^{p_4}}{}$.

The~second term of the~first upper bound can be upper bounded in similar way
\begin{equation*}
\Var[|\genericRV_1|^{p_1} |\genericRV_2|^{p_2}]
\leq
\expectation{|\genericRV_1|^{2 p_1} |\genericRV_2|^{2 p_2}}{}
\, ,
\end{equation*}
where $\expectation{|\genericRV_1|^{2 p_1} |\genericRV_2|^{2 p_2}}{}$ can again be upper bounded by argument similar to the~above, yielding an upper bound that may be expressed as a fixed polynomial in $\expectation{|\genericRV_i|^{8}}{}$ for $i = 1, 2, 3, 4$, as $p_i \leq 2$ and the~lower order absolute moments may be bounded by exponents of the~higher ones via H{\" o}lder's inequality.
\end{proof}

\begin{lemma}\label{remark:exp_power_of_gauss_norm}
Assume $\weightSymbol_1, \ldots, \weightSymbol_k \in \realLine$ are arbitrary constants, and $\varepsilon_i$, $i = 1, \ldots, k$, are i.i.d.\ standard normal variables. Define the vector $w = (w_i)_{i=1}^k$. Then for $p \geq 0$
\begin{equation*}
\expectation{
	\left|
		\sum_{i=1}^{k}
			\weightSymbol_i \varepsilon_i
	\right|^p
}{}
=
\|w\|_2^p
\frac{2^{\tfrac{p}{2}} \Gamma(\tfrac{p + 1}{2})}{\Gamma(\tfrac{1}{2})}
\, .
\end{equation*}
\end{lemma}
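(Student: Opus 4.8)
The plan is to reduce the computation to a single one-dimensional absolute Gaussian moment, exploiting the stability of the normal family under linear combinations. The key observation is that $S := \sum_{i=1}^{k} \weightSymbol_i \varepsilon_i$ is itself Gaussian: since the $\varepsilon_i \sim \normal(0,1)$ are independent, $S$ has mean zero and variance $\sum_{i=1}^{k} \weightSymbol_i^2 = \|w\|_2^2$, so $S$ is equal in distribution to $\|w\|_2 Z$ with $Z \sim \normal(0,1)$. First I would establish this fact, either by a characteristic-function argument or by directly invoking that a linear image of a Gaussian vector is Gaussian, and I would dispose of the degenerate case $\|w\|_2 = 0$ separately: there $S \equiv 0$, both sides vanish for $p > 0$, and both equal $1$ for $p = 0$ under the convention $0^0 = 1$.

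Given $S \stackrel{d}{=} \|w\|_2 Z$, homogeneity of $|\cdot|^p$ immediately yields $\expectation{|S|^p}{} = \|w\|_2^p \, \expectation{|Z|^p}{}$, so it remains only to evaluate the $p$-th absolute moment of a standard normal. I would compute this directly from the density: by symmetry $\expectation{|Z|^p}{} = \tfrac{2}{\sqrt{2\pi}} \int_0^{\infty} z^p e^{-z^2/2}\, dz$, and the substitution $u = z^2/2$ turns the integral into a Gamma integral, giving $\expectation{|Z|^p}{} = 2^{p/2}\,\Gamma\!\left(\tfrac{p+1}{2}\right)/\sqrt{\pi}$. Since $\Gamma(\tfrac{1}{2}) = \sqrt{\pi}$, this is precisely the constant appearing in the statement, and multiplying through by $\|w\|_2^p$ completes the argument.

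There is no substantial obstacle here: the claim is a standard moment identity, and the only care required is in the bookkeeping of the substitution and in treating the degenerate case $\|w\|_2 = 0$. The one point worth stating cleanly is that the identity holds for all real $p \geq 0$, not merely integer $p$; this is automatic from the Gamma-function representation of the moment integral, which is exactly why the lemma is phrased for continuous $p$ (as needed when invoking it with $p = 3$ or $p = 4$ in the preceding lemmas).
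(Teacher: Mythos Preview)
Your proposal is correct and follows essentially the same route as the paper: reduce $\sum_i w_i\varepsilon_i$ to $\|w\|_2\,\tilde\varepsilon$ with $\tilde\varepsilon\sim\normal(0,1)$, pull out the scale factor, and evaluate $\expectation{|\tilde\varepsilon|^p}{}$. The only cosmetic difference is that the paper cites the Generalised Gamma distribution of $|\tilde\varepsilon|^p$ to read off the moment, whereas you compute the Gamma integral directly; your version is slightly more self-contained and also handles the degenerate case $\|w\|_2=0$ explicitly, which the paper omits.
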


\begin{proof}
Use the~linearity of the~dot product and Gaussianity of $\varepsilon_i$'s to obtain
\begin{equation*}
\expectation{
	\left|
		\sum_{i=1}^{k}
			\weightSymbol_i \varepsilon_i
	\right|^p
}{}
=
\expectation{
	\left|
		\|w\|_2 \tilde{\varepsilon}
	\right|^p
}{}
=
\|w\|_2^p \,
\expectation{
	\left|
		\tilde{\varepsilon}
	\right|^p
}{}
\, ,
\end{equation*}
where $\tilde{\varepsilon}$ is a~standard normal random variable. The~result is then obtained by realising that powers of standard normal are distributed according to Generalised Gamma variable for which the~expectation is known.
\end{proof}

\begin{lemma}\label{lemma:mmnt_bound_prenonlin} 
For any given $\depthSymbol \in \{ 1,2,\ldots,\depth + 1 \}$, and input $\inputSymbol \in \indexSet$, the~eighth moments of the~random variables $\indexedActivation{\depthSymbol}{\widthSymbol}{\inputSymbol}\atWidth$ are bounded by a~finite constant independent of $\sequenceVariable \in \naturalNumbers$ and $\widthSymbol \in \naturalNumbers$.
\end{lemma}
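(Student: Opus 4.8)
The plan is to prove the bound by induction on the depth $\depthSymbol$, using the variance scaling $1/\sqrt{\widthFunction_{\depthSymbol-1}(\sequenceVariable)}$ to cancel the growth of the sum of squared activities, with the linear envelope property (Definition \ref{definition:envelope}) and the auxiliary moment estimates of Lemmas \ref{remark:exp_power_of_gauss_norm} and \ref{remark:joint_moments_bound} doing the book-keeping. For the base case $\depthSymbol = 1$, Equation \eqref{eq:initialRefined} shows that $\indexedActivation{1}{\widthSymbol}{\inputSymbol}\atWidth$ is a centred Gaussian with variance $\|\inputSymbol\|_2^2 + \biasVariance^{(1)}$ (taking $\weightVarianceDepthScaled = 1$), a quantity independent of both $\sequenceVariable$ and $\widthSymbol$; its eighth moment is therefore a fixed finite constant.

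For the inductive step, I would fix $\depthSymbol \geq 2$ and assume there is $K < \infty$ with $\expectation{|\indexedActivation{\depthSymbol-1}{\widthSymbolB}{\inputSymbol}\atWidth|^8}{} \leq K$ for all $\sequenceVariable$ and $\widthSymbolB$. Writing the recursion \eqref{eq:recursionRefined} as $\indexedActivation{\depthSymbol}{\widthSymbol}{\inputSymbol}\atWidth = S\atWidth + \indexedBias{\depthSymbol}{\widthSymbol}$ with $S\atWidth = \frac{1}{\sqrt{\widthFunction_{\depthSymbol-1}(\sequenceVariable)}}\sum_{\widthSymbolB=1}^{\widthFunction_{\depthSymbol-1}(\sequenceVariable)} \indexedStandardNormal{\depthSymbol}{\widthSymbol,\widthSymbolB}\,\indexedActivity{\depthSymbol-1}{\widthSymbolB}{\inputSymbol}\atWidth$, I would condition on the activities $\indexedActivity{\depthSymbol-1}{\widthSymbolB}{\inputSymbol}\atWidth = \nonlinearity(\indexedActivation{\depthSymbol-1}{\widthSymbolB}{\inputSymbol}\atWidth)$, which are independent of the weights $\indexedStandardNormal{\depthSymbol}{\widthSymbol,\dotIndex}$. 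Conditionally the inner sum is a linear combination of i.i.d.\ standard normals whose coefficients are the activities, so Lemma \ref{remark:exp_power_of_gauss_norm} with $p = 8$ gives its conditional eighth moment as a constant multiple of $\left(\sum_{\widthSymbolB} (\indexedActivity{\depthSymbol-1}{\widthSymbolB}{\inputSymbol}\atWidth)^2\right)^4$. Taking expectations and collecting the prefactor yields
$$\expectation{|S\atWidth|^8}{} = \frac{C_8}{\widthFunction_{\depthSymbol-1}(\sequenceVariable)^4}\,\expectation{\left(\sum_{\widthSymbolB=1}^{\widthFunction_{\depthSymbol-1}(\sequenceVariable)} (\indexedActivity{\depthSymbol-1}{\widthSymbolB}{\inputSymbol}\atWidth)^2\right)^4}{}\, ,$$
where $C_8$ is the Gaussian constant from Lemma \ref{remark:exp_power_of_gauss_norm}.

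The core of the argument is to show that the fourth moment of $\sum_{\widthSymbolB} (\indexedActivity{\depthSymbol-1}{\widthSymbolB}{\inputSymbol}\atWidth)^2$ grows no faster than $\widthFunction_{\depthSymbol-1}(\sequenceVariable)^4$, so that it exactly cancels the prefactor. Expanding the fourth power produces $\widthFunction_{\depthSymbol-1}(\sequenceVariable)^4$ terms of the form $\expectation{\prod_{t=1}^4 (\indexedActivity{\depthSymbol-1}{\widthSymbolB_t}{\inputSymbol}\atWidth)^2}{}$. For each factor the linear envelope property gives $(\indexedActivity{\depthSymbol-1}{\widthSymbolB_t}{\inputSymbol}\atWidth)^2 \leq (\envelopeconstant + \envelopegradient|\indexedActivation{\depthSymbol-1}{\widthSymbolB_t}{\inputSymbol}\atWidth|)^2$, so each term is dominated by a sum of products $\prod_{t} |\indexedActivation{\depthSymbol-1}{\widthSymbolB_t}{\inputSymbol}\atWidth|^{p_t}$ with exponents $p_t \in \{0,1,2\}$. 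Lemma \ref{remark:joint_moments_bound} then bounds every such expectation by a fixed polynomial in the eighth moments of the activations, which are at most $K$ by the inductive hypothesis; hence each of the $\widthFunction_{\depthSymbol-1}(\sequenceVariable)^4$ terms is at most some constant $B$ independent of $\sequenceVariable$ and of the width indices. This gives $\expectation{(\sum_{\widthSymbolB}(\indexedActivity{\depthSymbol-1}{\widthSymbolB}{\inputSymbol}\atWidth)^2)^4}{} \leq B\,\widthFunction_{\depthSymbol-1}(\sequenceVariable)^4$, so that $\expectation{|S\atWidth|^8}{} \leq C_8 B$ uniformly in $\sequenceVariable$ and $\widthSymbol$. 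Finally, $|a+b|^8 \leq 2^7(|a|^8 + |b|^8)$ together with the fixed eighth moment of the Gaussian bias $\indexedBias{\depthSymbol}{\widthSymbol}$ completes the inductive step.

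The hard part is precisely this cancellation: the sum of squared activities has a fourth moment of order $\widthFunction_{\depthSymbol-1}(\sequenceVariable)^4$, which must match the $1/\widthFunction_{\depthSymbol-1}(\sequenceVariable)^4$ coming from the scaled weights. Making this work requires that every mixed moment $\expectation{\prod_t (\indexedActivity{\depthSymbol-1}{\widthSymbolB_t}{\inputSymbol}\atWidth)^2}{}$ be uniformly bounded regardless of how the four width indices coincide, and it is exactly the linear envelope condition that keeps the power on each activation at most two, allowing Lemma \ref{remark:joint_moments_bound} to reduce everything to the inductively controlled eighth moments. Without the envelope property a nonlinearity could inflate these moments without bound, breaking the cancellation -- in keeping with the heavy-tailed counterexamples noted after Definition \ref{definition:envelope}.
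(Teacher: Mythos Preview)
Your proposal is correct and follows essentially the same approach as the paper: induction on the layer, splitting off the bias via $|a+b|^8 \le 2^7(|a|^8+|b|^8)$, conditioning on the previous activities and invoking Lemma~\ref{remark:exp_power_of_gauss_norm} to reduce to the fourth moment of the squared-activity sum, then using the linear envelope together with Lemma~\ref{remark:joint_moments_bound} so that the $\widthFunction_{\depthSymbol-1}(\sequenceVariable)^4$ term count cancels the $1/\widthFunction_{\depthSymbol-1}(\sequenceVariable)^4$ prefactor. The only cosmetic difference is that the paper applies the envelope inside the sum before taking the fourth power (hence counting $(3\widthFunction_{\depthSymbol-1}(\sequenceVariable))^4$ terms), whereas you expand the fourth power first and apply the envelope factor-by-factor; both orderings lead to the same bound.
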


\begin{proof}
The~statement is trivially true for $\depthSymbol = 1$: the~law of $\indexedActivation{1}{\widthSymbol}{\inputSymbol}\atWidth$ for any $(\widthSymbol, \inputSymbol) \in \naturalNumbers \times \indexSet$ is a~normal distribution by the~Gaussianity of the~weights and biases, $\indexedActivation{1}{\widthSymbol}{\inputSymbol}\atWidth$ is equal in law to $\indexedActivation{1}{\widthSymbol}{\inputSymbol}[m]$, $\forall (m, \sequenceVariable) \in \naturalNumbers \times \naturalNumbers$, implying that the~moments are bounded by a~constant independent of $\sequenceVariable$, and independence of the~constant of index $\widthSymbol$ is obtained by exchangeability.

We can thus proceed by induction. We assume that the~condition holds for all $\depthSymbol = 1, 2, \ldots, t - 1$ (for some $t \in \{2,\ldots,\depth+1\}$), and prove that it must then also necessarily hold for $\depthSymbol = t$. First we obtain the following upper bound
\begin{equation*}
\expectation{|\indexedActivation{t}{\widthSymbol}{\inputSymbol}\atWidth|^8}{}
\leq
2^{8 - 1} \expectation{
	|\biasSymbol_i^{(t)}|^8
	+
	\left|
		\sum_{j=1}^{\widthFunction_{t-1}(\rowIndex)}
			\weightSymbol_{i, j}^{(t)} \activitySymbol_j^{(t-1)}(\inputSymbol)\atWidth
	\right|^8
}{}
\, ,
\end{equation*}
noting that the~expectation of the~first term is uniformly bounded in $\widthSymbol$ by properties of the~Gaussian distribution. 

Hence we focus on the~second term. We use Lemma~\ref{remark:exp_power_of_gauss_norm} to obtain
\begin{align}
\expectation{
	\left|
		\sum_{j=1}^{\widthFunction_{t-1}(\rowIndex)}
			\weightSymbol_{i, j}^{(t)} \activitySymbol_j^{(t-1)}(\inputSymbol)\atWidth
	\right|^8
}{}
&=
\expectation{
	\expectation{
		\left|
			\sum_{j=1}^{\widthFunction_{t-1}(\rowIndex)}
				\weightSymbol_{i, j}^{(t)} \activitySymbol_j^{(t-1)}(\inputSymbol)\atWidth
		\right|^8
		\, \Bigg| \,
		\activitySymbol_{1:\widthFunction_{t - 1}(\rowIndex)}^{(t-1)}(\inputSymbol)\atWidth
	}{}
}{}
\nonumber \\
&=
\frac{2^4 \Gamma(4 + 1 / 2)}{\Gamma(1 / 2)} \,
\expectation{
	\left|
		\frac{\weightVarianceScaled^{(t-1)}}{\widthFunction_{t-1}(\rowIndex)}
		|| \activitySymbol_{1:\widthFunction_{t-1}(\rowIndex)}^{(t-1)}(\inputSymbol)\atWidth ||_2^2
	\right|^4
}{} \label{eq:lemma16:boundme}
\, ,
\end{align}
where $\activitySymbol_{1:\widthFunction_{t-1}(\rowIndex)}^{(t-1)}(\inputSymbol)\atWidth$ is the~set of post-nonlinearities corresponding to $\sequenceIndex = 1, 2, \ldots, \widthFunction_{t-1}(\rowIndex)$. Observe that
\begin{align*}
\frac{1}{\widthFunction_{t-1}(\rowIndex)}
|| \activitySymbol_{1:\widthFunction_{t-1}(\rowIndex)}^{(t-1)}(\inputSymbol)\atWidth ||_2^2
&=
\frac{1}{\widthFunction_{t-1}(\rowIndex)}
\sum_{j=1}^{\widthFunction_{t-1}(\rowIndex)}
	(\activitySymbol_j^{(t-1)}(\inputSymbol)\atWidth)^2 \\
&\leq
\frac{1}{\widthFunction_{t-1}(\rowIndex)}
\sum_{j=1}^{\widthFunction_{t-1}(\rowIndex)}
	(\envelopeconstant + \envelopegradient | \activationSymbol_j^{(t-1)}(\inputSymbol)\atWidth |)^2
\, ,
\end{align*}
by the~linear envelope property. Suppressing a multiplicative constant independent of $\inputSymbol$ and $\sequenceVariable$ and substituting this bound back into Expression \eqref{eq:lemma16:boundme} yields
\begin{align*}
&\expectation{
	\left|
	\sum_{j=1}^{\widthFunction_{t-1}(\rowIndex)}
	\weightSymbol_{i, j}^{(t)} \activitySymbol_j^{(t-1)}(\inputSymbol)\atWidth
	\right|^8
}{} 
\nonumber \\
\leq &
\frac{1}{\widthFunction_{t-1}(\rowIndex)^4}
\expectation{
	\left|
		\sum_{j=1}^{\widthFunction_{t-1}(\rowIndex)}
			\envelopeconstant^2 +
			2 \envelopeconstant \envelopegradient | \activationSymbol_j^{(t-1)}(\inputSymbol)\atWidth | +
			\envelopegradient^2 | \activationSymbol_j^{(t-1)}(\inputSymbol)\atWidth |^2
	\right|^4
}{} \, .
\end{align*}
The~above can be simply multiplied out, yielding a~weighted sum of expectations of the~form
\begin{equation*}
\expectation{
	| \activationSymbol_k^{(t-1)}(\inputSymbol)\atWidth |^{p_1} \,
	| \activationSymbol_l^{(t-1)}(\inputSymbol)\atWidth |^{p_2} \,
	| \activationSymbol_r^{(t-1)}(\inputSymbol)\atWidth |^{p_3} \,
	| \activationSymbol_q^{(t-1)}(\inputSymbol)\atWidth |^{p_4}
}{} \, , 
\end{equation*}
with $p_i \in \{0, 1, 2\}$ for $i = 1, 2, 3, 4$, and $k,l,r,q \in \{1,\ldots,\widthFunction_{t-1}(\sequenceVariable)\}$, and where the weights of these terms are independent of $\sequenceVariable$. Using Lemma~\ref{remark:joint_moments_bound}, each of these terms is bounded if the~eighth moments of $\activationSymbol_k^{(t-1)}(\inputSymbol)\atWidth$ are bounded which is our inductive hypothesis. The~number of terms in the~expanded sum is upper bounded by $(3 \widthFunction_{t-1}(\rowIndex))^4$ and thus we can use the~same constant for any $\sequenceVariable \in \naturalNumbers$ due to the~$1 / \widthFunction_{\depthSymbol}(\rowIndex)^4$ scaling. Noticing that $\activationSymbol_j^{(t-1)}(\inputSymbol)\atWidth$ are exchangeable over the~index $j$ for any fixed $\inputSymbol$ and $\sequenceVariable$ concludes the~proof.
\end{proof}

\begin{lemma}\label{lemma:integrability_postnonlin}  
Consider a~collection of random variables $\activitySymbol_{i}^{(\depthSymbol)}(\inputSymbol_1)\atWidth$, $\activitySymbol_{j}^{(\depthSymbol)}(\inputSymbol_2)\atWidth$, $\activitySymbol_{k}^{(\depthSymbol)}(\inputSymbol_3)\atWidth$, and $\activitySymbol_{l}^{(\depthSymbol)}(\inputSymbol_4)\atWidth$ with any $i, j, k, l \in \naturalNumbers$, $\inputSymbol_1,\allowbreak \inputSymbol_2,\allowbreak \inputSymbol_3,\allowbreak \inputSymbol_4 \in \indexSet$, neither necessarily distinct. Then the~family of random variables
\begin{equation}\label{eq:postnonlin_4prod}
\activitySymbol_{i}^{(\depthSymbol)}(\inputSymbol_1)\atWidth
\activitySymbol_{j}^{(\depthSymbol)}(\inputSymbol_2)\atWidth
\activitySymbol_{k}^{(\depthSymbol)}(\inputSymbol_3)\atWidth
\activitySymbol_{l}^{(\depthSymbol)}(\inputSymbol_4)\atWidth
\, ,
\end{equation}
indexed by $\sequenceVariable$ is uniformly integrable for any $\depthSymbol = 1, 2, \ldots, \depth + 1$.
\end{lemma}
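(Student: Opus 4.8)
The plan is to deduce uniform integrability from the classical sufficient condition that a family of random variables which is bounded in $L^p$ for some $p > 1$ is automatically uniformly integrable. Concretely, I would take $p = 2$ and show that the second moment of the product appearing in Expression \eqref{eq:postnonlin_4prod} is bounded above by a constant that does not depend on $\sequenceVariable$. Since a bound on the $L^2$ norm that is uniform in $\sequenceVariable$ implies uniform integrability of the family indexed by $\sequenceVariable$, this is enough to conclude.

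To carry this out, first I would write the square of the four-way product as
\begin{equation*}
\expectation{\left( \activitySymbol_{i}^{(\depthSymbol)}(\inputSymbol_1)\atWidth \activitySymbol_{j}^{(\depthSymbol)}(\inputSymbol_2)\atWidth \activitySymbol_{k}^{(\depthSymbol)}(\inputSymbol_3)\atWidth \activitySymbol_{l}^{(\depthSymbol)}(\inputSymbol_4)\atWidth \right)^2}{} = \expectation{(\activitySymbol_{i}^{(\depthSymbol)}(\inputSymbol_1)\atWidth)^2 (\activitySymbol_{j}^{(\depthSymbol)}(\inputSymbol_2)\atWidth)^2 (\activitySymbol_{k}^{(\depthSymbol)}(\inputSymbol_3)\atWidth)^2 (\activitySymbol_{l}^{(\depthSymbol)}(\inputSymbol_4)\atWidth)^2}{} \, .
\end{equation*}
Then, using $\activitySymbol = \nonlinearity(\activationSymbol)$ together with the linear envelope property of Definition \ref{definition:envelope}, I would bound each squared factor by $(\envelopeconstant + \envelopegradient |\activationSymbol|)^2$ and expand the resulting product. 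Since each factor expands into the three terms $\envelopeconstant^2 + 2 \envelopeconstant \envelopegradient |\activationSymbol| + \envelopegradient^2 |\activationSymbol|^2$, the product is a finite weighted sum of expectations of the form $\expectation{|\activationSymbol_{i}^{(\depthSymbol)}(\inputSymbol_1)\atWidth|^{p_1} |\activationSymbol_{j}^{(\depthSymbol)}(\inputSymbol_2)\atWidth|^{p_2} |\activationSymbol_{k}^{(\depthSymbol)}(\inputSymbol_3)\atWidth|^{p_3} |\activationSymbol_{l}^{(\depthSymbol)}(\inputSymbol_4)\atWidth|^{p_4}}{}$ with exponents $p_t \in \{0, 1, 2\}$ and with coefficients that are fixed functions of $\envelopeconstant$ and $\envelopegradient$, in particular independent of $\sequenceVariable$.

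Finally I would apply Lemma \ref{remark:joint_moments_bound}, identifying the four random variables appearing there with the pre-nonlinearities $\activationSymbol_{i}^{(\depthSymbol)}(\inputSymbol_1)\atWidth, \ldots, \activationSymbol_{l}^{(\depthSymbol)}(\inputSymbol_4)\atWidth$. That lemma bounds each such joint moment by a polynomial in the eighth moments $\expectation{|\activationSymbol^{(\depthSymbol)}(\inputSymbol_t)\atWidth|^8}{}$, which are finite and, crucially, bounded uniformly in $\sequenceVariable$ by Lemma \ref{lemma:mmnt_bound_prenonlin}. Because the envelope expansion contains only finitely many terms and the four inputs range over a finite set, taking the maximum of the resulting bounds yields a single constant independent of $\sequenceVariable$, establishing the desired uniform bound on the second moment and hence uniform integrability. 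I expect the only genuine difficulty to lie in the uniform control of the eighth moments of the pre-nonlinearities across the whole sequence of networks; however, this obstacle has already been isolated and resolved by the inductive argument in Lemma \ref{lemma:mmnt_bound_prenonlin}, so that the remaining work here is the routine envelope expansion and the bookkeeping of the resulting terms.
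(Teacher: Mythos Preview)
Your proposal is correct and essentially matches the paper's proof: both establish uniform integrability via the $L^2$ criterion, reduce to eighth-moment control of the pre-nonlinearities through the linear envelope and Lemma~\ref{remark:joint_moments_bound}, and invoke Lemma~\ref{lemma:mmnt_bound_prenonlin}. The only cosmetic difference is the order of operations: the paper applies Lemma~\ref{remark:joint_moments_bound} directly to the squared post-nonlinearities (taking $p_i=2$) and then uses the envelope on each resulting eighth moment, whereas you apply the envelope first and then Lemma~\ref{remark:joint_moments_bound} to the resulting mixed products of pre-nonlinearities; both routes are valid and equally short.
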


\begin{proof}
A~simple way to prove uniform integrability of an~arbitrary family of real-valued random variables $\{\genericRV_n\}_{n \in \naturalNumbers}$ is to show $\sup_n \expectation{|\genericRV_n|^{1 + \epsilon}}{} < \infty$ for some $\epsilon > 0$. We use $\epsilon = 1$, i.e.\ the~second moment of the~four-way product from Equation~\eqref{eq:postnonlin_4prod} will be bounded by a~constant independent of $\sequenceVariable$.

Write
\begin{equation*}
\expectation{
	\left|
		\activitySymbol_{i}^{(\depthSymbol)}(\inputSymbol_1)\atWidth
		\activitySymbol_{j}^{(\depthSymbol)}(\inputSymbol_2)\atWidth
		\activitySymbol_{k}^{(\depthSymbol)}(\inputSymbol_3)\atWidth
		\activitySymbol_{l}^{(\depthSymbol)}(\inputSymbol_4)\atWidth
	\right|^2
}{}
\, ,
\end{equation*}
and recall that by Lemma~\ref{remark:joint_moments_bound}, we only need to bound the~eighth moments of $\activitySymbol_{i}^{(\depthSymbol)}(\inputSymbol)\atWidth$ by a~constant independent of $\sequenceVariable$ for $i, j, k$ and $l$, and $\{\inputSymbol_t\}_{t=1}^4$. Using the~linear envelope property
\begin{equation*}
\expectation{
	\left|
		\indexedActivity{\depthSymbol}{i}{\inputSymbol}\atWidth
	\right|^8
}{}
\leq
2^{8 - 1}
\expectation{
	\envelopeconstant^8
	+
	\envelopegradient^8
	\left|
		\indexedActivation{\depthSymbol}{i}{\inputSymbol}\atWidth
	\right|^8
}{}
\, .
\end{equation*}
The~result in Lemma~\ref{lemma:mmnt_bound_prenonlin} gives us a~constant upper bounding the~left hand side which depends on $\inputSymbol$ but not $\sequenceVariable$. Because we are considering only a~fixed finite set of inputs, we can take the~maximum of these constants to conclude the~proof.
\end{proof}

\end{document}